\documentclass{article}

\usepackage{microtype}
\usepackage{graphicx}
\usepackage{subcaption}
\usepackage{booktabs}

\usepackage{hyperref}

\usepackage[accepted]{icml2026}

\usepackage{amsmath}
\usepackage{amssymb}
\usepackage{mathtools}
\usepackage{amsthm}

\usepackage[capitalize,noabbrev]{cleveref}
\usepackage{xcolor}

\usepackage{subcaption}
\theoremstyle{plain}
\newtheorem{theorem}{Theorem}[section]

\theoremstyle{definition}

\theoremstyle{remark}

\usepackage{multirow}
\usepackage{booktabs}
\usepackage{placeins}

\icmltitlerunning{Bayesian Experimental Design for Shapley Value Estimation}

\newcommand{\Var}{\ensuremath{\mathrm{Var}}}
\newcommand{\Cov}{\ensuremath{\mathrm{Cov}}}
\newcommand{\N}{\ensuremath{\mathcal{N}}}
\newcommand{\Sigmab}{\ensuremath{\mathbf{\Sigma}}}

\newcommand{\x}{\ensuremath{\mathbf{x}}}
\newcommand{\xarg}[1]{\x^{(#1)}}

\newcommand{\X}{\ensuremath{\mathbf{X}}}
\newcommand{\Xt}{\X^{(t)}}

\newcommand{\Xstar}{\ensuremath{\mathbf{X}^{*}}}

\newcommand{\z}{\ensuremath{\mathbf{z}}}

\newcommand{\zi}{\zarg{i}}

\newcommand{\fxarg}[1]{\ensuremath{f(#1)}}
\newcommand{\fXstar}{\fxarg{\Xstar}}

\newcommand{\yxarg}[1]{\ensuremath{y(#1)}}
\newcommand{\yx}{\yxarg{\x}}

\newcommand{\Tzero}{T_{0}}

\newcommand{\Darg}[1]{\ensuremath{\mathcal{D}}_{#1}}
\newcommand{\Dt}{\Darg{t}}
\newcommand{\Dn}{\Darg{n}}

\newcommand{\Rone}[1]{\mathbb{R}^{#1}}
\newcommand{\Rtwo}[2]{\mathbb{R}^{#1 \times #2}}

\newcommand{\phif}{\boldsymbol{\varphi}(f)}

\newcommand{\EIFPRt}{\text{\textbf{EIG}}_{\phif}^{(t)}}

\newcommand{\mub}{\ensuremath{\boldsymbol{\mu}}}
\newcommand{\ei}{\ensuremath{\mathbf{e}_{i}}}

\DeclareMathOperator*{\argmax}{arg\,max}
\newcommand{\numplayers}{p}
\newcommand{\playerset}{P}
\newcommand{\methodname}{\texttt{ShaplEIG}}

\renewcommand{\EIFPRt}{\text{\text{EIG}}_{\phif}^{(t)}}
\newcommand{\SVs}{\boldsymbol{\phi}}
\newcommand{\EIGSVt}{\text{\text{EIG}}_{\SVs}^{(t)}}
\newcommand{\nub}{\boldsymbol{\nu}}

\newcommand{\phiteta}{\boldsymbol{\varphi}(\boldsymbol{\theta})}
\newcommand{\EIGblip}{\text{\text{EIG}}_{\phiteta}}

\renewcommand{\z}{\mathbf{z}}
\renewcommand{\zi}{\z^{(i)}}
\newcommand{\nuzi}{\nu(\zi)}
\newcommand{\nuziapo}{\nu'(\zi)}

\newcommand{\zgi}{\z^{(g(i))}}
\newcommand{\nuzgi}{\nu(\zgi)}

\renewcommand{\ei}{\mathbf{e}_{i}}
\newcommand{\eit}{\ei^{\top}}

\newcommand{\Atrans}{\mathbf{A}}

\newcommand{\Qform}{\mathbf{Q}}
\newcommand{\Qii}{\Qform_{i,i}}

\newcommand{\Sigmanubdt}{\boldsymbol{\Sigma}_{(\nub \mid \Dt)}}
\newcommand{\Varnuzidt}{\Var(\nuzi \mid \Dt)}

\newcommand{\sigmanoise}{\sigma^{2}_{\epsilon}}
\newcommand{\Sigmanoise}{\sigmanoise \mathbf{I}}
\newcommand{\ASigma}{\Atrans \Sigmanubdt}
\newcommand{\ASgimaA}{\Atrans \Sigmanubdt \mathbf{A}^{\top}}
\newcommand{\AT}{\mathbf{A}^{\top}}
\newcommand{\A}{\mathbf{A}}

\renewcommand{\Cov}{\ensuremath{\mathrm{Cov}}}
\newcommand{\CovSigmanubdte}{\Cov(\nub; \nuzi \mid \Dt)}

\newcommand{\diag}{\operatorname{diag}}

\newcommand{\Zlarge}{\mathbf{Z}}

\newcommand{\numiddt}{\nub \mid \Dt}
\newcommand{\munumiddt}{\boldsymbol{\mu}_{\numiddt}}
\newcommand{\Sigmanumiddt}{\boldsymbol{\Sigma}_{\numiddt}}

\newcommand{\munumiddtxi}{\boldsymbol{\mu}_{\numiddt, \xi}}
\newcommand{\Sigmanumiddtxi}{\boldsymbol{\Sigma}_{\numiddt. \xi}}

\newcommand{\svmiddt}{\SVs \mid \Dt}
\newcommand{\musvmiddt}{\boldsymbol{\mu}_{\svmiddt}}
\newcommand{\Sigmasvmiddt}{\boldsymbol{\Sigma}_{\svmiddt}}

\newcommand{\Dtwop}{\Darg{2^{\numplayers}+1}}
\newcommand{\svmidDtwop}{\SVs \mid \Dtwop}
\newcommand{\musvmidDtwop}{\boldsymbol{\mu}_{\svmidDtwop}}
\newcommand{\muzmidDtwop}{\boldsymbol{\mu}_{\nu(\z) \mid \Dtwop}}

\newcommand{\Dtcuppairzinuziapo}{\nuziapo, \Dt}
\newcommand{\nubmidDtcuppairzinuziapo}{\nub \mid \Dtcuppairzinuziapo}

\newcommand{\SigmanubmidDtcuppairzinuziapo}{\boldsymbol{\Sigma}_{(\nubmidDtcuppairzinuziapo)}}

\newcommand{\twop}{2^\numplayers}
\newcommand{\fourp}{4^\numplayers}

\newcommand{\kxi}{k_{\xi}}
\newcommand{\Kxi}{K_{\xi}}
\newcommand{\KxiZZ}{\Kxi(\Zlarge, \Zlarge)}

\newcommand{\Xn}{\mathbf{X}^{(n)}}
\newcommand{\Yn}{\mathbf{Y}^{(n)}}

\newcommand{\methodcolor}[2]{%
  \textcolor[HTML]{#1}{\rule{1.9em}{0.9em}}~#2%
}

\newcommand{\IndS}{\mathbf{1}_S}
\newcommand{\IndT}{\mathbf{1}_T}

\newcommand{\prodjp}{\prod_{j=1}^{\numplayers}}
\newcommand{\prodjinS}{\prod_{j \in S}}
\newcommand{\pset}{\{1,\dots,\numplayers\}}

\newcommand{\KxiZXt}{\Kxi(\Zlarge, \Xt)}

\newcommand{\KxiXtXt}{\Kxi(\Xt, \Xt)}
\newcommand{\KxiXtXtnoise}{\KxiXtXt + \Sigmanoise}
\newcommand{\KxiXtXtnoiseinv}{\big[ \KxiXtXtnoise \big]^{-1}}

\newcommand{\KxiZz}{\Kxi(\Zlarge, \zi)}
\newcommand{\KxiXz}{\Kxi(\Xt, \zi)}

\begin{document}

\twocolumn[
  \icmltitle{\methodname{}: Bayesian Experimental Design for Shapley Value Estimation}

  \icmlsetsymbol{equal}{*}

  \begin{icmlauthorlist}
    \icmlauthor{David Rundel}{lmu_stats,mcml}
    \icmlauthor{Fabian Fumagalli}{lmu_stats,mcml}
    \icmlauthor{Maximilian Muschalik}{lmu_inf,mcml}
    \icmlauthor{Bernd Bischl}{lmu_stats,mcml}
    \icmlauthor{Matthias Feurer}{tud_inf,lamarr}
  \end{icmlauthorlist}

  \icmlaffiliation{lmu_stats}{Department of Statistics, LMU Munich, Munich, Germany}
  \icmlaffiliation{lmu_inf}{Institute for Informatics, LMU Munich, Munich, Germany}
  \icmlaffiliation{tud_inf}{Department of Computer Science, TU Dortmund University, Dortmund, Germany}
  \icmlaffiliation{mcml}{Munich Center for Machine Learning, Munich, Germany}
  \icmlaffiliation{lamarr}{Lamarr Institute for Machine Learning and Artificial Intelligence, Dortmund, Germany}

  \icmlcorrespondingauthor{David Rundel}{david.rundel@stat.uni-muenchen.de}
  \icmlcorrespondingauthor{Matthias Feurer}{matthias.feurer@tu-dortmund.de}

  \icmlkeywords{Machine Learning, ICML}

  \vskip 0.3in
]

\printAffiliationsAndNotice{}

\begin{abstract}
  Shapley values are a principled attribution measure widely used in interpretable machine learning, but their exact computation scales exponentially with the number of players, motivating a wide range of approximation methods based on value function evaluations of sampled coalitions. This raises the question of whether approximation accuracy can be improved by \emph{adaptively} selecting coalitions for evaluation based on previous evaluations. This is particularly relevant in settings where the value function is \emph{costly} and the number of evaluations is severely limited, such as retraining-based feature importance, data valuation, and hyperparameter importance. \\
  For this purpose, we propose \methodname{}, a Bayesian experimental design approach that approximates the expensive value function using a Gaussian process surrogate and adaptively selects coalitions based on their expected information gain about the Shapley values. By the linearity of the Shapley values in the value function, we show that the expected information gain is available in \emph{closed form}. Furthermore, we propose an \emph{efficient} computation scheme that reduces the complexity from exponential to polynomial in the number of players via elementary symmetric polynomials. \\
  In extensive experiments across diverse costly applications, our method consistently improves sample efficiency in the low-budget regime over state-of-the-art baselines.
\end{abstract}
\section{Introduction}
With its origin in cooperative game theory, the Shapley value (SV; \citealp{shapley_1953}) has emerged as a central tool in explainable AI for axiomatic \emph{attribution} values \citep{rozemberczki2022shapley}. The exact computation of SVs can be computationally demanding for two main reasons: First, the number of coalitions grows exponentially with the number of players $\numplayers$; and second, the cost of evaluating the value function for each of those coalitions can be high, ranging from single model predictions \citep{lundberg2017unified}, through the computation of conditional expectations \citep{frye2021shapley}, to full model retraining \citep{ghorbani2019data,tay2022incentivizing}, or even complete hyperparameter optimization runs \citep{wever2026hypershap}. \emph{Costly} value functions are also common in applications of SVs outside AI, such as in the field of expensive computer experiments and global sensitivity analysis, where, alongside Sobol indices, SVs provide a variance-based measure of the contribution of inputs to the output of a complex system within a functional ANOVA (fANOVA) framework \citep{owen2014sobol,benoumechiara2019shapley}. \\
Even for cheap value functions, the naive exact computation of SVs is typically infeasible, motivating a broad class of stochastic approximation methods \citep{chen2023algorithms}. Early approaches primarily relied on Monte Carlo estimation \citep{Castro.2009,kwon2022beta}, while more recent methods, such as Kernel SHAP \citep{lundberg2017unified}, Leverage SHAP \citep{musco2025provably} and Regression MSR \citep{witter2025regressionadjusted}, fit surrogate models to the value function and derive SV estimates from these surrogates. 
However, both types of approaches rely on collections of value function evaluations for coalitions which are typically sampled from fixed, predefined distributions. Especially for expensive value functions, for which the evaluation budget is extremely limited, this naturally raises the question of whether SV approximation quality can be improved by \emph{adaptively selecting} coalitions for evaluation based on previous evaluations \citep{slack2021reliable,burgess2021approximating}. \\
\textit{Bayesian experimental design} (BED; \citealp{lindley1956measure,lindley1972bayesian,chaloner1995bayesian,sebastiani2000maximum,ryan2016review}) provides a principled framework for such sequential design of experiments \citep{jones-jgo98a,santner-book18a}. It relies on \emph{surrogate models} and the \emph{expected information gain} (EIG), an information-theoretic criterion, to select candidates for evaluation in order to efficiently infer function properties. Several instances of BED can be found throughout statistics and ML, with the family of entropy search–based acquisition functions \citep{hennig2012entropy,hernandez2014predictive} in Bayesian optimization (BO; \citealp{garnett_bayesoptbook_2023}) being a prominent example, where the target property is the function optimum. Despite its theoretical appeal, adoption of BED in ML has been limited, primarily due to the lack of closed-form expressions for the EIG \citep{foster2021variational,rainforth2024modern}. In particular, it has received comparatively little attention in the context of SV estimation and interpretable ML in general. \\
\textit{Our contributions} are as follows:
\begin{itemize}
\item We introduce \methodname{}, a novel, BED-based method that approximates expensive value functions in SV estimation via a Gaussian process (GP) surrogate with a Hamming kernel, adaptively selects coalitions for evaluation based on the EIG, and yields a consistent SV estimator. 
\item We show that the EIG of a candidate coalition about the SVs admits a closed-form expression. This is achieved by exploiting the linearity of SVs in the value function and by framing sequential coalition selection as a Bayesian linear inverse problem under a GP surrogate.
\item We propose an efficient computation scheme for the EIG that exploits the multiplicative structure of the Hamming kernel via a correspondence to elementary symmetric polynomials (ESPs), reducing the computational cost from exponential to polynomial in the number of players. 
\item Through extensive experiments on several \emph{costly}, \emph{small- to moderately large} games ($8 \leq \numplayers \leq 101$), we demonstrate improved estimation accuracy and sample efficiency in the low-budget regime compared with state-of-the-art baselines. These games include retraining-based tasks - \emph{feature importance} for TabPFN, \emph{data valuation} and \emph{hyperparameter importance} of learning algorithms such as XGBoost - as well as \emph{local explanations} for vision models.
\end{itemize}
\section{Methodological Background}
\subsection{Shapley Values} \label{subsec:svs}
For a set $\playerset := \{1,\dots,\numplayers\}$ of $\numplayers$ \emph{players}, such as features or data points, the SV aggregates the \emph{value function} $\nu: 2^{\playerset} \to \Rone{}$ of all $\twop$ coalitions into a single attribution score for each player $i \in \playerset$ as
\begin{align} \label{eq:shapley_set_def}
    \phi_i(\nu):= 
    \sum_{S \subseteq P \setminus \{ i\}} 
    \frac{1}{p \cdot \binom{p-1}{\vert S \vert}}
    \Big( \nu(S \cup \{i\}) - \nu(S) \Big).
\end{align}
This is the player's marginal contribution to the game, averaged over all possible coalitions, with weights proportional to coalition sizes. Direct evaluation of Equation~\ref{eq:shapley_set_def} is only computationally feasible for small to moderate values of $\numplayers$, as the number of coalitions grows exponentially in $\numplayers$, and when evaluations of the value function $\nu$ are inexpensive.
\paragraph{Approximation Methods.}
A straightforward approach to approximating Equation \ref{eq:shapley_set_def}, or reformulations thereof, is computing a Monte Carlo estimate based on sampled coalitions $\mathcal{S} \subset 2^\playerset$.
These estimators are generally unbiased, and depending on the reformulation and coalition sampling procedure used, differ in estimator variance and capability to simultaneously use value function evaluations for the estimation of all SVs $\SVs:= (\phi_1, \dots, \phi_p)^{\top} \in \Rone{p}$ across players. 
Notable variants include permutation sampling \citep{Castro.2009}, maximum sample reuse (MSR; \citealp{wang2023data}), and SVARM \citep{kolpaczki2024approximating}. A number of variance-reduction strategies have been proposed within this Monte Carlo framework, including \emph{stratified}, \emph{antithetic} or \emph{paired sampling} of complementary coalitions \citep{mitchell2022sampling,covert2021improving}, and biased sampling toward coalitions near the extremes of subset cardinality, referred to as the \emph{border trick} \citep{fumagalli2023shap}. While these techniques can substantially improve efficiency, they still rely on fixed, non-adaptive sampling distributions. \\
In contrast, surrogate-based methods reduce SV estimation to a supervised regression problem by fitting a surrogate model $\hat{\nu}: 2^{\playerset} \to \Rone{}$ to the value function using coalitions $\mathcal{S}$ sampled according to any of the aforementioned strategies. The SVs are then approximated as $\SVs(\nu) \approx \SVs(\hat{\nu})$. Kernel SHAP, Leverage SHAP, and PolySHAP \citep{fumagalli2026polyshap} are special cases of this framework, using a linear surrogate model and a tailored regression objective. More recently, Regression MSR employs tree-based surrogates, such as XGBoost \citep{Chen.2016}, in combination with TreeSHAP \citep{lundberg2020local}, and demonstrates state-of-the-art performance. A key requirement for the surrogate model is that SVs can be extracted \emph{efficiently}, even for large player sets, as is the case for linear models where they correspond directly to the model’s coefficients. Furthermore, the surrogate should yield a \emph{consistent} estimator, i.e., the estimates converge to the true SVs once all $\twop$ coalitions are evaluated. While Kernel SHAP and Leverage SHAP employ regression objectives that guarantee consistency, tree-based surrogates do not yield consistent estimates by default and therefore require an additional ``adjustment'' step via MSR on the residuals $\nu(S) - \hat{\nu}(S)$.
\subsection{Bayesian Experimental Design}\label{sec:bed}
BED considers an expensive-to-evaluate \textit{black-box function} $f: \mathcal{X} \rightarrow \Rone{}$, defined over a bounded input space $\mathcal{X} \subseteq \Rone{d}$, and leverages probabilistic surrogate models together with information-theoretic criteria to efficiently infer a \textit{function property} of interest $\phif \in \Rone{m}$ under a limited budget of function evaluations. 
We assume that a function evaluation corresponds to a noisy experiment $\yx := f(\x) + \epsilon$ at a design $\x \in \mathcal{X}$, with homoscedastic, additive Gaussian noise $\epsilon \sim \mathcal{N}(0, \sigma^{2}_{\epsilon})$ that is i.i.d.\ across evaluations. \\
In this work, we focus on the sequential setting with the greedy Bayesian adaptive design (BAD; \citealp{cheng2005bayesian}) algorithm, which at each iteration $t > \Tzero$ selects the next design $\x$ by maximizing the EIG:
\begin{align}
    \EIFPRt(\x)
    &:= I \big( \phif; \yx \mid \Dt \big)
    \notag \\
    &= H \big( \phif \mid \Dt \big) \label{eq:eig_vanilla_ent_diff} \\
    &\quad - \mathbb{E}_{p( \yx \mid \Dt)} \big[ H \big( \phif \mid \yx, \Dt \big) \big]. \notag
\end{align}
Here, $\Dt := \{\big(\xarg{i}, \yxarg{\xarg{i}} \big)\}_{i=1}^{t-1}$ 
denotes the dataset of all previous evaluations, including an initial design of size $\Tzero$. The EIG expresses the mutual information $I$ between the function property $\phif$ and the function evaluation $\yx$, conditioned on $\Dt$. Intuitively, this criterion selects the design whose associated function evaluation is expected to maximize the reduction in uncertainty about the property. 
We measure uncertainty using the differential entropy $H$ based on the posterior distribution of the surrogate model given previously observed data $\Dt$ \citep{rainforth2024modern,huan2024optimal}.
\\
Bayesian surrogate models are used to regress the latent function $f$ on $\x$. A common choice is to place a GP prior on the latent function $f$ \citep{krause2008near,houlsby2011bayesian,hennig2012entropy,neiswanger2021bayesian}. GPs are particularly suitable in this context, as they allow \emph{exact} Bayesian inference in closed form and provide \emph{well-calibrated} uncertainty estimates even in data-scarce regimes. We briefly introduce GPs in Appendix \ref{app:background_gps}.
\paragraph{EIG Estimation.}
The second term of the EIG in Equation \ref{eq:eig_vanilla_ent_diff} is itself an expectation of an entropy, yielding a nested expectation structure over potentially intractable posteriors. Although the posterior predictive distribution (PPD) $p(\yx \mid \Dt)$ is explicit for many surrogate types such as GPs, the conditional property posterior $p(\phif \mid \yx, \Dt)$ is generally not. It can be approximated using standard methods if $p(\yx \mid \phif, \Dt)$ is explicit; otherwise, likelihood-free approaches \citep{csillery2010approximate} are often employed. 
In both cases, however, the normalized posterior density of the property is required for entropy estimation \citep{foster2021variational,rainforth2024modern}. 
Finally, because this entropy term is nested within the expectation over the PPD, which may itself be intractable, nested estimation strategies are often needed. \\
As a consequence, estimating the EIG poses a fundamental challenge in BED, both in terms of the accuracy of the resulting estimates and the associated computational cost. While a large body of research has focused on developing specialized techniques to enhance its practical applicability \citep{houlsby2011bayesian,heinrich2020information,goda2020multilevel}, there also exist special cases where the associated computations are tractable \citep{attia2018goal}.
\paragraph{Bayesian Linear Inverse Problems.} \label{par:background_bed_blip}
A Bayesian linear inverse problem consists of an \textit{unknown parameter} $\boldsymbol{\theta} \in \Rone{s}$ that is to be inferred from \textit{experimental data} $\yx \in \Rone{}$.\footnote{Here, we restrict our attention to the case of a single experimental observation.} This data is assumed to be generated according to a linear \textit{observation model}, $\yx= F(\x)^{\top} \boldsymbol{\theta} + \epsilon$, where $F(\x) \in \Rone{s}$ denotes the \textit{parameter-to-observable mapping} depending on the experimental design $\x$, and $\epsilon \sim \mathcal{N}(0, \sigma_{\epsilon}^{2})$ is additive Gaussian noise. Assuming a Gaussian prior on the parameter, i.e., $\boldsymbol{\theta} \sim \mathcal{N}(\boldsymbol{\mu}_{\theta}, \boldsymbol{\Sigma}_{\theta})$, the posterior distribution $\boldsymbol{\theta} \mid \yx$ is again Gaussian, with covariance given by
\begin{align*}
    \boldsymbol{\Sigma}_{(\boldsymbol{\theta} \mid \yx)}= 
    \big(
    \boldsymbol{\Sigma}_{\theta}^{-1} +
    F(\x) \sigma_{\epsilon}^{-2} F(\x)^{\top}
    \big)^{-1}.
\end{align*}
In many applications, the primary quantity of interest is not the parameter itself, but an \textit{end-goal} 
$\phiteta \in \Rone{m}$ that depends on $\boldsymbol{\theta}$. For a linear \textit{goal operator} $\mathbf{A} \in \Rtwo{m}{s}$ with full row rank, the end-goal is a linear transformation of the parameter, i.e., $\phiteta= \mathbf{A} \boldsymbol{\theta}$. Applying BED in this setting to select experimental data with the goal of reducing uncertainty about $\phiteta$ corresponds to a specific instance of goal-oriented optimal design of experiments (GOODE; \citealp{lieberman2013goal}), which is closely related to Bayesian $D_A$-optimality. Notably, the EIG of a design $\x$ with respect to the linear end-goal can be expressed in closed form as
\begin{align}
    \EIGblip(\x)
    &= I \big( \phiteta; \, \yx \big) \notag \\
    &= - \tfrac{1}{2} \log \, \det \big( \mathbf{A} 
    \boldsymbol{\Sigma}_{(\boldsymbol{\theta} \mid \yx)}
    \mathbf{A}^{\top} \big) + C,
    \label{eq:eig_goode_cf}
\end{align}
where $C$ is constant with respect to $\x$ (but may depend on $\mathbf{A}$ and $\boldsymbol{\Sigma}_{\theta}$; see \citet{attia2018goal} for a proof). Moreover, the EIG depends only on the experimental design $\x$ and not on the associated outcome $\yx$ \citep{bui2013computational,alexanderian2016bayesian,spantini2017goal,zhong2026goal}.
\section{BED for SV Estimation}
\begin{algorithm}[t]
\caption{\methodname{} algorithm}
\label{alg:shapleig}
\begin{algorithmic}[1]
\REQUIRE {GP prior $\nu \sim \mathcal{GP}(m, \kxi)$, initial design and candidate set $\mathcal{C}_{0}, \mathcal{C} \subseteq \{1, \dots, 2^{\numplayers}\}$ 
with $\lvert\mathcal{C}_{0}\rvert = \Tzero$ and $\mathcal{C}_{0} \cap \mathcal{C} = \emptyset$}
\STATE Initialize dataset: $\Darg{T_{0}+1}:= \{\big(\zi, \nuzi \big)\}_{i \in \mathcal{C}_{0}}$
\FOR{$t = \Tzero + 1$ to $T$}
    \STATE Optimize EIG: $g(t):= \argmax_{i \in \mathcal{C}} \, \EIGSVt(\zi)$
    \STATE Update dataset: $\Darg{t+1} := \Dt \cup \{\big(\z^{(g(t))}, \nu(\z^{(g(t))})\big)\}$
    \STATE Update candidate set: $\mathcal{C}:= \mathcal{C} \setminus \{g(t)\}$
    \STATE Refit hyperparameters: $\xi:= \argmax_{\xi} \, p(\xi \mid \Darg{t+1})$
\ENDFOR{}
\STATE \textbf{return} Consistent SV estimates: $\hat{\SVs}:= \boldsymbol{\mu}_{\SVs \mid \Darg{T+1}}$
\end{algorithmic}
\end{algorithm}
Prior work on approximating SVs typically relies on evaluated coalitions drawn from a fixed, predefined distribution. However, when value function evaluations are \emph{costly} and only a limited number are possible, sampling without leveraging information from previous evaluations may result in wasted resources. Instead, we propose an \emph{adaptive} approach, which we call \methodname{}, that iteratively expands the collection \emph{optimally} based on previously observed evaluations. \\
Specifically, we propose a greedy BAD approach (see Algorithm \ref{alg:shapleig}) that iteratively trains a GP surrogate for the value function (Section \ref{sec_gp_surrogate}) which yields a consistent estimator for the SVs (Section \ref{subsec:meth_svestim}). It operates on an initial design of coalition evaluations and at each iteration (1) optimizes the EIG about the SVs over candidate coalitions, leveraging a closed-form expression for the EIG (Section \ref{subsec:eig_cf}) and an efficient computation scheme (Section \ref{sec_eig_computation}), (2) evaluates the newly selected coalition, and (3) updates the GP surrogate using the newly acquired data, thereby retraining the GP hyperparameters to ensure adaptivity of the subsequent experimental design (Section \ref{subsec:eig_cf}). \\
By slight abuse of notation, we define $\nu$ equivalently on binary indicator vectors $\z \in \{0,1\}^\numplayers$ via their bijective correspondence with coalitions $S \subseteq P$. Furthermore, we collect all $2^\numplayers$ such vectors as rows of a matrix $\Zlarge \in \{0,1\}^{2^\numplayers \times \numplayers}$ and denote its $i$-th row as $\zi := \Zlarge_{i}$. The previously observed coalition–evaluation pairs at iteration $t$ are gathered in $\Dt:= \{\big(\zgi, \nuzgi \big)\}_{i=1}^{t-1}$, with $g: \{1, \ldots, t-1\} \to \{1, \ldots, \twop\}$ mapping each iteration index to the row index in $\Zlarge$ of the coalition selected at that iteration.
\subsection{GP Surrogate with Hamming Kernel}\label{sec_gp_surrogate}
We model the value function
using a GP surrogate (see Appendix \ref{app:background_gps} for an introduction). While the use of a surrogate is related to existing, popular methods from SV estimation, we depart from these approaches in our choice of surrogate, as our use is primarily motivated by adaptive coalition selection based on the EIG. In this context, we require a nonlinear, fully probabilistic model capable of handling low-data regimes. GPs are better aligned with this goal and yield further advantages that will become apparent in the following sections. \\
As a covariance function, we employ the Hamming distance kernel (\citealp{platt2001learning,qian2008gaussian,Hutter_2009}; see Appendix \ref{app:background_gps}), which quantifies similarities between coalitions in the binary input space via weighted Hamming distances. The associated weights $\xi \in \Rone{\numplayers}$ are treated as learnable hyperparameters. This kernel is a common choice for categorical input spaces and is especially advantageous in our context, as it enables efficient EIG computation schemes (Section \ref{sec_eig_computation}). \\
At each iteration $t$ of the sequential procedure and for fixed kernel hyperparameters $\xi$, the GP surrogate coupled with the available data $\Dt$ induces a closed-form multivariate normal distribution (MVN) over the value function evaluated across all coalitions $\nub:= \nu(\Zlarge) \in \Rone{\twop}$,\footnote{For notational convenience, we omit explicit conditioning on $\xi$ in the following, and denote $\munumiddt:= \munumiddtxi$ and $\Sigmanumiddtxi:= \Sigmanumiddt$.} i.e.,
\begin{align*}
    \numiddt, \xi \sim \mathcal{N}_{\twop}(\munumiddtxi, \Sigmanumiddtxi).
\end{align*}
\subsection{Consistent SV Estimation} \label{subsec:meth_svestim}
The SVs across players, $\SVs \in \Rone{p}$, depend on the value function $\nu$ only through a linear transformation of 
$\nub$, as implied by the linearity axiom of SVs \citep{shapley_1953}, i.e.,
\begin{align*}
    \SVs(\nu)
    = \Atrans \nub
    := \left(\frac{\mathbf{1}_S}{p \cdot \binom{p-1}{\vert S \vert - 1}} -\frac{\mathbf{1}-\mathbf{1}_S}{p \cdot \binom{p-1}{\vert S \vert}}\right)_{S \subseteq P} \nub
    ,
\end{align*}
where $\Atrans \in \Rtwo{\numplayers}{2^\numplayers}$ and $\mathbf{1}_S$ denotes the indicator vector of coalition $S$ over the player set $\playerset$.\footnote{We adopt the convention that $\binom{n}{k} = 0$ for $k > n$ and $k < 0$ together with $0/0 := 0$.} As a consequence, at iteration $t$ the posterior distribution over the SVs under the GP surrogate is available in closed form \citep{NEURIPS2023_9f0b1220} and given by:
\begin{align*}
    \SVs(\nu) \mid \Dt 
    &\sim \mathcal{N}_{\numplayers}(\Atrans \munumiddt, \Atrans \Sigmanumiddt \Atrans^{\top}) \\
    &= \mathcal{N}_{\numplayers}(\musvmiddt, \Sigmasvmiddt).
\end{align*}
Based on this, we extract SV estimates from the surrogate via $\hat{\SVs}:= \SVs(\hat{\nu})= \musvmiddt$. Importantly, for noiseless GPs (see Appendix \ref{app:background_gps}), this estimator is consistent by construction: when the surrogate is trained on all $\twop$ coalitions, i.e., under $\Dtwop$, we recover the exact SVs, $\musvmidDtwop= \SVs(\nu)$. This follows directly from the interpolation property \citep{stein1999interpolation,williams2006gaussian}, namely $\muzmidDtwop= \nu(\z)$ for all $\z \in \{0,1\}^{\numplayers}$. The consistency of our estimator stands in contrast to recently proposed tree-based surrogate approaches such as Regression MSR, which require additional adjustment steps. For a discussion of noisy GP alternatives, estimator bias, and debiasing schemes, see Appendix~\ref{app:met_svestim_dis}.
\subsection{Closed-form EIG}\label{subsec:eig_cf}
As the GP surrogate is iteratively trained on value function observations, each potential subsequent evaluation $\nuziapo \in \Rone{}$ for a candidate coalition $\zi$ at iteration $t$ partially reveals information about $\nub$ and induces an updated posterior distribution, i.e., $\nubmidDtcuppairzinuziapo$. Note that each such evaluation constitutes experimental data generated by a linear observation model, i.e., $\nuziapo= \ei^{\top} \nub + \epsilon$. Here, $\ei \in \Rone{\twop}$ denotes the $i$-th standard basis vector, which simply selects the single entry of $\nub$ corresponding to the value of $\zi$, while $\epsilon$ is assumed to be zero-mean Gaussian noise with variance fixed to a small constant for numerical stability. \\
Consequently, adaptively selecting data for surrogate training can be viewed as a Bayesian linear inverse problem with $\nub$ taking the role of the unknown parameter ($\boldsymbol{\theta}$ in \cref{sec:bed}) and $\ei$ acting as the parameter-to-observable mapping ($F(\x)$ in \cref{sec:bed}). Furthermore, selecting coalitions to efficiently infer the SVs constitutes an instance of GOODE~\citep{lieberman2013goal}: The SVs $\SVs$ are given by a linear transformation of the unknown parameter $\nub$ and can be interpreted as a linear end-goal of the corresponding Bayesian linear inverse problem ($\phif$ in \cref{sec:bed}), with $\mathbf{A}$ acting as the linear goal operator. As a consequence, the EIG of $\zi$ about $\SVs$ is available in closed form as:
\begin{align} 
    \EIGSVt(\zi)
    \propto
    &- \log \, \det \big( \mathbf{A} 
    \boldsymbol{\Sigma}_{\nub \mid \nuziapo, \Dt}
    \mathbf{A}^{\top} \big) + C \label{eq:eig_goode_cf_asva} \\
    =&- \log \, \det \big( 
    \boldsymbol{\Sigma}_{\SVs \mid \nuziapo, \Dt}
    \big) + C \label{eq:eig_goode_cf_sv}.
\end{align}
This shows that SVs belong to a structural class of function properties for which the EIG admits a closed-form solution under GP surrogates.
\paragraph{Adaptivity.}
We note that the EIG depends only on the GP's posterior covariance, not on its mean. At iteration $t$, this covariance is determined solely by the previously selected coalitions 
and not directly by the associated evaluations 
(see Equation \ref{app:gp_ppcov} in Appendix \ref{app:background_gps}). However, the GP hyperparameters $\xi$ are learned at each iteration from all data $\Dt$ acquired so far. As a result, previous evaluations have an indirect effect on the EIG, because they affect the kernel hyperparameters, which in turn influence the posterior covariance and thus the acquisition criterion. Without this retraining, the iterative procedure would collapse into a so-called \textit{non-adaptive} experimental design, in which the designs do not depend on previous outcomes and can be fully determined before data collection \citep{huan2024optimal}. \\
Similarly, when using a linear surrogate model with fixed basis functions and noise variance, as in Kernel SHAP, the uncertainty of the SV estimates depends only on the previously selected coalitions, and not on the associated evaluations \citep{slack2021reliable,huan2024optimal}. Consequently, classical criteria for experimental design - including $D_A$-optimality, which is closely related to the EIG in our setting - yield designs that are not adaptive with respect to previous evaluations. This further motivates our use of a GP surrogate.
\paragraph{Practical Implications.}
Despite the mathematical formulation of the EIG in closed form, it may still be unclear to readers how this covariance-based criterion affects coalition selection in practice, and when it can be expected to provide benefits over alternative coalition selection strategies. In Appendix \ref{app:methodology_eig_intuition}, we provide a detailed discussion of this matter together with intuitive examples of games and the behavior of different coalition selection strategies to further illustrate these concepts. \\
In summary, for each candidate coalition, the EIG considers how an evaluation reduces uncertainty about the value function across all coalitions, and how this propagates to the SVs according to the covariance structure induced by the GP. This stands in contrast to current state-of-the-art approaches for coalition sampling (e.g., leverage score sampling; \citealp{musco2025provably}), which sample coalitions from fixed distributions and do not distinguish between different coalitions of the same size. Consequently, for asymmetric games, e.g., with interactions occurring primarily among a subset of relevant players, we expect the EIG to provide substantial benefits over traditional approaches.
\subsection{Efficient Computation} \label{sec_eig_computation}
In the following, we present an efficient computation scheme for the quantities associated with our proposed \methodname{} framework, primarily the EIG. We also analyze the computational complexity of this approach and contrast it with a naive implementation. \\
In Appendix \ref{app:meth_cfeig_mdtmil}, we derive that the EIG of $\zi$ about $\SVs$ can be equivalently expressed as:
\begin{align} 
    \EIGSVt(\zi)
    \propto
    C' +
    \log 
    \Big[
    \eit
    \Big( \Sigmanubdt + \Sigmanoise \Big) 
    \ei
    \Big] \label{eq:eig_ref_nonvec} \\
    - \log 
    \Big[ 
    \eit \Big(
    \Sigmanubdt
    + \Sigmanoise - \Qform
    \Big) 
    \ei
    \Big]. \notag
\end{align}
Here, $C'$ is constant, $\mathbf{I} \in \Rtwo{2^\numplayers}{2^\numplayers}$ is the identity matrix, and $\Qii= \eit \Qform \ei$, the $i$-th diagonal entry of $\Qform \in \Rtwo{2^\numplayers}{2^\numplayers}$, is defined as
\begin{align*}
    \Qii
    &= \big( \ASigma \ei \big)^{\top}
    \big(\ASgimaA \big)^{-1}
    \big( \ASigma \ei \big).
\end{align*}
In particular, $\eit \Sigmanubdt \ei= \Varnuzidt \in \Rone{}$ reduces to the marginal posterior variance of $\nuzi$, and $\Sigmanubdt \ei= \CovSigmanubdte \in \Rone{2^\numplayers}$ corresponds to the posterior covariance between $\nub$ and $\nuzi$. However, $\Qii$ depends on the inverse of $\ASgimaA \in \Rtwo{\numplayers}{\numplayers}$, and therefore on the full posterior covariance matrix across all coalitions, $\Sigmanubdt \in \Rtwo{\twop}{\twop}$.
\paragraph{Naive EIG Computation.}
In a naive approach to computing the EIG for a specific candidate $\zi$, one could first compute the full covariance matrix $\Sigmanubdt$ and then obtain all required terms via projections with $\mathbf{A}$ or slices using $\ei$. In this case, the overall computational cost is dominated by $\mathcal{O}(\fourp \cdot t)$ (see the posterior covariance matrix computation in Appendix~\ref{app:background_gps}). This complexity is exponential in $p$ and therefore prohibitively expensive in many settings.
\paragraph{Efficient EIG Computation.}
However, we now show that the EIG can be computed much more efficiently, reducing the exponential scaling in $\numplayers$ to polynomial.
\begin{theorem} \label{met_theorem_eig_singl}
The EIG about the SVs $\SVs$ for a candidate coalition $\zi  \in \{0,1\}^\numplayers$ is computable in $\mathcal{O}(\numplayers^4 + t^3)$.
\end{theorem}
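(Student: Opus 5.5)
The plan is to never form $\Sigmanubdt$ explicitly. Using the representation in Equation~\ref{eq:eig_ref_nonvec}, the EIG depends on only a few low-dimensional quantities:
\begin{itemize}
\item the scalar $\Varnuzidt$;
\item the vector $\ASigma\ei\in\Rone{\numplayers}$;
\item the matrix $\ASgimaA\in\Rtwo{\numplayers}{\numplayers}$.
\end{itemize}
By the standard GP posterior covariance formula (Appendix~\ref{app:background_gps}), $\Sigmanubdt=\KxiZZ-\KxiZXt\KxiXtXtnoiseinv\Kxi(\Xt,\Zlarge)$. Each of the required quantities therefore reduces to four kinds of object:
\begin{itemize}
\item the $t\times t$ matrix $\KxiXtXt$;
\item the kernel vector $\KxiXz$;
\item the projected objects $\A\KxiZXt\in\Rtwo{\numplayers}{t}$, $\A\KxiZz\in\Rone{\numplayers}$ and $\A\KxiZZ\AT\in\Rtwo{\numplayers}{\numplayers}$;
\item one factorisation of $\KxiXtXtnoise$, costing $\mathcal{O}(t^3)$.
\end{itemize}
Given these, the quantities for the candidate are obtained as follows:
\begin{align*}
\ASigma\ei&=\A\KxiZz-\A\KxiZXt\KxiXtXtnoiseinv\KxiXz,\\
\ASgimaA&=\A\KxiZZ\AT-\A\KxiZXt\KxiXtXtnoiseinv\Kxi(\Xt,\Zlarge)\AT.
\end{align*}
We then invert (or Cholesky-factor) the $\numplayers\times\numplayers$ matrix $\ASgimaA$ in $\mathcal{O}(\numplayers^3)$ and form $\Qii$. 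The dense products cost $\mathcal{O}(t^2\numplayers+t\numplayers^2)$. Both terms are dominated by $\numplayers^4+t^3$: split into the cases $t\ge\numplayers$ and $t<\numplayers$.

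The substantive step is computing the $\A$-projections of kernel objects over all $\twop$ coalitions in polynomial time. Here we use that the Hamming kernel factorises over players, $\kxi(\z,\z')=\prodjp k_j(z_j,z'_j)$. We also use that row $l$ of $\A$ has entries $\A_{l,S}=-w_0(|S|)+\big(w_0(|S|)+w_1(|S|)\big)\mathbf{1}[l\in S]$, which depend only on $|S|$ and on whether $l\in S$.

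\emph{Vectors.} For a fixed $\x$ (a data point or the candidate $\zi$), define the generating polynomial
\[
\Pi_{\x}(y)=\prodjp\big(k_j(0,x_j)+k_j(1,x_j)\,y\big).
\]
Its coefficient of $y^k$ is $\sum_{|S|=k}\kxi(\IndS,\x)$, i.e.\ an elementary symmetric polynomial. Membership $l\in S$ is enforced by dividing out the factor for player $l$ and multiplying back only its $y$-term. That synthetic division costs $\mathcal{O}(\numplayers)$. Contracting the resulting coefficients against $w_0,w_1$ gives $(\A\Kxi(\Zlarge,\x))_l$. This costs $\mathcal{O}(\numplayers^2)$ for the product plus $\mathcal{O}(\numplayers^2)$ over all $l$. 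Hence $\A\KxiZz$ costs $\mathcal{O}(\numplayers^2)$, and $\A\KxiZXt$ costs $\mathcal{O}(t\numplayers^2)$.

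\emph{Matrix.} For $\A\KxiZZ\AT$, expanding both rows of $\A$ yields terms with no, one, or two membership constraints. We use the bivariate generating polynomial
\[
\prodjp\Big(\sum_{s,s'\in\{0,1\}}k_j(s,s')\,y^{s}u^{s'}\Big),
\]
whose $(k,k')$ coefficient sums the kernel over pairs of coalitions of sizes $(k,k')$. For each pair of players $(l,m)$, we divide out the factors of players $l$ and $m$ (each a $(1,1)$-degree factor, costing $\mathcal{O}(\numplayers^2)$ per division). We then reinsert the constrained monomials and contract against the size weights in $\mathcal{O}(\numplayers^2)$. Over $\numplayers^2$ pairs this gives $\mathcal{O}(\numplayers^4)$, which is the source of the $\numplayers^4$ term. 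The case $l=m$ is handled the same way with a single division.

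I expect the main obstacle to be the numerical and algebraic bookkeeping of this bivariate ESP step. First, the per-pair division must be made stable, since $k_j(0,0)$ can be small. If division is unstable, a fallback is prefix/suffix products over players. That fallback still yields $\mathcal{O}(\numplayers^2)$ coefficients per pair, though achieving the $\mathcal{O}(\numplayers^4)$ total then needs care, e.g.\ by sharing prefixes across $m$ for fixed $l$. Second, the $0/0$ conventions in the weights at $S=\emptyset$ and $S=\playerset$ must be handled correctly. Once this is in place, collecting the costs gives $\mathcal{O}(\numplayers^4+t^3)$, which proves Theorem~\ref{met_theorem_eig_singl}.
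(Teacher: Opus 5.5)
Your argument is correct. Its skeleton is the paper's: the EIG is reduced to $\Varnuzidt$, $\ASigma\ei$ and $\ASgimaA$ via the GP covariance formula. The same univariate generating polynomial with synthetic division gives each $\A K_{\xi}(\mathbf{Z},\mathbf{x})$ in $\mathcal{O}(p^2)$, as in Theorem~\ref{app:ef_eig_ic_thm1}. The same bivariate polynomial $\prod_r\big(\alpha_r(1+\zeta_1\zeta_2)+\beta_r(\zeta_1+\zeta_2)\big)$ is used for $\A\KxiZZ\AT$.

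The genuine difference is how you get the coefficient table with players $l,m$ removed. You deflate the full product by the two $(1,1)$-degree factors. Each factor has constant and top coefficient $\alpha_r=1$, so this is an exact $\mathcal{O}(p^2)$ recursion per factor. Hence $G_{\setminus\{l,m\}}$ is available explicitly for every pair, and the $\mathcal{O}(p^4)$ bound is immediate.

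The paper never divides in the bivariate case. It builds prefix and suffix tables, contracts them against the Shapley weight vectors, and updates the contracted tables linearly as $j$ advances. It does this because working with uncontracted prefix and suffix tables would cost $\mathcal{O}(p^5)$. That is exactly the care your prefix/suffix fallback mentions but does not supply.

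Your route is shorter and easier to verify. The paper's route buys a division-free scheme that avoids error propagation through the two-dimensional deflation recursion. For the Hamming kernel $k_j(0,0)=1$, so that propagation, not a small pivot, is the real stability risk.

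Your split $\A_{l,S}=-w_0(|S|)+(w_0+w_1)(|S|)\,[l\in S]$ is also a mild simplification over the paper's four $(u,v)$ cases, since only the doubly constrained term depends on the pair. It is valid once you set $w_0(p)=w_1(0)=0$, as you flagged. Your case split on $t$ versus $p$ also removes the paper's standing assumption $t>p$.
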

\begin{proof}
We present the (somewhat lengthy) proof in Appendix \ref{app:meth_cfeig_impl}. It is based on two further theorems: First, Theorem \ref{app:ef_eig_ic_thm1}, which shows how the linear term $\A \Kxi(\Zlarge, \zi) \in \Rone{\numplayers}$, where $\Kxi$ denotes the kernel matrix (see Appendix \ref{app:background_gps}), required for the computation of $\ASigma \ei$, can be computed in $\mathcal{O}(\numplayers^2)$. Second, Theorem \ref{app:ef_eig_ic_thm2}, which shows how the quadratic term $\A \KxiZZ \AT \in \Rtwo{\numplayers}{\numplayers}$, required for the computation of $\ASgimaA$, can be computed in $\mathcal{O}(\numplayers^4)$. Both of these results are achieved by rewriting the terms as weighted sums of kernel evaluations across coalitions, then recognizing that many of these kernel evaluations share the same weights, and that the sums over groups of kernel values with identical weights can be computed more efficiently by identifying them with scaled, uni- or bivariate elementary symmetric polynomials (ESPs; \citealp{van1988elementary,macdonald1998symmetric,charalambides2018enumerative}), thereby exploiting the multiplicative structure of the Hamming kernel. We then tie all of this together in Theorem \ref{app:ef_eig_ic_thm3} for the complete EIG computation.
\end{proof}
The approach for the linear part is similar to the computation of products between Shapley weights and a kernel matrix proposed by \citet{mohammadi2025computing}. However, our overall setup deviates from theirs, which leads our derivation to be based on scaled ESPs rather than the unscaled variant they use. Furthermore, we are not aware of any prior work regarding the quadratic part.
\paragraph{Vectorized EIG Computation.}
Consider the setting in which the EIG is evaluated for a set of candidate coalitions $\mathbf{W} \subseteq \{0,1\}^\numplayers$. We show in Appendix \ref{sss:app_eigc_vectorized} that this can be efficiently vectorized across candidates, scaling as $\mathcal{O}(\numplayers^4 + t^3 + |\mathbf{W}| \cdot t^2)$. In particular, the first two additive terms, which dominate the computational cost in many settings, are associated with operations that are independent of the specific candidate and thus scale independently of $|\mathbf{W}|$, while all remaining candidate-specific operations can be efficiently vectorized. For games with small $\numplayers$, this even enables exhaustive EIG optimization across all candidates.
\paragraph{Efficient SV Computation.}
Like the EIG, the SV estimates $\hat{\SVs}= \Atrans \munumiddt$ (Section \ref{subsec:meth_svestim}) can be computed efficiently. This follows immediately by applying Theorem \ref{app:ef_eig_ic_thm1} to the expanded form of $\munumiddt$ (see Appendix \ref{app:background_gps}), resulting in a computational complexity of $\mathcal{O}(t^3)$ for SV estimation. Similarly, the posterior covariance of the SVs, $\Sigmasvmiddt= \Atrans \Sigmanubdt \Atrans^{\top}$, can be computed in $\mathcal{O}(\numplayers^4 + t^2 \cdot \numplayers)$ by applying Theorem \ref{app:ef_eig_ic_thm2} as in the EIG computation.
\section{Related Work} \label{sec:rel_work}
Our proposed method is closely related to the pool-based active learning literature \citep{lewis1994sequential,settles2009active,houlsby2011bayesian,gal2017deep}, where the goal is to train a model efficiently in an adaptive manner. In this context, it is closest to transductive, or similarly prediction-oriented, BED variants \citep{yu2006active,hubotter2024transductive}, where the quantities of interest are the model predictions at a set of input points (in our setting, these correspond to the GP predictions of the value function across all coalitions). However, our EIG criterion directly targets the SVs, which are a linear transformation of the value function. It therefore differs from commonly used approaches in this literature and yields both information-theoretic and computational advantages. In particular, information-based transductive learning (ITL; \citealp{mackay1992information}), i.e., the EIG for the untransformed value function vector $\nub$, would collapse in our setting to the purely exploratory uncertainty sampling (US; \citealp{lewis1994heterogeneous}), i.e., selecting the coalition with the highest surrogate uncertainty \citep{krause2008near}, thereby ignoring how an evaluation reduces uncertainty at points beyond the candidate itself. The expected predictive information gain (EPIG; \citealp{smith2023prediction}), another popular variant, would incur prohibitively high computational cost when considering all coalitions in the target distribution. Lastly, although our approach can be framed as active testing (\citealp{kossen2021active,kossen2022active}; see Appendix \ref{app:met_svestim_dis}), it is more general in that the quantity of interest is not restricted to a scalar-valued linear transformation of the function. We refer to Appendix \ref{app:rel_work} for further details. \\
Furthermore, several lines of prior work in SV estimation are related to our approach in that they exploit the predictive uncertainty of surrogates to guide the iterative selection of coalitions, or employ GPs as surrogate models.
\citet{slack2021reliable} proposed BayesSHAP, which uses a Bayesian linear model as a surrogate to obtain point estimates of SVs together with uncertainty estimates. 
In addition, they proposed selecting queries using US.
\citet{mitchell2022sampling} used GP surrogates with specialized kernels over permutations and Bayesian quadrature (BQ; \citealp{larkin1972gaussian,o1991bayes,bayesian_monte_carlo}) to extract SVs, as well as sequential BQ \citep{huszar2012optimally} to select permutations for evaluation. Similarly, \citet{nguyen2025dupre} proposed to actively select coalitions using GP surrogates with kernels defined over data distributions for data valuation. However, in both of the latter two approaches, the selection criteria target uncertainty reduction for the SV of a single player rather than jointly across all players. Moreover, these methods rely on fixed kernels, which leads to non-adaptive experimental designs. \\
Beyond this, there exist various model-specific methods for explaining the predictions of GP models on a given observation with respect to input features via SVs \citep{benoumechiara2019shapley,chau2022rkhs,NEURIPS2023_9f0b1220,mohammadi2025computing,mohammadi2025exact}. In this line of work, several definitions of the value function associated with a feature subset have been considered, such as removal-based formulations that rely on conditional expectations of the model output with respect to a distribution over missing features. In the GP setting, such value functions can be computed efficiently using conditional mean embeddings \citep{chau2021deconditional,chau2021bayesimp}. In contrast, we focus on using GPs as surrogates to directly model the relationship between coalitions and an observed value function for arbitrary cooperative games. \\
\section{Experiments} \label{sec:experiments_main}
\subsection{Experimental Setup}
In the experiments, we run \methodname{} with the greedy BAD algorithm and compare it to popular SV estimation methods, as well as relevant baselines from BED and ablations of our method. In particular, we choose an initial design of size $\Tzero= \numplayers+1$ with coalitions drawn according to leverage score sampling \citep{musco2025provably}. Then, at each iteration, we optimize the EIG either exhaustively or over at most 1024 candidate coalitions, and refit the GP hyperparameters using the newly observed data. We run the procedure for a maximum of 512 evaluations, or until all coalitions have been evaluated. Note that our experiments focus on the low-budget regime, whereas other benchmarks typically consider larger evaluation budgets. We provide further details on the initial design and GP surrogate in Appendices~\ref{app:subsec_initial_design} and~\ref{app:subsec_gp_surrogates}.
\paragraph{Estimation Methods.}
We compare \methodname{} against widely used and recent SV approximation methods. These are Kernel SHAP \citep{lundberg2017unified}, Leverage SHAP \citep{musco2025provably}, permutation sampling \citep{Castro.2009}, and the state-of-the-art method Regression MSR \citep{witter2025regressionadjusted} coupled with XGBoost. For all of the above, we apply paired sampling \citep{covert2021improving} as an advanced coalition sampling technique. For a fair comparison, at each iteration, these competitor methods are run using the same budget for value function evaluations as \methodname{}. \\
In addition, we evaluate several ablations of our method. In all of these, the same GP surrogate type as in \methodname{} is used in the same iterative procedure, including identical initial designs and SV extraction from the posterior conditioned on previous evaluations, but the strategies for selecting the coalitions on which the GP is trained differ from our EIG-based criterion. In detail, we compare to a) random coalition sampling (GP + Random); b) coalition sampling according to leverage scores (GP + Leverage Score Sampling; \citealp{musco2025provably}), a state-of-the-art coalition sampling strategy in SV estimation; and c) pure uncertainty sampling (GP + US; \citealp{lewis1994heterogeneous}), as a widely used general-purpose baseline in BED \citep{krause2008near,gunter2014sampling,neiswanger2021bayesian} which has also been proposed in the context of BayesSHAP \citep{slack2021reliable}. This is done to disentangle the contribution of our proposed EIG-based coalition sampling strategy from the effect of using a GP surrogate, and to directly compare it to alternative coalition selection strategies. 
\begin{table}[tb]
    \caption{Overview of explanation tasks and games considered in the experiments.}
    \centering
\setlength{\tabcolsep}{3pt}
\renewcommand{\arraystretch}{1.05}
\resizebox{\columnwidth}{!}{%
\begin{tabular}{c@{\hspace{6pt}}llcc}
\toprule
\textbf{Task} & \textbf{Model} & \textbf{Dataset} & \textbf{\numplayers} & \textbf{\#Reps}  \\
\midrule
\multirow{3}{*}{\shortstack[c]{\textit{Feature Importance (FI)}\\\citep{rundel2024interpretable}}} &
TabPFN & Diabetes (Reg.) & 10 & 100 \\
& TabPFN & Diabetes & 8 & 100 \\
& TabPFN & Breast Cancer & 8 & 100 \\
\midrule
\multirow{3}{*}{\shortstack[c]{\textit{Data Valuation (DV)}\\\citep{ghorbani2019data}}} &
RF & Bike Sharing & 10 & 30 \\
& GB & Bike Sharing & 10 & 30 \\
& GB & Cal. Housing & 10 & 30 \\
\midrule
\multirow{3}{*}{\shortstack[c]{\textit{Hyperparameter Importance (HPI)}\\\citep{wever2026hypershap}}} &
XGBoost & Chess & 16 & 100 \\
& XGBoost & Thyroid & 16 & 100 \\
& LCBench & Jasmine & 8 & 100 \\
\midrule
\multirow{6}{*}{\shortstack[c]{\textit{Local Explanation (LE)}\\(\citealp{JMLR:v11:strumbelj10a};\\\citealp{bifet2022linear})}} &
RF & CorrGroups60 & 60 & 30 \\
& RF & NHANES & 79 & 30 \\
& RF & Crime & 101 & 30 \\
& ResNet & ImageNet & 14 & 30 \\
& ViT (9 patches) & ImageNet & 9 & 30 \\
& ViT (16 patches) & ImageNet & 16 & 30 \\
\bottomrule
\end{tabular}%
}
\label{tab:games_overview}
\end{table}
\begin{figure*}[t!]
  \centering
    \begin{subfigure}[t]{0.20\textwidth}
    \centering
    \subcaption*{FI; TabPFN; Diab. Reg.}
    \includegraphics[width=\linewidth]{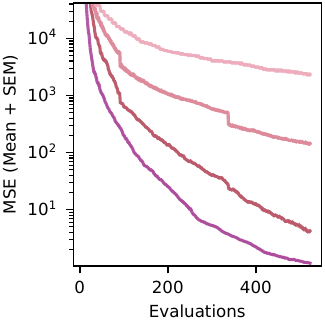}
  \end{subfigure}\hfill
  \begin{subfigure}[t]{0.20\textwidth}
    \centering
    \subcaption*{DV; RF; Bike Sharing}
    \includegraphics[width=\linewidth]{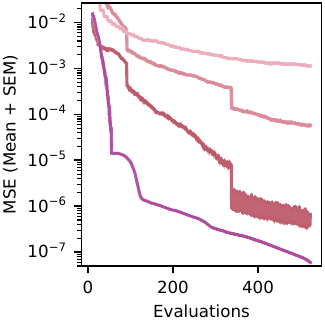}
  \end{subfigure}\hfill
  \begin{subfigure}[t]{0.20\textwidth}
    \centering
    \subcaption*{HPI; XGBoost; Chess}
    \includegraphics[width=\linewidth]{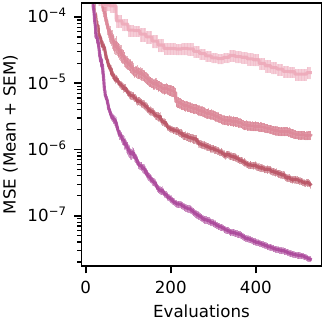}
  \end{subfigure}\hfill
  \begin{subfigure}[t]{0.20\textwidth}
    \centering
    \subcaption*{LE; RF; CorrGroups60}
    \includegraphics[width=\linewidth]{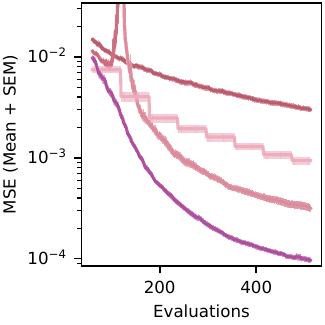}
  \end{subfigure}\hfill
  \begin{subfigure}[t]{0.20\textwidth}
    \centering
    \subcaption*{LE; ResNet; ImageNet}
    \includegraphics[width=\linewidth]{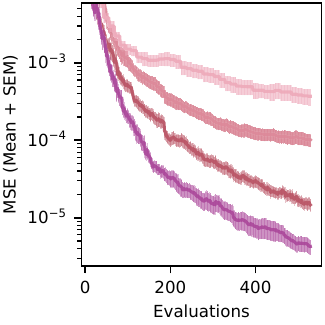}
  \end{subfigure}\hfill
  \vspace{0.5em}
  \begin{subfigure}[t]{0.20\textwidth}
    \centering
    \subcaption*{FI; TabPFN; Diab.}
    \includegraphics[width=\linewidth]{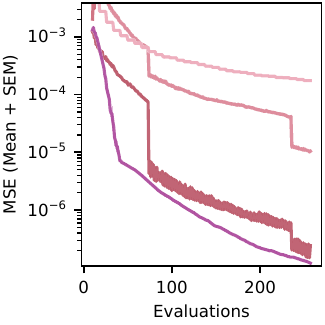}
  \end{subfigure}\hfill
  \begin{subfigure}[t]{0.20\textwidth}
    \centering
    \subcaption*{DV; GB; Bike Sharing}
    \includegraphics[width=\linewidth]{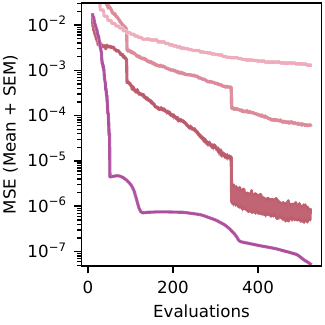}
  \end{subfigure}\hfill
  \begin{subfigure}[t]{0.20\textwidth}
    \centering
    \subcaption*{HPI; XGBoost; Thyroid}
    \includegraphics[width=\linewidth]{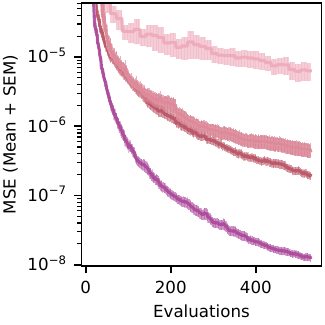}
  \end{subfigure}\hfill
  \begin{subfigure}[t]{0.20\textwidth}
    \centering
    \subcaption*{LE; RF; NHANES}
    \includegraphics[width=\linewidth]{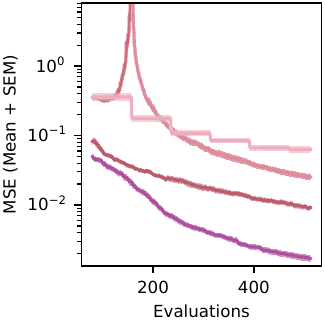}
  \end{subfigure}\hfill
  \begin{subfigure}[t]{0.20\textwidth}
    \centering
    \subcaption*{LE; ViT 9; ImageNet}
    \includegraphics[width=\linewidth]{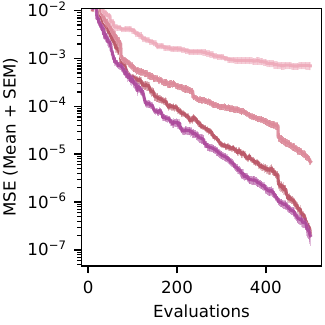}
  \end{subfigure}\hfill
  \vspace{0.5em}
  \begin{subfigure}[t]{0.20\textwidth}
    \centering
    \subcaption*{FI; TabPFN; Breast Cancer}
    \includegraphics[width=\linewidth]{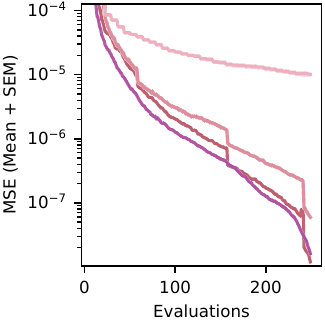}
  \end{subfigure}\hfill
  \begin{subfigure}[t]{0.20\textwidth}
    \centering
    \subcaption*{DV; GB; Cal. Housing}
    \includegraphics[width=\linewidth]{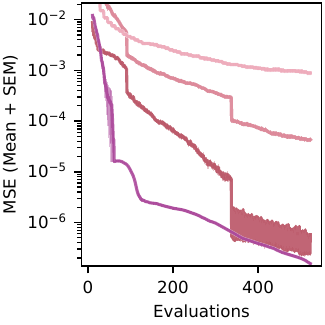}
  \end{subfigure}\hfill
  \begin{subfigure}[t]{0.20\textwidth}
    \centering
    \subcaption*{HPI; LCBench; Jasmine}
    \includegraphics[width=\linewidth]{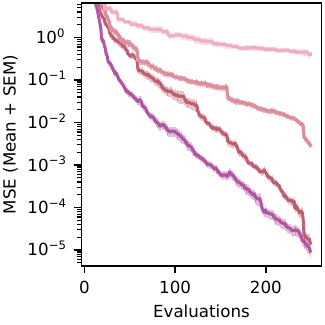}
  \end{subfigure}\hfill
  \begin{subfigure}[t]{0.20\textwidth}
    \centering
    \subcaption*{LE; RF; Crime}
    \includegraphics[width=\linewidth]{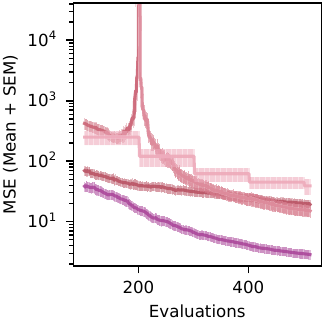}
  \end{subfigure}\hfill
  \begin{subfigure}[t]{0.20\textwidth}
    \centering
    \subcaption*{LE; ViT 16; ImageNet}
    \includegraphics[width=\linewidth]{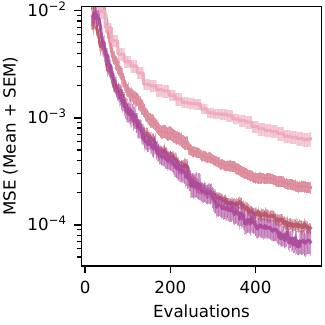}
  \end{subfigure}\hfill
  \captionsetup{justification=centering}
  \caption{Mean squared error (MSE) between estimated and ground-truth Shapley values across all tasks and evaluation budgets, averaged over repetitions for \methodname{} and the SV approximation baselines, with standard error of the mean (SEM) indicated. \\ \vspace{4pt}
    \methodcolor{AA4499}{\methodname{} (Ours)} \, \,
    \methodcolor{BB5566}{Regression MSR} \,
    \methodcolor{CC6677}{Leverage SHAP} \, \,
    \methodcolor{DD8899}{Kernel SHAP} \, \,
    \methodcolor{EEAABB}{Permutation Sampling}
    }
  \label{fig:overall_results}
\end{figure*}
\paragraph{Games.}
We consider several small to moderately large games from ML in which the value function is costly to evaluate (see Table~\ref{tab:games_overview} for an overview). Specifically, we study a) global feature importance (FI) for the TabPFN foundation model, which relies on in-context learning \citep{hollmann2025tabpfn,grinsztajn2025tabpfn,rundel2024interpretable}; b) dataset valuation (DV; \citealp{Jia.2019,ghorbani2019data,tay2022incentivizing}) for Random Forests (RFs; \citealp{breiman2001random}) and Gradient Boosting (GB; \citealp{friedman2001greedy}); c) hyperparameter importance (HPI) according to the HyperSHAP ablation game \citep{wever2026hypershap} for XGBoost \citep{binder2020collecting} and LCBench \citep{zimmer2021auto}; and d) local explanation (LE; \citealp{JMLR:v11:strumbelj10a}) for the computer vision models ViT~\citep{dosovitskiy-iclr21a} and ResNet~\citep{he-cvpr16a}, which are further relevant SV estimation problems, and for RFs via the linear TreeSHAP algorithm~\citep{bifet2022linear}. While the latter (LE) quantifies the contribution of features to model predictions for individual data instances, the former three (FI, DV, HPI) quantify the contributions of features, subsets of training data, and hyperparameters, respectively, to the predictive performance of a learning algorithm on a test set. \\
Since FI, DV, and HPI each require full model retrainings for coalition evaluations, evaluation budgets are often severely limited in these cases. Such budget constraints can also arise for repeated model evaluations in LEs, e.g., in the context of large foundation models, where each inference may incur monetary costs through APIs. \\
For our benchmarks, we mostly rely on pre-computed games. This avoids the computational cost associated with actual value function evaluations and thus allows us to efficiently compare estimation methods and compute ground-truth SVs. Although value function evaluations in the context of LEs for tree-based models are not particularly costly, we include these games because linear TreeSHAP enables computing ground-truth SVs even in settings with a large number of players $\numplayers$ where exhaustive enumeration of all coalitions is infeasible. We refer to Appendix \ref{app:subsec_games} for additional details on the games.
\paragraph{Evaluation.}
We compare the different methods with respect to their SV estimation accuracy across varying evaluation budgets. Estimation accuracy is measured using the mean squared error (MSE) between the estimated and ground-truth SVs. 
\paragraph{Scalability.}
For larger games ($\numplayers > 16$), we do not refit the GP hyperparameters in every iteration, but only according to a refit schedule. In addition to saving the cost of hyperparameter fitting in those iterations, fixed hyperparameters enable efficient updates of the EIG between iterations by allowing us to reuse intermediate quantities. This substantially reduces the computational overhead for larger games 
, while retaining outcome-adaptivity at the refitting iterations. See Appendix \ref{app:subsec_scal_refitsched} for details.
\paragraph{Reproducibility.}
Our implementation is written in Python, primarily using \texttt{BoTorch} \citep{balandat2020botorch} and \texttt{GPyTorch} \citep{gardner2018gpytorch} for GP surrogates and EIG computation. For the SV estimation baselines, we rely on the \texttt{shapiq} package \citep{muschalik2024shapiq}. The repository containing the implementation and all experiments is publicly available, ensuring full reproducibility (see Appendix \ref{app:subsec_setup_reproducibility}). \\
For statistical robustness and generalizability, we repeat all experiments with either 100 or 30 random seeds. The number of seeds depends on the size of the game and the source, with the \texttt{shapiq} package only providing 30 repetitions for pre-computed games.
Generally, the seeds affect the value functions of the underlying games (e.g., data splits and training behavior for model retrainings), the SV approximation baselines, as well as the initial designs and GP hyperparameter optimization procedures used in adaptive methods. We again refer to Table~\ref{tab:games_overview} and Appendix \ref{app:subsec_games} for further details.
\subsection{Results}
The results for \methodname{} and the SV approximation baselines across all tasks are presented in Figure~\ref{fig:overall_results}. The x-axis reports the number of value function evaluations while the y-axis shows the MSE averaged over repetitions and plotted on a logarithmic scale. Error bars indicate the standard error of the mean (SEM). \\
Across all tasks, our proposed approach \methodname{} consistently achieves the best overall accuracy and is at least as good as all competitors except over very short intervals. In the majority of tasks, it strictly dominates all established competitors from SV approximation across all evaluation budgets, while in the remaining games it is only outperformed by a competitor over short intervals and never by a substantial margin. In particular, Regression MSR is the only method that sometimes achieves competitive performance, while all other competitors are substantially outperformed by \methodname{} in most settings.
However, for several tasks, \methodname{} outperforms all competitors - including Regression MSR - by a large margin across all phases, thus demonstrating enhanced sample efficiency in the low-budget regime. We also note that even for the larger tasks, where GP hyperparameters are refit only according to a fixed schedule, \methodname{} remains effective. This indicates that simple strategies for reducing the computational overhead are sufficient to retain the benefits of our method also in larger games.
\paragraph{Ablations.}
Here we summarize the overall findings regarding the ablations of our method.
See Appendix~\ref{app:ablations} (Figure \ref{fig:ablation_baseline}) for the complete results across all tasks. \\
\methodname{} again achieves the best overall performance when compared to all (quite strong) GP-based baselines. While it frequently outperforms the variants with random and US-based coalition selection by a large margin, US often performs even worse than random sampling. This suggests that, despite being a popular approach from BED, US is not particularly effective for SV estimation compared to our approach. Moreover, although leverage score-based coalition sampling, as a recent state-of-the-art approach from SV estimation, can sometimes come close to the performance of \methodname{} when coupled with a GP, it is still consistently outperformed by our method. \\
Altogether, this indicates that the strong performance of our approach can only partially be attributed to the use of a GP surrogate, but is instead substantially driven by our principled EIG-based selection strategy. Note, however, that for the large LE games, our approach is outperformed by a small margin in very early stages, i.e., during the first 100 iterations, after which it starts to outperform the ablation variants.
\paragraph{Computational Cost.}
We also analyze the computational cost of \methodname{}. In Appendix \ref{app:res_compcost}, we present detailed results for the runtime overhead due to GP hyperparameter fitting (Figure \ref{fig:app_compcost_hpf}) and EIG computation (Figure \ref{fig:app_compcost_eig}). For smaller games with up to 16 players, the overhead for hyperparameter refitting can reach up to about 2 minutes per iteration, while EIG computation always takes less than a second. For larger games with up to 100 players, the overhead for hyperparameter refitting reaches up to about 25 minutes per iteration, while the overhead for EIG computation remains below 30 seconds. \\
This indicates the following: 1) For smaller games the overhead is relatively low (seconds to minutes per iteration), enabling the efficient application of \methodname{} even when value functions are not particularly costly; for larger games with up to 100 players, however, the overhead grows disproportionately and can be substantial (minutes to hours per iteration), rendering \methodname{} only appropriate when value functions are genuinely expensive. 2) Hyperparameter training dominates the computational overhead by a large margin. Consequently, alternative GP surrogate-based approaches that do not rely on EIG, as considered in the ablation study, yield similar overhead and are therefore not more efficient than our method.
\section{Conclusion}
In this work, we demonstrated that adaptive BED with GP surrogates can substantially improve sample efficiency for SV estimation in costly games with budget constraints. Central to our approach is the observation that, for SVs, the EIG admits a closed-form expression and can be computed efficiently. \\
At the same time, the proposed method incurs computational overhead from repeated GP hyperparameter optimization and EIG maximization. Future work should therefore focus on additional computational improvements to further broaden the scope of \methodname{} to games with even more players, to games requiring larger value function evaluation budgets, and to settings with less costly value functions.
\clearpage

\section*{Acknowledgements}
Maximilian Muschalik acknowledges funding by the Deutsche Forschungsgemeinschaft (DFG, German Research Foundation): TRR 318/3 2026 – 438445824.

\section*{Impact Statement}

This paper presents work whose goal is to advance the field of Machine
Learning. There are many potential societal consequences of our work, none
which we feel must be specifically highlighted here.

\bibliographystyle{icml2026}
\bibliography{bibliography}

\newpage
\appendix
\onecolumn
\section{Background}

\subsection{Gaussian Processes} \label{app:background_gps}
A Gaussian Process (GP; \citealp{williams1995gaussian, williams2006gaussian}) is fully characterized by a mean function $m: \mathcal{X} \rightarrow \Rone{}$ and a positive definite kernel $\kxi: \mathcal{X} \times \mathcal{X} \rightarrow \Rone{}$, parameterized by hyperparameters $\xi$. Formally, we write $f \sim \mathcal{GP}(m, \kxi)$. Given noisy training data $\Dn := (\Xn, \Yn)$, where $\Xn \in \Rtwo{n}{\numplayers}$ and $\Yn \in \Rone{n}$, with the targets $\Yn$ corrupted by homoscedastic, additive Gaussian noise $\epsilon \sim \mathcal{N}(0, \sigma^{2}_{\epsilon})$ that is i.i.d.\ across evaluations, updating the prior yields a posterior process that is also a GP. The posterior predictive distribution (PPD) over unseen test data $\Xstar \in \Rtwo{s}{\numplayers}$ is a multivariate Gaussian
\begin{align}
    \fXstar \mid \Dn
    \sim \N_{s} \big(
    \ensuremath{\mub_{\fXstar \mid \Dn}}, 
    \ensuremath{\Sigmab_{\fXstar \mid \Dn}}
    \big).
\end{align}
For a zero mean function $m=0$, this leads to the mean prediction
\begin{align}
    \mub_{\fXstar \mid \Dn}= 
    \Kxi(\Xstar, \Xn) \,
    \big[ \Kxi(\Xn, \Xn) + \sigma^{2}_{\epsilon} \mathbf{I} \big]^{-1} \Yn
\end{align}
and covariance matrix
\begin{align}
    \ensuremath{\Sigmab_{\fXstar \mid \Dn}} 
    = \Kxi(\Xstar, \Xstar)
    - \Kxi(\Xstar, \Xn) \, \big[ \Kxi(\Xn, \Xn) + \sigma^{2}_{\epsilon} \mathbf{I} \big]^{-1} \Kxi(\Xn, \Xstar) \in \Rtwo{s}{s}.
    \label{app:gp_ppcov}
\end{align}
Here, $\Kxi(\mathbf{A}, \mathbf{B}) \in \Rtwo{a}{b}$ denotes the kernel matrix with entries $\Kxi(\mathbf{A}, \mathbf{B})_{(i,j)}= \kxi(\mathbf{A}_{(i, \cdot)}, \mathbf{B}_{(j, \cdot)})$ for data $\mathbf{A} \in \Rtwo{a}{\numplayers}$ and $\mathbf{B} \in \Rtwo{b}{\numplayers}$.
\paragraph{Computational Cost.}
For the computation of the posterior predictive covariance $\ensuremath{\Sigmab_{\fXstar \mid \Dn}}$ over $\Xstar$, the kernel matrix of the training data, $\Kxi(\Xn, \Xn) \in \Rtwo{n}{n}$, is typically first decomposed via a Cholesky factorization, where the associated computational cost scales as $\mathcal{O}(n^3)$. The second term of Equation~\ref{app:gp_ppcov} can then be computed by solving a linear system with multiple right-hand sides, which scales as $\mathcal{O}(s \cdot n^2)$, followed by a matrix product that scales as $\mathcal{O}(s^2 \cdot n)$. Altogether, the total computational cost scales as $\mathcal{O}(n^3 + s \cdot n^2 + s^2 \cdot n)$. When only the marginal posterior predictive variances are of interest, corresponding to the diagonal of $\ensuremath{\Sigmab_{\fXstar \mid \Dn}}$, the computational cost reduces to $\mathcal{O}(n^3 + s \cdot n^2)$.
\paragraph{Covariance Functions.}
A popular covariance function for categorical input variables is the Hamming kernel \citep{platt2001learning,qian2008gaussian,Hutter_2009}. It quantifies the similarity of two data points as
\begin{align*}
    \kxi(\mathbf{x}, \mathbf{x}')
    &=
    \prod_{j=1}^{\numplayers}
    k_{\xi, j}(\mathbf{x}, \mathbf{x}')
    \\
    &=
    \prod_{j=1}^{\numplayers}
    \exp\!\Big(
    - \frac{[\mathbf{x}_{j} \neq \mathbf{x}_{j}']}{\ell_j^2}
    \Big),
\end{align*}
where $\xi = (\ell_1, \dots, \ell_{\numplayers})^{\top} \in \Rone{\numplayers}$ collects the dimension-specific lengthscales. Note that this is a product kernel, meaning that the kernel value factorizes over dimensions.
\paragraph{Hyperparameter Training.}
The kernel hyperparameters $\xi$ (and potentially the noise variance $\sigma^2_{\epsilon}$) influence the approximation quality of GP models, yet are unknown in practice. Thus, they are typically learned by maximizing the log marginal likelihood (LML) of the training data $\Dn$, or alternatively by maximum a posteriori (MAP) estimation. This optimization balances data fit and model complexity and is usually carried out using gradient-based methods, exploiting the closed-form expression of the marginal likelihood and its derivatives with respect to the hyperparameters.
\paragraph{(Quasi-) Noiseless GPs.}
A noiseless GP is obtained as the limiting case of the above model when the observation noise variance is fixed to zero, i.e., $\sigma^{2}_{\epsilon}=0$. In this case, assuming that the kernel matrix is non-singular, the posterior mean interpolates the observed training data exactly (interpolation property; \citealp{stein1999interpolation,williams2006gaussian}). In practice, however, one typically uses a quasi-noiseless GP, where $\sigma^{2}_{\epsilon}$ is fixed to a very small positive constant for numerical stability.
\newpage
\section{Methodology}
\subsection{Shapley Value Estimation} \label{app:met_svestim_dis}
In the following, we discuss further methodological details on the SV estimation approach of \methodname{}.
\paragraph{Interpolation Property.}
We propose extracting SV estimates in \methodname{} via $\hat{\SVs}:= \musvmiddt$ using a noiseless GP surrogate. It is important to note that one could alternatively employ a noisy GP in this context. This may lead to better generalization performance, depending on the noise level of the value function. However, in our experiments, we use quasi-noiseless GPs, which is motivated by the following considerations: First, as explained in the main paper (Section~\ref{subsec:meth_svestim}), the SV estimator is consistent when using a noiseless GP. Second, we assume deterministic games, making noiseless GPs a natural choice. Third, in the presented experiments (Section~\ref{sec:experiments_main}), \methodname{} achieves state-of-the-art performance in SV estimation on real-world datasets, which may well be noisy. This suggests that the chosen approach is effective in practice.

Consequently, we leave the analysis of noisy GPs for future work. Nevertheless, we emphasize that noisy GPs may be necessary in some cases and that our current approach has limitations in certain settings.
\paragraph{Estimator Bias and Debiasing.}
We note that the proposed SV estimator, despite being consistent, is not unbiased \citep{mackay1992information,dasgupta2008hierarchical,kossen2021active}. This is because the adaptive coalition selection based on the EIG breaks the assumption of independent sampling. However, for adaptive surrogate-based estimators, unbiasedness is typically not the central objective. More generally, in SV estimation, it is common to accept some bias in exchange for lower variance, and thus a reduced MSE of the estimator (as in Kernel SHAP; see the analyses of \citet{covert2021improving} and \citet{kolpaczki2024approximating} for further details).

Furthermore, note that there exist approaches for debiasing the SV estimator with slight changes to the acquisition strategy. In particular, \methodname{} can be viewed as a special instance of active testing \citep{kossen2021active,kossen2022active}, where the SVs correspond to expectations to be estimated in a sample-efficient way. \citet{farquhar2021on} showed that by (1) sampling coalitions according to an acquisition distribution that can be derived from the EIG scores, rather than selecting the EIG maximizer, and (2) adjusting the SV estimator with importance weights in the Levelled Unbiased Risk Estimator (LURE; \citealp{farquhar2021on}), the bias from active selection can be corrected. We leave such unbiased, importance-weighted variants for future work.
\newpage
\subsection{Practical Implications of the EIG-based Coalition Selection} \label{app:methodology_eig_intuition}
In the following, we provide a detailed discussion of how the covariance-based EIG criterion affects coalition selection in practice, and when it can be expected to provide benefits over alternative selection strategies (Section \ref{app:methodology_eig_intuition_discussion}). In addition, we present two examples of games and the behavior of different selection strategies to further illustrate these concepts (Section \ref{app:methodology_eig_intuition_illustration}).
\subsubsection{Discussion of the EIG-based Coalition Selection} \label{app:methodology_eig_intuition_discussion}
As indicated by Equation \ref{eq:eig_goode_cf_sv}, the EIG about the SVs is maximized by the coalition that leads to the lowest determinant of the SV posterior covariance after the associated evaluation is added to the dataset. In particular, this is governed by the expected reduction in uncertainty about the value function across all coalitions, and how this reduction propagates to the SVs through the linear transformation defined by $\Atrans$ (see Equation \ref{eq:eig_goode_cf_asva}). The GP surrogate from \methodname{} is able to capture this through its Hamming covariance function. It quantifies similarity between coalition pairs based on which players the two coalitions share, weighted by learnable lengthscale parameters that determine how strongly disagreements for specific players reduce covariance. As a result, two coalition pairs with the same Hamming distance (i.e., the same number of players on which they disagree) can still have very different covariance if they differ in players that are more or less influential under the surrogate. Overall, this induces a posterior covariance structure over the coalitions, through which evaluating one coalition not only reduces uncertainty locally at that coalition itself, but also globally at others. Based on this, the EIG favors coalitions that are able to reduce uncertainty jointly about the SVs, as indicated by this covariance structure, while accounting for the coalitions already observed.

This stands in contrast to current state-of-the-art approaches for coalition sampling (e.g., leverage score sampling; \citealp{musco2025provably}), which sample coalitions from fixed distributions. Although these distributions typically depend on the size of candidate coalitions, they treat all player differences equally and do not distinguish between different coalitions of the same size. Even uncertainty sampling (US; \citealp{lewis1994heterogeneous}), a widely used general-purpose baseline in BED settings \citep{krause2008near,gunter2014sampling,neiswanger2021bayesian} that does distinguish between coalitions of the same size, does not fully capture this effect either. US selects the coalition with the highest marginal posterior variance under the surrogate, but ignores how its evaluation informs other coalitions.

This also clarifies the regime in which we expect the strategy to be advantageous: asymmetric games, where players have different influence on the value function - e.g., some are highly relevant while others have negligible effect - with interactions occurring primarily among the relevant players. We believe this regime is common in practice, particularly in large games, where it is unlikely that all players contribute symmetrically to the value function.
\subsubsection{Illustrative Examples}  \label{app:methodology_eig_intuition_illustration}
\begin{figure}[t!]
  \centering
    \begin{subfigure}[t]{0.3\linewidth}
    \centering
    \includegraphics[width=\linewidth]{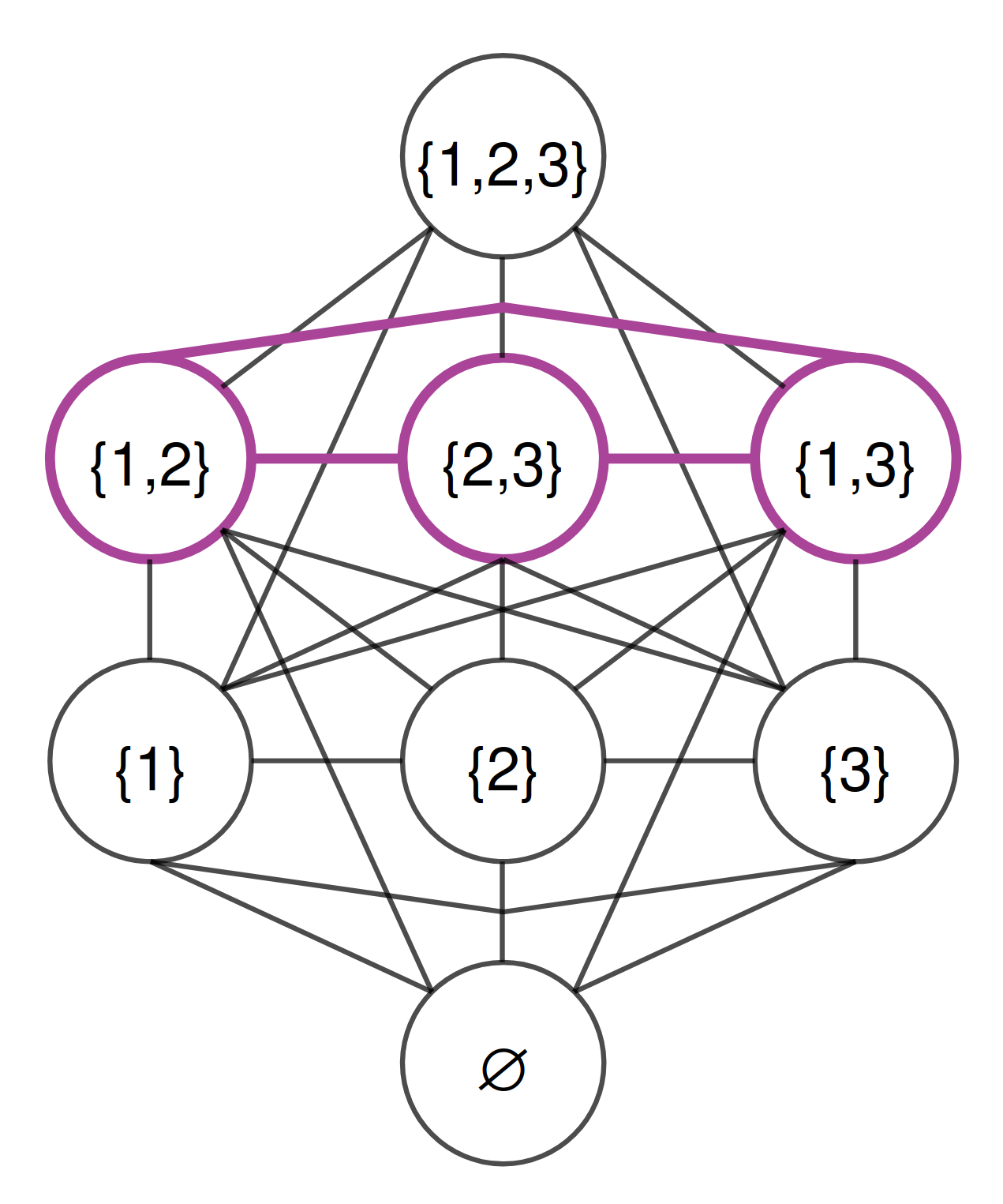}
    \caption{Symmetric Game}
    \label{fig:app_eig_int_sym}
  \end{subfigure}
  \hfill
  \begin{subfigure}[t]{0.3\linewidth}
    \centering
    \includegraphics[width=\linewidth]{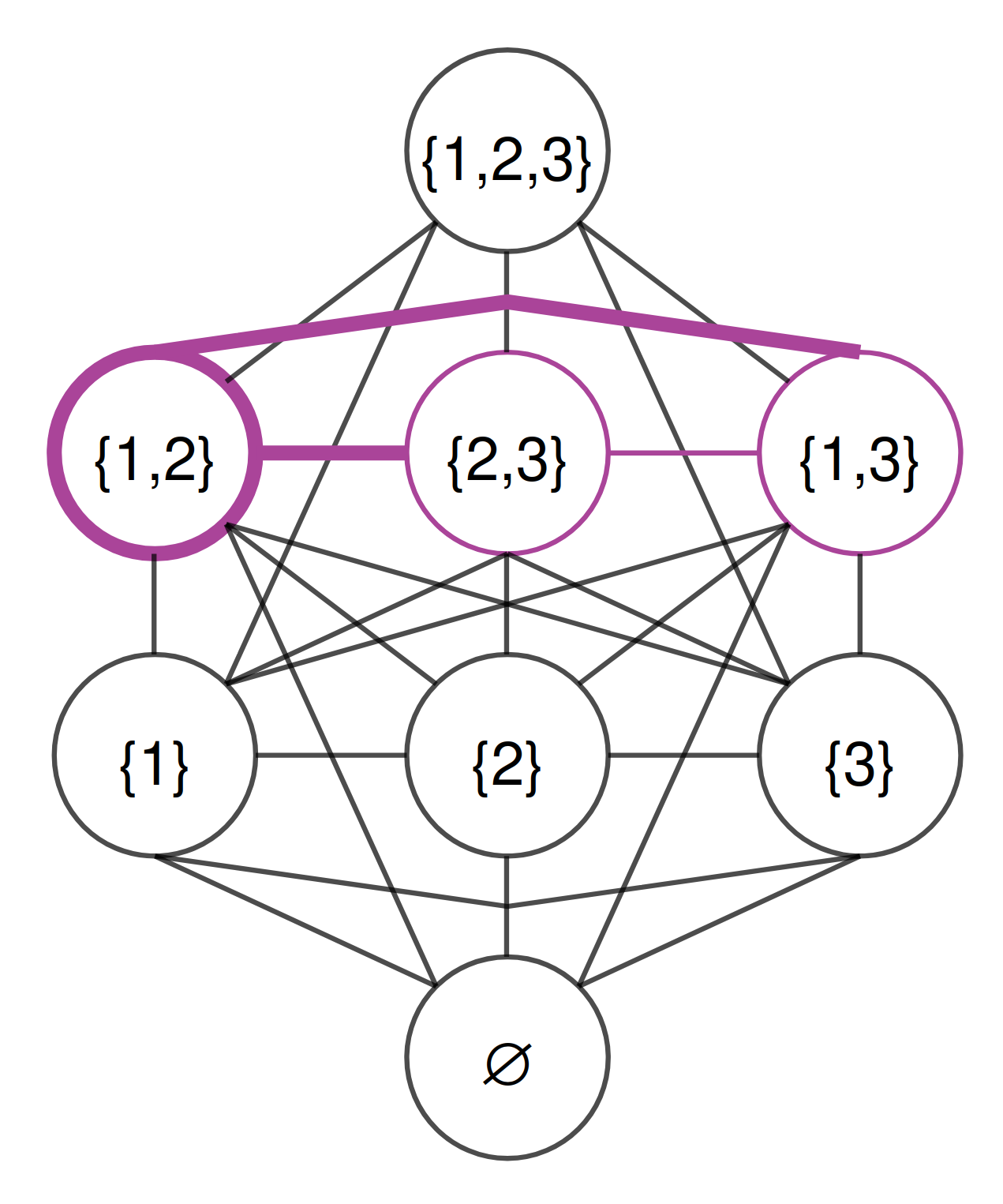}
    \caption{Asymmetric Game}
    \label{fig:app_eig_int_asym}
  \end{subfigure}
  \captionsetup{justification=centering}
  \caption{Illustration of the behavior of EIG-based coalition selection given the initial design $\Darg{T_{0}+1}$ in symmetric and asymmetric games. The line width of the colored edges represents the magnitude of the covariance between coalition pairs according to the GP surrogate, while the size of the colored nodes represents the EIG scores of the candidate coalitions.}
  \label{fig:app_eig_int}
\end{figure}
We consider two simple games with three players ($\playerset := \{1,2,3\}$) and the initial design
\begin{align*}
  \Darg{T_{0}+1}= \{
  \big(\emptyset, \nu(\emptyset) \big), 
  \big(\{1\}, \nu(\{1\}) \big), 
  \big(\{2\}, \nu(\{2\}) \big), 
  \big(\{3\}, \nu(\{3\}) \big),
  \big(\{1,2,3\}, \nu(\{1,2,3\}) \big) 
  \},
\end{align*}
containing the empty and full coalitions as well as all size-one coalitions. As the candidate set, we consider all size-two coalitions, i.e.,
\begin{align*}
  \mathcal{C}= \{\{1,2\}, \{1,3\}, \{2,3\}\}.
\end{align*}

\paragraph{Symmetric Games.}
First, consider the symmetric value function 
\begin{align*}
  \nu_{\text{sym}}(\mathcal{S})= \lvert \mathcal{S} \rvert^2, 
\end{align*}
which contains interactions among players but is identical across all coalitions of the same size. The initial design evaluates to 
\begin{align*}
  \Darg{T_{0}+1, \text{sym}}= \{(\emptyset, 0), (\{1\}, 1), (\{2\}, 1), (\{3\}, 1), (\{1,2,3\}, 9) \}.
\end{align*}
Since all single-player coalitions yield the same value, GP fitting in the initial \methodname{} iteration produces identical lengthscales for all players, i.e., $\ell= (1.035,1.035,1.035)^{\top}$. As a result, the covariances among all unseen candidates are identical: $k_{\xi}(\{1,2\}, \{1,3\})= k_{\xi}(\{1,2\}, \{2,3\})= k_{\xi}(\{1,3\}, \{2,3\})$. 
Each pair of candidate coalitions shares exactly one player, and due to identical lengthscales, differences in player membership do not affect the covariance differently across remaining players. Consequently, the EIG is also identical across all candidates, i.e., $\text{EIG}(\{1,2\})=\text{EIG}(\{1,3\})=\text{EIG}(\{2,3\})$. 
See Figure \ref{fig:app_eig_int_sym} for an illustration of the covariance structure and EIG in this setting.

Thus, in this setting, EIG provides no advantage over coalition sampling methods that depend solely on coalition size and are commonly used in classical SV estimation.

\paragraph{Asymmetric Games.}
We now consider an asymmetric game with value function 
\begin{align*}
  \nu_{\text{asym}}(\mathcal{S})= 1 \cdot [1 \in \mathcal{S}]+1 \cdot [2 \in \mathcal{S}]+0.01 \cdot [3 \in \mathcal{S}]+1 \cdot [\{1,2\} \subseteq \mathcal{S}].
\end{align*}
Players 1 and 2 have identical main effects, player 3 is negligible, and there is an interaction between players 1 and 2.

Given the initial design 
\begin{align*}
  \Darg{T_{0}+1, \text{asym}}= \{(\emptyset, 0), (\{1\}, 1), (\{2\}, 1), (\{3\}, 0.01), (\{1,2,3\}, 3.01) \},
\end{align*}
the fitted GP hyperparameters $\ell= (0.807,0.807,3.918)^{\top}$ already reflect the low relevance of player 3. Consequently, the coalition containing the two relevant players, $\{1,2 \}$, exhibits higher covariance with the other candidates than the other size-two coalitions: 
\begin{align*}
  k_{\xi}(\{1,2\}, \{1,3\})= k_{\xi}(\{1,2\}, \{2,3\}) > k_{\xi}(\{1,3\}, \{2,3\}).
\end{align*}
This arises because players 1 and 2 have smaller lengthscales, so sharing one of them reduces variability more. Coalitions sharing only player 3 leave the relevant players in disagreement and therefore have lower covariance. This is directly reflected in $\Qii$ (Equation \ref{eq:Qiiform}), which depends on these covariances, and as a result the EIG is maximized for this coalition: $\text{EIG}(\{1,2\})>\text{EIG}(\{1,3\})=\text{EIG}(\{2,3\})$.
Intuitively, this candidate, due to its highest correlation with the remaining coalitions, is expected to yield the largest reduction in uncertainty about the SVs. See Figure \ref{fig:app_eig_int_asym} for an illustration of the covariance structure and EIG in this setting.

Consequently, \methodname{} selects $\{1,2 \}$ for evaluation, revealing the key interaction. This reduces the MSE of the SV estimation from 0.024 to 5.485e-06. 
In contrast, classical size-based coalition samplers choose all three candidates with equal probability. As a result, it is more likely that either $\{1,3 \}$ or $\{2,3 \}$ is selected, neither of which reveals the relevant interaction and therefore leads to a substantially smaller reduction in MSE (from 0.024 to 8.75e-4).

This example demonstrates that in asymmetric games, where some players are highly relevant to the value function while others have negligible effect, and interactions occur primarily among the relevant players, \methodname{} can exploit this structure to identify informative coalitions. In contrast, classical coalition sampling schemes often fail to do so, as they treat all players identically.
\newpage
\subsection{Closed-form EIG for the Shapley Values} \label{app:meth_cfeig_mdtmil}
In the following, we will derive the closed-form expression for the EIG of a candidate coalition about the SVs as presented in Formula \ref{eq:eig_ref_nonvec} in Section \ref{sec_eig_computation} of the main text.

In Section \ref{subsec:eig_cf}, we established that adaptively selecting data for GP surrogate training can be viewed as a Bayesian linear inverse problem and that, since the SVs are given by a linear transformation of $\nub$, the EIG admits a closed-form expression. Consequently, according to Equation \ref{eq:eig_goode_cf} in \cref{par:background_bed_blip}, the EIG about the SVs $\SVs$ for a candidate coalition $\zi \in \{0,1\}^\numplayers$ at iteration $t$ is given by:
\begin{align*}
    \EIGSVt(\zi)
    &= I \big( \SVs; \, \nuziapo \mid \Dt \big) \label{eq:eig_goode_sv} \\
    &\propto - \log \, \det \big( \mathbf{A} 
    \SigmanubmidDtcuppairzinuziapo
    \mathbf{A}^{\top} \big) + C,
     \notag
\end{align*}
where $C$ is a constant independent of $\zi$, and the posterior covariance can be expressed as
\begin{align*}
    \SigmanubmidDtcuppairzinuziapo=
    \big(
    \Sigmanubdt^{-1} + \ei \sigma_{\epsilon}^{-2} \ei^{\top}
    \big)^{-1}.
\end{align*}
It can be rearranged into the following form, as presented in Formula \ref{eq:eig_ref_nonvec} in Section \ref{sec_eig_computation} of the main text:
\begin{align*}
    \EIGSVt(\zi)
    \propto
    C' +
    \log 
    \Big[
    \eit
    \Big( \Sigmanubdt + \Sigmanoise \Big) 
    \ei
    \Big] - \log 
    \Big[ 
    \eit \Big(
    \Sigmanubdt
    + \Sigmanoise - \Qform
    \Big) 
    \ei
    \Big], \notag
\end{align*}
where $C'$ is constant, $\mathbf{I} \in \Rtwo{2^\numplayers}{2^\numplayers}$ is the identity matrix, and $\Qform \in \Rtwo{2^\numplayers}{2^\numplayers}$ is defined as
\begin{align*}
    \Qform= 
    &\big( \ASigma \big)^{\top}
    \big(\ASgimaA \big)^{-1}
    \big( \ASigma \big).
\end{align*}
Specifically, this follows by direct application of Sherman-Morrison:
\begin{align*}
    \mathbf{A}
    \big(
    \Sigmanubdt^{-1} + \ei \sigma_{\epsilon}^{-2} \ei^{\top}
    \big)^{-1}
    \mathbf{A}^{\top}
    &= 
    \mathbf{A}
    \Sigmanubdt \mathbf{A}^{\top} - \mathbf{A} \Sigmanubdt \ei \big( \sigma_{\epsilon}^{2} + \eit \Sigmanubdt \ei \big)^{-1} \eit \Sigmanubdt
    \mathbf{A}^{\top},
\end{align*}
and by the matrix determinant lemma (see, e.g., \citet{williams2006gaussian} A.3):
\begin{align*}
    \det \big( \mathbf{A}  \SigmanubmidDtcuppairzinuziapo \mathbf{A}^{\top} \big) 
    &= \det \Big(
    \mathbf{A}
    \Sigmanubdt \mathbf{A}^{\top} - \mathbf{A} \Sigmanubdt \ei \big( \sigma_{\epsilon}^{2} + \eit \Sigmanubdt \ei \big)^{-1} \eit \Sigmanubdt
    \mathbf{A}^{\top} \Big) \\
    &= \det \Big( \mathbf{A}  \Sigmanubdt \mathbf{A}^{\top} \Big) 
    \cdot 
    \det \Big( \big( \sigma_{\epsilon}^{2} + \eit \Sigmanubdt \ei \big)^{-1} \Big)  \\
    & \, \cdot \det \Big( \sigma_{\epsilon}^{2} + \eit \Sigmanubdt \ei - 
    \eit \Sigmanubdt  \mathbf{A}^{\top} 
    \big( \mathbf{A}  \Sigmanubdt \mathbf{A}^{\top} \big)^{-1} 
    \mathbf{A} \Sigmanubdt \ei \Big) 
    .
\end{align*}
In this final formulation, the first product term does not depend on $\zi$, and for the latter two terms, the $\det$ operators can be dropped, as they operate only on scalars.
\newpage
\subsection{Efficient EIG Computation} \label{app:meth_cfeig_impl}
In the following, we present the proof of Theorem \ref{met_theorem_eig_singl} and thereby derive an efficient computation scheme for the EIG of a single candidate coalition $\zi \in \{0,1\}^\numplayers$ about the SVs $\SVs$ (Formula \ref{eq:eig_ref_nonvec} in Section \ref{sec_eig_computation}). In doing so, we exploit the multiplicative structure of the Hamming kernel via a correspondence to elementary symmetric polynomials (ESPs; \citealp{van1988elementary,macdonald1998symmetric,charalambides2018enumerative}). We also show how the EIG can be computed across various candidates in a vectorized manner (Subsection \ref{sss:app_eigc_vectorized}).

Specifically, we will first show in Theorem \ref{app:ef_eig_ic_thm1} how the linear term $\A \Kxi(\Zlarge, \zi) \in \Rone{\numplayers}$, required for the computation of $\ASigma \ei$, can be computed in $\mathcal{O}(\numplayers^2)$ and in Theorem \ref{app:ef_eig_ic_thm2} how the quadratic term $\A \KxiZZ \AT \in \Rtwo{\numplayers}{\numplayers}$, required for the computation of $\ASgimaA$, can be computed in $\mathcal{O}(\numplayers^4)$. Based on these results we will then show in Theorem \ref{app:ef_eig_ic_thm3} how the complete EIG can be computed efficiently in $\mathcal{O}(\numplayers^4+ t^3)$, thereby avoiding exponential scaling in $p$.

We assume $\twop > t > \numplayers$, which is aligned with our experiments where the initial design is of size $\Tzero= \numplayers + 1$. Also, we assume a Hamming kernel with fixed length scales $\xi$ (see Section \ref{app:background_gps}):
\begin{align}
    \kxi(\mathbf{x}, \mathbf{x}')
    &=
    \prod_{j=1}^{\numplayers}
    \exp\!\Big(
    - \frac{[\mathbf{x}_{j} \neq \mathbf{x}_{j}']}{\ell_j^2}
    \Big) \\
    &= \prodjp \big(
    \alpha_j [\mathbf{x}_{j} = \mathbf{x}_{j}'] 
    + \beta_j [\mathbf{x}_{j} \neq \mathbf{x}_{j}']  \big),
\end{align}
where $\alpha_j= 1$ and $\beta_j= \exp(-\ell_{j}^{-2})$. This rewriting highlights that each multiplicative term can only take on two values, depending on whether the arguments are equal in the corresponding dimension.

Similar techniques for the linear part $\A \Kxi(\Zlarge, \zi)$ have been considered for the computation of products between Shapley weights and a kernel matrix by \citet{mohammadi2025computing}, which is currently only available on arXiv. However, their approach differs from ours in that it is based on unscaled ESPs, whereas we use scaled ESPs. Furthermore, we are not aware of any prior work regarding the quadratic part $\A \KxiZZ \AT$. For completeness, we present the full construction for our specific setting using a Hamming kernel with length scales.
\newpage
\begin{theorem} \label{app:ef_eig_ic_thm1}
  The term $\A \Kxi(\Zlarge, \zi) \in \Rone{\numplayers}$, for a candidate coalition $\zi \in \{0,1\}^\numplayers$, is efficiently computable in $\mathcal{O}(\numplayers^2)$ time.
\end{theorem}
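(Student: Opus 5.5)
We compute each component $k\in\playerset$ of $\A \Kxi(\Zlarge,\zi)$ separately. By the definition of $\A$, the $k$-th entry equals
\begin{align*}
\big(\A \Kxi(\Zlarge,\zi)\big)_k
= \sum_{S \subseteq \playerset\setminus\{k\}} w_{|S|}\Big( \kxi(\mathbf{1}_{S\cup\{k\}},\zi) - \kxi(\mathbf{1}_{S},\zi)\Big),
\qquad w_s := \frac{1}{\numplayers\binom{\numplayers-1}{s}} .
\end{align*}
The two kernel evaluations differ only in coordinate $k$, so by the product form they share the factor $\prod_{j\neq k} c_j(S)$, where $c_j(S)=\alpha_j=1$ if $[j\in S]=\zi_j$ and $c_j(S)=\beta_j$ otherwise. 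The difference therefore factors as $\big(c_k^{\mathrm{in}}-c_k^{\mathrm{out}}\big)\prod_{j\neq k}c_j(S)$, with $c_k^{\mathrm{in}}-c_k^{\mathrm{out}} = \pm(1-\beta_k)$ according to whether $\zi_k=1$ or $0$. Each entry thus reduces to $\pm(1-\beta_k)\, G_k$, where
\begin{align*}
G_k := \sum_{s=0}^{\numplayers-1} w_s \sum_{S\subseteq \playerset\setminus\{k\},\,|S|=s}\ \prod_{j\neq k} c_j(S).
\end{align*}

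Next we recognise the inner sum as a scaled ESP. Factor out the value each $j\neq k$ takes when $j\notin S$: this is $\gamma_j:=1$ if $\zi_j=0$ and $\beta_j$ if $\zi_j=1$. Putting $j$ into $S$ multiplies the term by the ratio $r_j := \beta_j$ (if $\zi_j=0$) or $1/\beta_j$ (if $\zi_j=1$). Hence
\begin{align*}
\sum_{|S|=s}\prod_{j\neq k}c_j(S) = \Big(\prod_{j\neq k}\gamma_j\Big)\, e_s\big(\{r_j\}_{j\neq k}\big),
\end{align*}
where $e_s$ is the $s$-th elementary symmetric polynomial. Equivalently, $\sum_s x^s\,\cdot$ the left-hand side is the coefficient sequence of $\prod_{j\neq k}(\gamma_j + x\, c_j^{\mathrm{in}})$. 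We work with this unnormalised product form, which avoids dividing by the possibly tiny $\beta_j$.

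For the cost: the coefficients of $\prod_{j\neq k}(\gamma_j+x\,\delta_j)$, with $\delta_j$ the in-value, cost $\mathcal{O}(\numplayers^2)$ via the standard recursion $e_s^{(m)}=e_s^{(m-1)}\gamma_m+e_{s-1}^{(m-1)}\delta_m$. Recomputing this for every $k$ would give $\mathcal{O}(\numplayers^3)$. To stay within $\mathcal{O}(\numplayers^2)$, we compute the full product polynomial once and, for each $k$, obtain the leave-one-out polynomial by synthetic division by the linear factor $(\gamma_k+x\delta_k)$ in $\mathcal{O}(\numplayers)$. We then take the weighted sum $\sum_s w_s(\cdot)$, also in $\mathcal{O}(\numplayers)$, after precomputing the weights $w_s$ in $\mathcal{O}(\numplayers)$.

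The main obstacle is numerical stability of the deflation. We divide by whichever of $\gamma_k$ or $\delta_k$ equals $1$: running the recursion from the low-degree end if $\gamma_k=1$, and from the high-degree end if $\delta_k=1$. Since exactly one of them is $1$ and the other is $\beta_k\in(0,1]$, the division is always by $1$. This is exact in exact arithmetic and avoids amplifying errors by $1/\beta_k$. Alternatively, prefix/suffix products of the linear factors also give the leave-one-out coefficients, but naively that costs $\mathcal{O}(\numplayers^3)$. We therefore rely on the deflation argument, yielding $\mathcal{O}(\numplayers^2)$ overall.
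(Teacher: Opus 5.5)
Your proposal is correct and follows essentially the paper's route: the grouped kernel sums are identified with coefficients of the generating polynomial $\prod_j(\gamma_j+\delta_j\zeta)$, i.e.\ scaled ESPs. The full product is built by the degree-by-degree recursion in $\mathcal{O}(p^2)$, and each leave-one-out polynomial is obtained by synthetic division in $\mathcal{O}(p)$, giving $\mathcal{O}(p^2)$ overall. Your marginal-contribution form factors out $\delta_k-\gamma_k=\pm(1-\beta_k)$, so only $\sum_s w_s d^{(k)}_s$ is needed; this tidily simplifies the paper's two-sum combination via $c_s-d^{(k)}_{s-1}\delta_k=\gamma_k d^{(k)}_s$. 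Your choice of division direction, so that you always divide by $1$, is a concrete response to the stability concern the paper only raises in passing.
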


\begin{proof}

In the following, we will consider the matrix-vector multiplication $\A \Kxi(\Zlarge, \zi) \in \Rone{\numplayers}$, where $\A \in \Rtwo{\numplayers}{\twop}$ and $\Kxi(\Zlarge, \zi) \in \Rone{\twop}$. Initially, we will show that the entries can be rewritten as weighted sums of kernel evaluations across coalitions with weights according to the Shapley matrix $\A$. We then recognize that the additive kernel evaluations associated with certain groups of coalitions share the same weights and show how inner sums of kernel values over these groups can be computed more efficiently by identifying them with scaled, univariate ESPs. In detail, we will show how the sums of kernel values over all coalitions of size $k$ 
, all coalitions of size $k$ that contain a certain player $j$ 
, and all coalitions of size $k$ that do not contain a certain player $j$ 
can be computed efficiently. Lastly, 
we will combine everything to show how the complete term can be computed in $\mathcal{O}(\numplayers^2)$ time.

\paragraph{Rewriting entries of $\A \Kxi(\Zlarge, \zi)$ as sums of kernel evaluations for coalition groups of identical size.} \label{par:ef_eig_ic_thm1_par1}
Consider $\A \KxiZz$, which computes the SVs of the kernel vector $\KxiZz$. The $j$-th entry of this vector, corresponding to a player $j \in \pset$, is given by the following summation over all coalitions:
\begin{align*}
    (\A \KxiZz)_j
    &= \sum_{S\subseteq\pset} \A_{j,S} \,\kxi(\IndS, \zi).
\end{align*}
Plugging in the definition of the Shapley matrix for the entry corresponding to player $j$ and a coalition $S$
$$
\A_{j,S}
= \frac{1}{p}\left(
\frac{[j\in S]}{\binom{p-1}{|S|-1}}
-\frac{[j\notin S]}{\binom{p-1}{|S|}}
\right)
$$
equivalently gives
\begin{align*}
    (\A \KxiZz)_j
    = &\frac1p\sum_{S\subseteq\pset}\frac{[j\in S]}{\binom{p-1}{|S|-1}}\,\kxi(\IndS, \zi) \\
    &\;-\;\frac1p\sum_{S\subseteq\pset}\frac{[j\notin S]}{\binom{p-1}{|S|}}\,\kxi(\IndS, \zi).
\end{align*}
Note that the weights according to the Shapley matrix $\A$, besides the total amount of players $\numplayers$, depend only on the size of a coalition $S$, and whether the player $j$ is in $S$ or not. More specifically, in the first sum, the indicator $[j\in S]$ restricts the sum to coalitions where $j$ is in $S$, and in the second sum to coalitions where $j$ is not in $S$. Consequently, both sums can be grouped by coalition sizes $k$, with identical weights for each group, and we obtain the following reformulation:
\begin{align*}
    (\A \KxiZz)_j
    = &\sum_{k=1}^p \frac{1}{p\binom{p-1}{k-1}}\sum_{\substack{S\subseteq\pset:\\ |S|=k,\ j\in S}}\,\kxi(\IndS, \zi) \\
    &\;-\;\sum_{k=0}^{p-1} \frac{1}{p\binom{p-1}{k}}\sum_{\substack{S\subseteq\pset:\\ |S|=k,\ j\notin S}}\,\kxi(\IndS, \zi).
\end{align*}
Thereby, for the former sum, where $j\in S$, only sizes $k\ge 1$ are possible, and for the latter sum, only sizes $k\le p-1$ are possible.
\paragraph{Lemma 1 - Computing sums over coalitions of size $k$.} \label{par:ef_eig_ic_thm1_par2}
Consider $\kxi(\IndS, \zi)$ for a candidate coalition $\zi \in \{0,1\}^\numplayers$. We now define the helper variables $\boldsymbol{\gamma}= (\gamma_1, \dots, \gamma_p)^{\top} \in \Rone{\numplayers}$ and $\boldsymbol{\delta}= (\delta_1, \dots, \delta_p)^{\top} \in \Rone{\numplayers}$ for each $j$ as follows:
$$
\gamma_j = \begin{cases} \alpha_j & \zi_j=0\\ \beta_j  & \zi_j=1 \end{cases},
\qquad
\delta_j = \begin{cases} \beta_j  & \zi_j=0\\ \alpha_j & \zi_j=1 \end{cases}.
$$
Here, given a specific $\zi$, $\gamma_j$ corresponds to the $j$-th multiplicative term of the kernel value $\kxi(\IndS, \zi)$ in the case that $j \notin S$, while $\delta_j$ is the $j$-th multiplicative term when $j \in S$. For a fixed $\zi$, this reduces the $j$-th multiplicative term of $\kxi(\IndS, \zi)$ to two possible values and enables a rewriting that depends only on the coalition $S$:
\begin{align*}
    k_{\xi,\zi}(\IndS)
    &= \kxi(\IndS, \zi) \\
    &= \prod_{j\notin S}\gamma_j \prod_{j\in S} \delta_j.
\end{align*}

Now consider the following generating polynomial in its factorised and expanded form respectively:
$$
P(\zeta)=\prodjp (\gamma_j + \delta_j \zeta) = \sum_{k=0}^p c_k \zeta^k.
$$
This constitutes a univariate polynomial in $\zeta$ of degree $\numplayers$, with coefficients $c_k$ for $k=0, \dots, \numplayers$. Notably, the coefficient $c_k$ for the $k$-th power of $\zeta$ corresponds to the sum of all kernel values $\kxi(\IndS, \zi)$ for coalitions $S$ of size $k$:
\begin{align*}
    c_k= \sum_{\substack{S\subseteq\pset:\\ |S|=k}}\,\kxi(\IndS, \zi).
\end{align*}
\begin{proof}
Expanding the factorised polynomial means that, from each factor $(\gamma_j + \delta_j \zeta)$ for $j=1, ..., \numplayers$, one chooses exactly one of the two summands, multiplies all chosen summands and sums over all possible choices for such monomials. Each such choice pattern is naturally encoded by a coalition $S\subseteq\pset$, where
$$
S=\{j \in \pset : \text{One chooses the term }\delta_j \zeta\text{ in factor }j\}.
$$
Equivalently, for a given $S$, one chooses $\delta_j \zeta$ if $j\in S$, and $\gamma_j$
otherwise. This produces the monomial
$$
\Bigl(\prod_{j\notin S}\gamma_j \prodjinS \delta_j\Bigr) \zeta^{|S|}.
$$
Grouping all possible monomials by their degree $k$ (i.e., by the size of the corresponding coalition $S$) and summing over their coefficients gives the coefficient $c_k$ of $\zeta^k$:
\begin{align*}
  c_k= &\sum_{\substack{S\subseteq\pset:\\ |S|=k}}
  \Bigl(\prod_{j\notin S}\gamma_j \prodjinS \delta_j\Bigr) \\
  &=\sum_{\substack{S\subseteq\pset:\\ |S|=k}}\,\kxi(\IndS, \zi).
\end{align*}
This is exactly the sum of all kernel values $\kxi(\IndS, \zi)$ for coalitions $S$ of size $k$, as claimed.
\end{proof}
Note that the polynomial $P(\zeta)$ can equivalently be rewritten as a generating polynomial whose coefficients are scaled ESPs:
\begin{align*}
  P(\zeta)
  &=\prodjp (\gamma_j + \delta_j \zeta) \\
  &=\Bigl(\prod_{j=1}^p\gamma_j\Bigr) \prod_{j=1}^p\Bigl(1+\frac{\delta_j}{\gamma_j}\zeta\Bigr) \\
  &=\sum_{k=0}^p \Bigl(\prod_{j=1}^p\gamma_j\Bigr) e_k \Bigl(\boldsymbol{\delta} \oslash \boldsymbol{\gamma}\Bigr) \zeta^k.
\end{align*}
Here $e_k(\boldsymbol{\delta} \oslash\boldsymbol{\gamma})$ denotes the $k$-th ESP in the variables $\boldsymbol{\delta} \oslash \boldsymbol{\gamma} \in \Rone{\numplayers}$, with $\oslash$ denoting the element-wise division, and the product over $\boldsymbol{\gamma}$ constitutes the associated scaling.
\paragraph{Lemma 2 - Computing sums over coalitions of size $k$ containing a player $j$.} \label{par:ef_eig_ic_thm1_par3}
Consider the following generating polynomial, which is obtained by dividing $P(\zeta)$ by the factor corresponding to a player $j\in\pset$, in its factorised and expanded form respectively:
$$
Q_j(\zeta) = \frac{P(\zeta)}{(\gamma_j+\delta_j \zeta)}=\prod_{r\in\pset\setminus\{j\}}(\gamma_r+\delta_r \zeta) = \sum_{k=0}^{p-1} d^{(j)}_k \zeta^k.
$$
This constitutes a univariate polynomial in $\zeta$ of degree $\numplayers-1$, with coefficients $d^{(j)}_k$ for $k=0, ...,\numplayers-1$. Notably, for $k\in\{1,\dots,p\}$, the coefficient $d^{(j)}_{k-1}$ for the $(k-1)$-th power of $\zeta$ multiplied by $\delta_j$ corresponds to the sum of all kernel values $\kxi(\IndS, \zi)$ for coalitions $S$ of size $k$ that contain the player $j$:
\begin{align*}
    d^{(j)}_{k-1} \delta_j
    =\sum_{\substack{S\subseteq\pset:\\ |S|=k,\ j\in S}}\,\kxi(\IndS, \zi).
\end{align*}
\begin{proof}
Similar to Lemma 1, expanding the factorised polynomial $Q_j(\zeta)$ means that, for each $r\in\pset\setminus\{j\}$, one chooses exactly one of the two summands, multiplies all chosen summands and sums over all possible choices for such monomials. Each such choice pattern is naturally encoded by a coalition $T\subseteq\pset\setminus\{j\}$, where
$$T=\{r \in \pset\setminus\{j\} : \text{One chooses the term }\delta_r \zeta\text{ in factor }r\}.$$
Equivalently, for a given $T$, one chooses $\delta_r \zeta$ if $r\in T$, and $\gamma_r$ otherwise. This produces the monomial
$$
\Bigl(\prod_{r\notin T,\ r\neq j}\gamma_r \prod_{r\in T}\delta_r\Bigr) \zeta^{|T|}.
$$
Grouping all possible monomials by their degree $k-1$ (i.e., by the size of the corresponding coalition $T$) and summing over their coefficients gives the coefficient $d^{(j)}_{k-1}$ of $\zeta^{k-1}$:
\begin{align*}
  d^{(j)}_{k-1}
  =&\sum_{\substack{T\subseteq\pset\setminus\{j\}:\\ |T|=k-1}} \Bigl(\prod_{r\notin T,\ r\neq j}\gamma_r \prod_{r\in T}\delta_r\Bigr).
\end{align*}
Now map such $T$ to a coalition $S=T\cup\{j\}$, so $|S|=k$ and $j\in S$. 
Summing over the coefficients of the monomials for all such $S$ (equivalently, over all $T\subseteq\pset\setminus\{j\}$ with $|T|=k-1$) gives:
\begin{align*}
    \sum_{\substack{S\subseteq\pset:\\ |S|=k,\ j\in S}}\,\kxi(\IndS, \zi)
    &= \sum_{\substack{S\subseteq\pset:\\ |S|=k,\ j\in S}}\, \Bigl(\prod_{r\notin S}\gamma_r \prod_{r\in S}\delta_r\Bigr) \\
    &= \sum_{\substack{T\subseteq\pset\setminus\{j\}:\\ |T|=k-1}} \Bigl(\prod_{r\notin T,\ r\neq j}\gamma_r \prod_{r\in T}\delta_r\Bigr) \delta_j \\
    &= d^{(j)}_{k-1} \delta_j.
\end{align*}
This is exactly the sum of all kernel values $\kxi(\IndS, \zi)$ for coalitions $S$ of size $k$ that contain the player $j$, as claimed.
\end{proof}
\paragraph{Lemma 3 - Computing sums over coalitions of size $k$ not containing a player $j$.} \label{par:ef_eig_ic_thm1_par4}

It follows trivially that the sum of all kernel values $\kxi(\IndS, \zi)$ for coalitions $S$ of size $k$ that do not contain the player $j$ can be computed as the difference between the sum of all kernel values for coalitions $S$ of size $k$ (Lemma 1) and the sum of all kernel values for coalitions $S$ of size $k$ that contain the player $j$ (Lemma 2):
\begin{align*}
    \sum_{\substack{S\subseteq\pset:\\ |S|=k,\ j\notin S}}\,\kxi(\IndS, \zi)= c_k - d^{(j)}_{k-1} \delta_j,
\end{align*}
for $k=0, ..., \numplayers-1$. Here we define $d^{(j)}_{-1}:=0$, so that the identity also covers the case $k=0$.
\paragraph{Combining the results.} \label{par:ef_eig_ic_thm1_par5}
Combining the results from the previous paragraphs, we obtain the following identity for the $j$-th entry of $\A \Kxi(\Zlarge, \zi)$, which reduces the expression to weighted sums of size $\numplayers$:
\begin{align*}
    (\A \KxiZz)_j
    = &\sum_{k=1}^p \frac{1}{p\binom{p-1}{k-1}}\sum_{\substack{S\subseteq\pset:\\ |S|=k,\ j\in S}}\,\kxi(\IndS, \zi)
    -\sum_{k=0}^{p-1} \frac{1}{p\binom{p-1}{k}}\sum_{\substack{S\subseteq\pset:\\ |S|=k,\ j\notin S}}\,\kxi(\IndS, \zi) \\
    = &\sum_{k=1}^p \frac{1}{p\binom{p-1}{k-1}}\, \bigl(d^{(j)}_{k-1} \delta_j\bigr)
    -\sum_{k=0}^{p-1}\frac{1}{p\binom{p-1}{k}}\,\bigl(c_k - d^{(j)}_{k-1} \delta_j \bigr).
\end{align*}
\end{proof}
\paragraph{Implementation.}
To compute the coefficients of $P(\zeta)$ we propose the following approach: For a given $\zi$, first compute the helper variables $\boldsymbol{\gamma}$ and $\boldsymbol{\delta}$ from the kernel values $(\alpha_j, \beta_j)$ for $j=1, ..., \numplayers$. This costs $\mathcal{O}(p)$. The coefficients $c_k$ for $k=0, ..., \numplayers$ can then be obtained by the following standard degree-by-degree dynamic program: Initialize
$$
c^{(0)}_0=1,\qquad c^{(0)}_k=0\ \ (k\ge 1),
$$
and in iterations $j=1,\dots,p$ update
$$
c^{(j)}_k=\gamma_j\,c^{(j-1)}_k+\delta_j\,c^{(j-1)}_{k-1}\qquad (k=0,\dots,j),
$$
with the convention $c^{(j-1)}_{-1}=0$. This computes all coefficients in $\mathcal{O}(\numplayers^2)$ time.
Given the coefficients $c_k$ of $P(\zeta)$, we can compute the coefficients of $Q_j(\zeta)$ for each $j=1, ..., \numplayers$ by dividing the degree-$\numplayers$ polynomial $P(\zeta)$ by the linear factor $(\gamma_j+\delta_j \zeta)$ via synthetic division. This scales as $\mathcal{O}(p)$ per $j$, and thus $\mathcal{O}(\numplayers^2)$ for all $j$.
Based on $P(\zeta)$ and $Q_j(\zeta)$ for each $j=1, ..., \numplayers$, the entire term $\A \Kxi(\Zlarge, \zi)$ is computable in $\mathcal{O}(\numplayers^2)$.

Alternatively, for a numerically more stable implementation, one can avoid polynomial division and obtain the required quantities by convolution of coefficients of prefix- and suffix-polynomials.
\newpage
\begin{theorem} \label{app:ef_eig_ic_thm2}
  The term $\A \KxiZZ \AT \in \Rtwo{\numplayers}{\numplayers}$ is efficiently computable in $\mathcal{O}(\numplayers^4)$ time.
\end{theorem}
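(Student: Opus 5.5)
Write the $(j,j')$ entry as a double sum over coalitions:
\begin{align*}
(\A \KxiZZ \AT)_{j,j'} = \sum_{S,T\subseteq\pset} \A_{j,S}\,\A_{j',T}\,\kxi(\IndS,\IndT).
\end{align*}
As in Theorem \ref{app:ef_eig_ic_thm1}, $\A_{j,S}$ depends only on $|S|$ and on whether $j\in S$, and likewise $\A_{j',T}$ depends only on $|T|$ and on whether $j'\in T$. The plan is to group the double sum by the pair of sizes $(k,l)=(|S|,|T|)$ and by the four membership patterns $[j\in S]$, $[j'\in T]$. This gives
\begin{align*}
(\A \KxiZZ \AT)_{j,j'} = \sum_{k,l=0}^{p} \sum_{a,b\in\{0,1\}} w^{(j)}_{k,a}\, w^{(j')}_{l,b}\, M^{(j,j')}_{k,l,a,b},
\end{align*}
where $w$ are the Shapley weights from Theorem \ref{app:ef_eig_ic_thm1}, and $M^{(j,j')}_{k,l,a,b}$ is the sum of $\kxi(\IndS,\IndT)$ over all pairs $(S,T)$ with $|S|=k$, $|T|=l$, $[j\in S]=a$ and $[j'\in T]=b$.

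The core step is a bivariate analogue of Lemma 1. Because the Hamming kernel factorises, each player $r$ contributes one of four factors, according to $(r\in S, r\in T)$: $\alpha_r$ for the patterns $(0,0)$ and $(1,1)$, and $\beta_r$ for $(0,1)$ and $(1,0)$. I would introduce the generating polynomial
\begin{align*}
P(\zeta,\eta)=\prodjp\bigl(\alpha_r+\beta_r\eta+\beta_r\zeta+\alpha_r\zeta\eta\bigr)=\sum_{k,l}C_{k,l}\,\zeta^k\eta^l .
\end{align*}
By the same choice-pattern argument as in Lemma 1, $C_{k,l}$ equals the sum of kernel values over all pairs with $|S|=k$ and $|T|=l$. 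These are scaled bivariate ESPs. A two-dimensional dynamic program over $r=1,\dots,p$ computes all $(p+1)^2$ coefficients in $\mathcal{O}(p^3)$ time.

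For the membership-restricted sums I would follow Lemmas 2 and 3. First fix $j$ in $S$ and $j'$ in $T$ by removing their factors and multiplying by the selected monomial terms. For $j\neq j'$, this means dividing $P$ by the two factors for $j$ and $j'$ and then multiplying back the appropriate single-term restriction: for $a=b=1$, the factor for $j$ is restricted to its $\zeta$-terms, namely $\beta_j\zeta+\alpha_j\zeta\eta$, and the factor for $j'$ is restricted to its $\eta$-terms, namely $\beta_{j'}\eta+\alpha_{j'}\zeta\eta$. For $j=j'$, the single factor is restricted to the pattern $(1,1)$, giving $\alpha_j\zeta\eta$. The cases $a=0$ or $b=0$ then follow by inclusion–exclusion, exactly as in Lemma 3. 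To avoid numerically unstable bivariate polynomial division, I would instead form the product over $\pset\setminus\{j,j'\}$ from prefix and suffix products, or as a 2D coefficient array.

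The main obstacle is the complexity accounting. There are $p^2$ pairs $(j,j')$. Recomputing a bivariate polynomial with $\mathcal{O}(p^2)$ coefficients for each pair is fine only if each pair costs $\mathcal{O}(p^2)$, which gives $\mathcal{O}(p^4)$ in total. I would achieve this as follows:
\begin{itemize}
\item Precompute, for each $j$, the quotient $P/F_j$ by 2D synthetic division, where $F_j$ is the factor for $j$. Each division costs $\mathcal{O}(p^2)$, so all of them cost $\mathcal{O}(p^3)$.
\item For each $j'$, divide again by $F_{j'}$ at cost $\mathcal{O}(p^2)$, which gives $\mathcal{O}(p^4)$ in total. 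Dividing by $(\alpha+\beta\eta)+\zeta(\beta+\alpha\eta)$ is a univariate division in $\zeta$ whose coefficients are polynomials in $\eta$, so I need to check that it can be carried out with $\mathcal{O}(1)$ work per coefficient. If that fails, I would fall back to dividing out by recursion in the $\zeta$-degree using the known lower-degree coefficients.
\item Multiply by the restricted monomials in $\mathcal{O}(p^2)$.
\item Contract the result with the weights $w^{(j)}_{k,a}w^{(j')}_{l,b}$ in $\mathcal{O}(p^2)$.
\end{itemize}
Summing over the $p^2$ pairs gives the claimed $\mathcal{O}(p^4)$.
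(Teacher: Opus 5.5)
Your proposal is correct. The setup matches the paper's: grouping by $(|S|,|T|)$ and membership pattern, the bivariate generating polynomial with local factor $\alpha_r(1+\zeta\eta)+\beta_r(\zeta+\eta)$, and restricted local factors for the two fixed players, which are the paper's Lemmas 4--6. Where you differ is in how you reach the $\mathcal{O}(p^2)$ cost per pair.

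You form the leave-two-out coefficient table explicitly by exact 2D division, and the check you flag does go through. Write $F_j=A(\eta)+\zeta B(\eta)$ with $A=\alpha_j+\beta_j\eta$ and $B=\beta_j+\alpha_j\eta$, and $P=\sum_k c_k(\eta)\zeta^k$. The quotient coefficients satisfy $q_k=(c_k-Bq_{k-1})/A$. Each step is one multiplication by a linear polynomial plus one exact univariate synthetic division in $\eta$, so it costs $\mathcal{O}(p)$ per $\zeta$-coefficient and $\mathcal{O}(p^2)$ per division.

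The paper never forms $G_{\setminus\{i,j\}}$ at all. It writes each entry as 16 bilinear forms $\mathbf{w}_{\text{left}}^\top G_{\setminus\{i,j\}}\mathbf{w}_{\text{right}}$ (4 when $i=j$) and factors $G_{\setminus\{i,j\}}$ as a convolution of prefix and suffix tables. It then absorbs the weight vectors into the prefix (along the $\zeta_1$ index) and the suffix (along the $\zeta_2$ index). These contracted tables obey $\mathcal{O}(p^2)$ shift recurrences, so one backward and one forward sweep per row $i$ costs $\mathcal{O}(p^3)$.

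Your route is shorter and reuses the univariate synthetic-division idea. The paper's route is division-free, which addresses the same stability concern the paper raises for the univariate case.

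Your stability fallback, building $\prod_{r\neq j,j'}$ from prefix and suffix products, would not by itself stay within $\mathcal{O}(p^2)$ per pair. Convolving two tables of size $\mathcal{O}(p^2)$ is far costlier than that. Contracting the Shapley weights into the tables is exactly the extra idea the paper needs to make that route work; it notes that sweeping uncontracted tables yields $\mathcal{O}(p^5)$.
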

\begin{proof}
In the following, we will consider the matrix-valued quadratic function $\A \KxiZZ \AT \in \Rtwo{\numplayers}{\numplayers}$, where $\A \in \Rtwo{\numplayers}{\twop}$ and $\KxiZZ \in \Rtwo{\twop}{\twop}$. Initially, we will show how each entry of this matrix can be rewritten as a weighted sum of kernel evaluations across coalition pairs with weights according to the Shapley matrix $\A$. We then recognize that the additive kernel evaluations associated with certain groups of coalition pairs share the same weights. We will then show how inner sums of kernel values for these groups can be computed more efficiently by identifying them with scaled, bivariate ESPs. In detail, we will show how the sums of kernel values over all coalition pairs of size $(a,b)$ can be computed efficiently, and how the sums over coalition pairs of sizes $(a,b)$ (not) containing players $i$ and $j$ can be computed for $i \neq j$ and for $i=j$ respectively. Lastly, we will combine everything and propose an implementation to compute the complete term in $\mathcal{O}(\numplayers^4)$ time.
\paragraph{Rewriting entries of $\A \KxiZZ \AT $ as sums of kernel evaluations for coalition pair groups of identical size.}
Consider the $(i,j)$ entry of $\A \KxiZZ \AT$, where $i,j\in\pset$, which corresponds to the covariance between the SVs of players $i$ and $j$:
\begin{align*}
  (\A \KxiZZ \AT)_{i,j}
  &= \sum_{S,T\subseteq\pset} \A_{i,S} \, \A_{j,T} \, \kxi(\IndS, \IndT).
\end{align*}
By plugging in the definition of the Shapley matrix $\A$ we equivalently obtain
\begin{align*}
  (\A \KxiZZ \AT)_{i,j}
  &= 
  \sum_{S,T\subseteq\pset} \frac{1}{\numplayers^2} 
  \frac{[i\in S]}{\binom{\numplayers-1}{|S|-1}} \frac{[j\in T]}{\binom{\numplayers-1}{|T|-1}} \, \kxi(\IndS, \IndT) \\
  &-
  \sum_{S,T\subseteq\pset} \frac{1}{\numplayers^2} 
  \frac{[i\in S]}{\binom{\numplayers-1}{|S|-1}} \frac{[j\notin T]}{\binom{\numplayers-1}{|T|}} \, \kxi(\IndS, \IndT) \\
  &-
  \sum_{S,T\subseteq\pset} \frac{1}{\numplayers^2} 
  \frac{[i\notin S]}{\binom{\numplayers-1}{|S|}} \frac{[j\in T]}{\binom{\numplayers-1}{|T|-1}} \, \kxi(\IndS, \IndT) \\
  &+
  \sum_{S,T\subseteq\pset} \frac{1}{\numplayers^2} 
  \frac{[i\notin S]}{\binom{\numplayers-1}{|S|}} \frac{[j\notin T]}{\binom{\numplayers-1}{|T|}} \, \kxi(\IndS, \IndT).
\end{align*}
This reflects four possible cases based on whether $i$ is in $S$ or not, and whether $j$ is in $T$ or not. Note that the weights, besides the total amount of players $\numplayers$, only depend on the sizes of the coalitions $S$ and $T$, and whether they contain the players $i$ and $j$ respectively. Consequently, the double sums can be grouped by coalition pair sizes $(a,b)=(|S|,|T|)$ and contained players $i$ and $j$, with identical weights for each group. This gives the following reformulation:
\begin{align*}
  (\A \KxiZZ \AT)_{i,j}
  &= 
  \sum_{a=1}^p \sum_{b=1}^p
  \frac{1}{\numplayers^2 \, \binom{\numplayers-1}{a-1} \, \binom{\numplayers-1}{b-1}} \,
  \Bigl( \sum_{\substack{S,T\subseteq\pset:\\ |S|=a, \, |T|=b, \, i\in S, \, j\in T}} \, 
  \kxi(\IndS, \IndT) \Bigr) \\
  &-
  \sum_{a=1}^p \sum_{b=0}^{p-1}
  \frac{1}{\numplayers^2 \, \binom{\numplayers-1}{a-1} \, \binom{\numplayers-1}{b}} \,
  \Bigl( \sum_{\substack{S,T\subseteq\pset:\\ |S|=a, \, |T|=b, \, i\in S, \, j\notin T}} \, 
   \kxi(\IndS, \IndT) \Bigr) \\
  &-
  \sum_{a=0}^{p-1} \sum_{b=1}^p
  \frac{1}{\numplayers^2 \, \binom{\numplayers-1}{a} \, \binom{\numplayers-1}{b-1}} \,
  \Bigl( \sum_{\substack{S,T\subseteq\pset:\\ |S|=a, \, |T|=b, \, i\notin S, \, j\in T}} \, 
   \kxi(\IndS, \IndT) \Bigr) \\
  &+
  \sum_{a=0}^{p-1} \sum_{b=0}^{p-1}
  \frac{1}{\numplayers^2 \, \binom{\numplayers-1}{a} \, \binom{\numplayers-1}{b}} \,
  \Bigl( \sum_{\substack{S,T\subseteq\pset:\\ |S|=a, \, |T|=b, \, i\notin S, \, j\notin T}} \, 
  \kxi(\IndS, \IndT) \Bigr).
\end{align*}
Thereby, whenever $i \in S$ (first and second sum), only sizes $a\ge 1$ are possible, and whenever $i\notin S$ (third and fourth sum), only sizes $a\le p-1$ are possible. Similarly, whenever $j \in T$ (first and third sum), only sizes $b\ge 1$ are possible, and whenever $j \notin T$ (second and fourth sum), only sizes $b\le p-1$ are possible.
\paragraph{Lemma 4 - Computing sums over coalition pairs of size $(a,b)$.}
Consider the multiplicative kernel $\kxi$ evaluated on two coalitions $S,T\subseteq\pset$. This gives rise to four cases, where the $r$-th multiplicative term of $\kxi(\IndS, \IndT)$ is $\alpha_r$ if $r$ is in both $S$ and $T$, or in neither of them, and $\beta_r$ if $r$ is in exactly one of the two coalitions:
\begin{align*}
  \kxi(\IndS, \IndT)
  &= 
  \prod_{\substack{r\in\pset:\\ r\in S, \, r\in T}} \alpha_r 
  \prod_{\substack{r\in\pset:\\ r\in S, \, r\notin T}} \beta_r 
  \prod_{\substack{r\in\pset:\\ r\notin S, \, r\in T}} \beta_r 
  \prod_{\substack{r\in\pset:\\ r\notin S, \, r\notin T}} \alpha_r.
\end{align*}
Now consider the following generating polynomial in its factorised and expanded form respectively, where in the factorised form, for each coordinate $r$, the local factor $f_r(\zeta_1,\zeta_2):= \alpha_r (1 + \zeta_1\zeta_2) + \beta_r (\zeta_1 + \zeta_2)$ corresponds to the four cases of whether $r$ is in $S$ and $T$ or not:
\begin{align*}
  F(\zeta_1,\zeta_2)
  &= \prod_{r\in\pset} f_r(\zeta_1,\zeta_2) \\
  &= \prod_{r\in\pset} \big( \alpha_r (1 + \zeta_1\zeta_2) + \beta_r (\zeta_1 + \zeta_2) \big) \\
  &= \sum_{a=0}^p \sum_{b=0}^p G(a,b) \, \zeta_1^a \zeta_2^b.
\end{align*}
This constitutes a generating polynomial whose coefficients are scaled, generalized bivariate ESPs.
For the cases where $r$ is in both $S$ and $T$, or in neither of them, both contribute the same multiplicative term $\alpha_r$, and in the cases where $r$ is in exactly one of the two coalitions, both contribute the same multiplicative term $\beta_r$. This is a bivariate polynomial in $\zeta_1$ and $\zeta_2$ of degree $\numplayers$ in each variable, with coefficients $G(a,b)$ for $a,b=0, ..., \numplayers$. Notably, the coefficient $G(a,b)$ for the monomial $\zeta_1^a \zeta_2^b$ corresponds to the sum of all kernel values $\kxi(\IndS, \IndT)$ for coalition pairs $(S,T)$ of size $(a,b)$:
\begin{align*}
  G(a,b)
  &= \sum_{\substack{S,T\subseteq\pset:\\ |S|=a, \, |T|=b}} \, \kxi(\IndS, \IndT).
\end{align*}
\begin{proof}
Expanding the factorised polynomial $F(\zeta_1,\zeta_2)$ means that, for each $r=1, ..., \numplayers$, one chooses exactly one of the four summands in the local factor $f_r(\zeta_1,\zeta_2)$, multiplies all chosen summands and sums over all possible choices for such monomials. Each such choice pattern is naturally encoded by a coalition pair $(S,T)$, where
$$S=\{r \in \pset : \text{One chooses the term }\beta_r \zeta_1 \text{ or } \alpha_r \zeta_1\zeta_2 \text{ in factor }r\},$$
and
$$T=\{r \in \pset : \text{One chooses the term }\beta_r \zeta_2 \text{ or } \alpha_r \zeta_1\zeta_2 \text{ in factor }r\}.$$
Grouping all possible monomials by their degree $(a,b)$ (i.e., by the size of the corresponding coalition pair $(S,T)$) and summing over their coefficients gives the coefficient $G(a,b)$ of $\zeta_1^a \zeta_2^b$. This is exactly the sum of all kernel values $\kxi(\IndS, \IndT)$ for coalition pairs $(S,T)$ of size $(a,b)$, as claimed.
\end{proof}
\paragraph{Lemma 5 - Computing sums over coalition pairs of size $(a,b)$ (not) containing players $i$ and $j$ for $i\neq j$.}
Now consider the sum of all kernel values $\kxi(\IndS, \IndT)$ for coalition pairs $(S,T)$ of size $(a,b)$ that contain (or do not contain) the players $i$ and $j$ respectively. To this end, we introduce the following local factors, fixing membership of a player $r$ in $S$ and $T$ respectively:
\begin{align*}
f_r^{S\in}(\zeta_1,\zeta_2) &= \alpha_r \zeta_1\zeta_2 + \beta_r \zeta_1 = \zeta_1(\alpha_r \zeta_2 + \beta_r), \\
f_r^{S\notin}(\zeta_1,\zeta_2) &= \alpha_r + \beta_r \zeta_2, \\
f_r^{T\in}(\zeta_1,\zeta_2) &= \alpha_r \zeta_1\zeta_2 + \beta_r \zeta_2 = \zeta_2(\alpha_r \zeta_1 + \beta_r), \\
f_r^{T\notin}(\zeta_1,\zeta_2) &= \alpha_r + \beta_r \zeta_1.
\end{align*}
Specifically, the first local factor $f_r^{S\in}(\zeta_1,\zeta_2)$ corresponds to the case that $r$ is in $S$. This fixes $\zeta_1$ as a multiplicative term, and the remaining term reflects the two cases of whether $r$ is in $T$ or not. In the former case $\zeta_1$ is multiplied by $\alpha_r \zeta_2$, as $r$ is contained in both $S$ and $T$, while in the latter case $\zeta_1$ is multiplied by $\beta_r$, as $r$ is contained in $S$ but not in $T$. The remaining local factors are defined analogously for the cases that $r$ is not in $S$, $r$ is in $T$, and $r$ is not in $T$. 

Now, consider the following generating polynomial in its factorised and expanded form respectively, where $u,v\in\{0,1\}$ indicate for the players $i$ and $j$ whether $i \in S$ and $j \in T$ respectively or not, and we assume the case $i\neq j$:
\begin{align*}
F^{(u,v)}_{i,j}(\zeta_1,\zeta_2)
&= \prod_{r\in\pset\setminus\{i,j\}} f_r(\zeta_1,\zeta_2) \\
& \, \cdot \Bigl(\,u\,f_i^{S\in}(\zeta_1,\zeta_2) + (1-u)\,f_i^{S\notin}(\zeta_1,\zeta_2)\Bigr) \\
& \, \cdot \Bigl(\,v\,f_j^{T\in}(\zeta_1,\zeta_2) + (1-v)\,f_j^{T\notin}(\zeta_1,\zeta_2)\Bigr) \\
&= \sum_{a=u}^{(\numplayers+u-1)} \sum_{b=v}^{(\numplayers+v-1)} G^{(u,v)}_{i,j}(a,b) \, \zeta_1^a \zeta_2^b.
\end{align*}
In the factorised form, for each player $r\in\pset\setminus\{i,j\}$, the local factor $f_r(\zeta_1,\zeta_2)$ corresponds to the four cases of whether $r$ is in $S$ and $T$ or not, as introduced in Lemma 4. For the players $i$ and $j$, the local factors are defined according to fixed membership of $i \in S$ and $j \in T$ respectively, as indicated by the variables $u,v$. This is a bivariate polynomial in $\zeta_1$ and $\zeta_2$ of degree $\numplayers$ in each variable, with coefficients $G^{(u,v)}_{i,j}(a,b)$ for $a=u, \dots, (\numplayers+u-1)$ and $b=v, \dots, (\numplayers+v-1)$.
Notably, the coefficient $G^{(u,v)}_{i,j}(a,b)$ for the monomial $\zeta_1^a \zeta_2^b$ corresponds to the sum of all kernel values $\kxi(\IndS, \IndT)$ for coalition pairs $(S,T)$ of size $(a,b)$ that contain (or do not contain) the players $i$ and $j$ respectively according to $u$ and $v$:
\begin{align*}
G^{(u,v)}_{i,j}(a,b)
\!= 
\sum_{\substack{S,T\subseteq\pset:\\ |S|=a,\ |T|=b}}
[i\in S]^u \, [i\notin S]^{1-u} \,
[j\in T]^v \, [j\notin T]^{1-v} \,
\kxi(\IndS,\IndT).
\end{align*}
\begin{proof}
This follows trivially from the same reasoning as in the previous lemmas.
\end{proof}
\paragraph{Lemma 6 - Computing sums over coalition pairs of size $(a,b)$ (not) containing players $i$ and $j$ for $i=j$.}
Consider the case that $i=j$. This gives rise to the following generating polynomial in its factorised and expanded form respectively, where the indicator variables $u,v\in\{0,1\}$ indicate whether $i \in S$ and $i \in T$ respectively or not:
\begin{align*}
F^{(u,v)}_{i,i}(\zeta_1,\zeta_2)
&= \prod_{r\in\pset\setminus\{i\}} f_r(\zeta_1,\zeta_2) \\
& \, \cdot \Bigl( 
  [uv] \, \alpha_i \zeta_1\zeta_2 + 
  [u(1-v)] \, \beta_i \zeta_1 + 
  [(1-u)v] \, \beta_i \zeta_2 + 
  [(1-u)(1-v)] \, \alpha_i \Bigr) \\
&= \sum_{a=u}^{(\numplayers+u-1)} \sum_{b=v}^{(\numplayers+v-1)} G^{(u,v)}_{i,i}(a,b) \, \zeta_1^a \zeta_2^b.
\end{align*}
This is constructed similarly to the previous case where $i \neq j$ (Lemma 5), but now the local factor for player $i$ corresponds to the four cases of whether $i$ is in $S$ and $T$ or not, as indicated by the variables $u,v$. As opposed to the previous case, for each value of $(u,v)$, this only gives rise to one possible local factor for player $i$. In contrast, in the case $i\neq j$, each value of $(u,v)$ only dictates whether $i \in S$ and whether $j \in T$, but not whether $i \in T$ and whether $j \in S$, thus giving rise to four possible local factors.

Notably, the coefficient $G^{(u,v)}_{i,i}(a,b)$ for the monomial $\zeta_1^a \zeta_2^b$ corresponds to the sum of all kernel values $\kxi(\IndS, \IndT)$ for coalition pairs $(S,T)$ of size $(a,b)$ that contain (or do not contain) the player $i$ according to $u$ and $v$:
\begin{align*}
G^{(u,v)}_{i,i}(a,b)
\!= 
\sum_{\substack{S,T\subseteq\pset:\\ |S|=a,\ |T|=b}}
[i\in S]^u \, [i\notin S]^{1-u} \,
[i\in T]^v \, [i\notin T]^{1-v} \,
\kxi(\IndS,\IndT).
\end{align*}
\begin{proof}
This follows trivially from the same reasoning as in the previous lemmas.
\end{proof}
\paragraph{Combining the results.}
Combining the results from the previous lemmas, we obtain the following identity for the $(i,j)$ entry of $\A \KxiZZ \AT$:
\begin{align*}
  (\A \KxiZZ \AT)_{i,j}
  &= 
  \sum_{a=1}^p \sum_{b=1}^p
  \frac{1}{\numplayers^2 \, \binom{\numplayers-1}{a-1} \, \binom{\numplayers-1}{b-1}} \,
  \Bigl( G^{(1,1)}_{i,j}(a,b) \Bigr) \\
  &-
  \sum_{a=1}^p \sum_{b=0}^{p-1}
  \frac{1}{\numplayers^2 \, \binom{\numplayers-1}{a-1} \, \binom{\numplayers-1}{b}} \,
  \Bigl( G^{(1,0)}_{i,j}(a,b) \Bigr) \\
  &-
  \sum_{a=0}^{p-1} \sum_{b=1}^p
  \frac{1}{\numplayers^2 \, \binom{\numplayers-1}{a} \, \binom{\numplayers-1}{b-1}} \,
  \Bigl( G^{(0,1)}_{i,j}(a,b) \Bigr) \\
  &+
  \sum_{a=0}^{p-1} \sum_{b=0}^{p-1}
  \frac{1}{\numplayers^2 \, \binom{\numplayers-1}{a} \, \binom{\numplayers-1}{b}} \,
  \Bigl( G^{(0,0)}_{i,j}(a,b) \Bigr).
\end{align*}
\paragraph{Implementation.}
In the following, we will propose an efficient implementation for computing $\A \KxiZZ \AT$. To this end, we will first show that each $(i,j)$ entry can be reformulated as a weighted sum over coefficients of the polynomial containing all remaining factors except for the players $i$ and $j$, and then identify groups of additive terms in those weighted sums with weighted bilinear forms in these coefficient tables. We will then identify the coefficients of this polynomial based on prefix and suffix tables, and extend this to identify the bilinear forms with contractions of the prefix and suffix tables. We then show that these contracted prefix and suffix tables can be efficiently updated between player pairs, which is exploited to propose an algorithm to efficiently compute the entire $\A \KxiZZ \AT$.

\textit{Formulating the $(i,j)$ entry as a weighted sum over coefficients of the polynomial containing the remaining factors:}
Consider the polynomial containing all local factors except for the players $i$ and $j$:
\begin{align*}
F_{\setminus \{i,j\}}(\zeta_1,\zeta_2)
= \prod_{r\in\pset\setminus\{i,j\}} f_r(\zeta_1,\zeta_2)
= \sum_{a=0}^{p-2} \sum_{b=0}^{p-2} G_{\setminus \{i,j\}}(a,b) \, \zeta_1^a \zeta_2^b.
\end{align*}
We collect the coefficients of this polynomial in a coefficient table $G_{\setminus \{i,j\}} \in \Rtwo{\numplayers-1}{\numplayers-1}$, with the $(a,b)$-th entry of the matrix given by $G_{\setminus \{i,j\}}(a,b)$. For the case where $i = j$, this amounts to the following polynomial:
\begin{align*}
F_{\setminus \{i\}}(\zeta_1,\zeta_2)
= \prod_{r\in\pset\setminus\{i\}} f_r(\zeta_1,\zeta_2)
= \sum_{a=0}^{p-1} \sum_{b=0}^{p-1} G_{\setminus \{i\}}(a,b) \, \zeta_1^a \zeta_2^b,
\end{align*}
with the coefficients collected in the table $G_{\setminus \{i\}} \in \Rtwo{\numplayers}{\numplayers}$. Given the coefficients, $G_{\setminus \{i,j\}}(a,b)$ and $G_{\setminus \{i\}}(a,b)$ respectively, the coefficients of the polynomials $F^{(u,v)}_{i,j}(\zeta_1,\zeta_2)$ for all $i,j\in\pset$ and $u,v\in\{0,1\}$, which are required to compute the entries of $\A \KxiZZ \AT$, can easily be obtained as a weighted sum with shifts in the indices according to the local factors of players $i$ and $j$. For instance, for the case where $i \neq j$ and $u=1$ and $v=1$, the coefficient can be obtained as follows:
\begin{align*}
  G^{(1,1)}_{i,j}(a,b)
  &= \alpha_i \alpha_j \, G_{\setminus \{i,j\}}(a-2,b-2) \\
  &+ \alpha_i \beta_j \, G_{\setminus \{i,j\}}(a-1,b-2) \\
  &+ \beta_i \alpha_j \, G_{\setminus \{i,j\}}(a-2,b-1) \\
  &+ \beta_i \beta_j \, G_{\setminus \{i,j\}}(a-1,b-1),
\end{align*}
where out-of-range indices are treated as $0$. 
Plugging this into the formulation of the $(i,j)$ entry of $\A \KxiZZ \AT$ from above, it can be reformulated in the following way for the case that $i \neq j$:
\begin{align*}
  (\A \KxiZZ \AT)_{i,j}
  &= 
  \sum_{a=1}^p \sum_{b=1}^p \frac{1}{\numplayers^2 \, \binom{\numplayers-1}{a-1} \, \binom{\numplayers-1}{b-1}} \,
  \begin{aligned}[t]
  \Bigl( 
      &\alpha_i \alpha_j \, G_{\setminus \{i,j\}}(a-2,b-2) + \alpha_i \beta_j \, G_{\setminus \{i,j\}}(a-1,b-2) \\
      &+ \beta_i \alpha_j \, G_{\setminus \{i,j\}}(a-2,b-1) + \beta_i \beta_j \, G_{\setminus \{i,j\}}(a-1,b-1)
  \Bigr)
  \end{aligned} 
  \\
  &-
  \sum_{a=1}^p \sum_{b=0}^{p-1} \frac{1}{\numplayers^2 \, \binom{\numplayers-1}{a-1} \, \binom{\numplayers-1}{b}} \,
  \begin{aligned}[t]
  \Bigl( 
      &\alpha_i \alpha_j \, G_{\setminus \{i,j\}}(a-1,b-1) + \alpha_i \beta_j \, G_{\setminus \{i,j\}}(a-2,b-1) \\
      &+ \beta_i \alpha_j \, G_{\setminus \{i,j\}}(a-1,b) + \beta_i \beta_j \, G_{\setminus \{i,j\}}(a-2,b)
  \Bigr)
  \end{aligned} 
  \\
  &-
  \sum_{a=0}^{p-1} \sum_{b=1}^p \frac{1}{\numplayers^2 \, \binom{\numplayers-1}{a} \, \binom{\numplayers-1}{b-1}} \,
  \begin{aligned}[t]
  \Bigl( 
      &\alpha_i \alpha_j \, G_{\setminus \{i,j\}}(a-1,b-1) + \alpha_i \beta_j \, G_{\setminus \{i,j\}}(a,b-1) \\
      &+ \beta_i \alpha_j \, G_{\setminus \{i,j\}}(a-1,b-2) + \beta_i \beta_j \, G_{\setminus \{i,j\}}(a,b-2)
  \Bigr)
  \end{aligned} 
  \\
  &+
  \sum_{a=0}^{p-1} \sum_{b=0}^{p-1} \frac{1}{\numplayers^2 \, \binom{\numplayers-1}{a} \, \binom{\numplayers-1}{b}} \,
  \begin{aligned}[t]
  \Bigl( 
      &\alpha_i \alpha_j \, G_{\setminus \{i,j\}}(a,b) + \alpha_i \beta_j \, G_{\setminus \{i,j\}}(a-1,b) \\
      &+ \beta_i \alpha_j \, G_{\setminus \{i,j\}}(a,b-1) + \beta_i \beta_j \, G_{\setminus \{i,j\}}(a-1,b-1)
  \Bigr)
  \end{aligned} 
  \\
  &= 
  \alpha_i \alpha_j \,
  \begin{aligned}[t]
  \Bigl(
    &\sum_{a=1}^p \sum_{b=1}^p \frac{1}{\numplayers^2 \, \binom{\numplayers-1}{a-1} \, \binom{\numplayers-1}{b-1}} \, G_{\setminus \{i,j\}}(a-2,b-2)
    - \sum_{a=1}^p \sum_{b=0}^{p-1} \frac{1}{\numplayers^2 \, \binom{\numplayers-1}{a-1} \, \binom{\numplayers-1}{b}} \, G_{\setminus \{i,j\}}(a-1,b-1) \\
    &- \sum_{a=0}^{p-1} \sum_{b=1}^p \frac{1}{\numplayers^2 \, \binom{\numplayers-1}{a} \, \binom{\numplayers-1}{b-1}} \, G_{\setminus \{i,j\}}(a-1,b-1)
    + \sum_{a=0}^{p-1} \sum_{b=0}^{p-1} \frac{1}{\numplayers^2 \, \binom{\numplayers-1}{a} \, \binom{\numplayers-1}{b}} \, G_{\setminus \{i,j\}}(a,b)
  \Bigr)
  \end{aligned}  \\
  &+  \alpha_i \beta_j \,
  \begin{aligned}[t]
  \Bigl(
    &\sum_{a=1}^p \sum_{b=1}^p \frac{1}{\numplayers^2 \, \binom{\numplayers-1}{a-1} \, \binom{\numplayers-1}{b-1}} \, G_{\setminus \{i,j\}}(a-1,b-2)
     - \sum_{a=1}^p \sum_{b=0}^{p-1} \frac{1}{\numplayers^2 \, \binom{\numplayers-1}{a-1} \, \binom{\numplayers-1}{b}} \, G_{\setminus \{i,j\}}(a-2,b-1) \\
     &- \sum_{a=0}^{p-1} \sum_{b=1}^p \frac{1}{\numplayers^2 \, \binom{\numplayers-1}{a} \, \binom{\numplayers-1}{b-1}} \, G_{\setminus \{i,j\}}(a,b-1)
     + \sum_{a=0}^{p-1} \sum_{b=0}^{p-1} \frac{1}{\numplayers^2 \, \binom{\numplayers-1}{a} \, \binom{\numplayers-1}{b}} G_{\setminus \{i,j\}}(a-1,b) \,
    \Bigr)
  \end{aligned}  \\
  &+  \beta_i \alpha_j \,
  \begin{aligned}[t]
  \Bigl(
    &\sum_{a=1}^p \sum_{b=1}^p \frac{1}{\numplayers^2 \, \binom{\numplayers-1}{a-1} \, \binom{\numplayers-1}{b-1}} \, G_{\setminus \{i,j\}}(a-2,b-1)
    - \sum_{a=1}^p \sum_{b=0}^{p-1} \frac{1}{\numplayers^2 \, \binom{\numplayers-1}{a-1} \, \binom{\numplayers-1}{b}} \, G_{\setminus \{i,j\}}(a-1,b) \\
    &- \sum_{a=0}^{p-1} \sum_{b=1}^p \frac{1}{\numplayers^2 \, \binom{\numplayers-1}{a} \, \binom{\numplayers-1}{b-1}} G_{\setminus \{i,j\}}(a-1,b-2) \,
    + \sum_{a=0}^{p-1} \sum_{b=0}^{p-1} \frac{1}{\numplayers^2 \, \binom{\numplayers-1}{a} \, \binom{\numplayers-1}{b}} G_{\setminus \{i,j\}}(a,b-1) \,
    \Bigr)
  \end{aligned}  \\
  &+  \beta_i \beta_j \,
  \begin{aligned}[t]
  \Bigl(
    &\sum_{a=1}^p \sum_{b=1}^p \frac{1}{\numplayers^2 \, \binom{\numplayers-1}{a-1} \, \binom{\numplayers-1}{b-1}} \, G_{\setminus \{i,j\}}(a-1,b-1)
    - \sum_{a=1}^p \sum_{b=0}^{p-1} \frac{1}{\numplayers^2 \, \binom{\numplayers-1}{a-1} \, \binom{\numplayers-1}{b}} \, G_{\setminus \{i,j\}}(a-2,b) \\
    &- \sum_{a=0}^{p-1} \sum_{b=1}^p \frac{1}{\numplayers^2 \, \binom{\numplayers-1}{a} \, \binom{\numplayers-1}{b-1}} \, G_{\setminus \{i,j\}}(a,b-2)
    + \sum_{a=0}^{p-1} \sum_{b=0}^{p-1} \frac{1}{\numplayers^2 \, \binom{\numplayers-1}{a} \, \binom{\numplayers-1}{b}} \, G_{\setminus \{i,j\}}(a-1,b-1)
    \Bigr).
  \end{aligned}
\end{align*}

This reveals that the $(i,j)$ entry can be obtained as a weighted sum over the coefficient table $G_{\setminus \{i,j\}}$ containing 16 major additive terms. For the case where $i=j$, a similar formulation can be obtained as a weighted sum over the coefficients collected in $G_{\setminus \{i\}}$ containing 4 major additive terms:
\begin{align*}
  (\A \KxiZZ \AT)_{i,i}
  &= \alpha_i \,
  \Bigl(
  \sum_{a=1}^p \sum_{b=1}^p \frac{1}{\numplayers^2 \, \binom{\numplayers-1}{a-1} \, \binom{\numplayers-1}{b-1}} \,
  G_{\setminus \{i\}}(a-1,b-1)
  \Bigr)
  -
  \beta_i \,
  \Bigl(
  \sum_{a=1}^p \sum_{b=0}^{p-1} \frac{1}{\numplayers^2 \, \binom{\numplayers-1}{a-1} \, \binom{\numplayers-1}{b}} \,
  G_{\setminus \{i\}}(a-1,b)
  \Bigr) \\
  &-
  \beta_i \,
  \Bigl(
  \sum_{a=0}^{p-1} \sum_{b=1}^p \frac{1}{\numplayers^2 \, \binom{\numplayers-1}{a} \, \binom{\numplayers-1}{b-1}} \, G_{\setminus \{i\}}(a,b-1)
  \Bigr) 
  + \alpha_i \,
  \Bigl(
  \sum_{a=0}^{p-1} \sum_{b=0}^{p-1} \frac{1}{\numplayers^2 \, \binom{\numplayers-1}{a} \, \binom{\numplayers-1}{b}} \, G_{\setminus \{i\}}(a,b)
  \Bigr)
\end{align*}

\textit{Identifying groups of additive terms with weighted bilinear forms:}
Each of the above major additive terms corresponds to a weighted bilinear form in $G_{\setminus \{i,j\}}$ or $G_{\setminus \{i\}}$ respectively. In particular, for the case where $i \neq j$, all 16 terms are of the form
\begin{align*}
  \mathbf{w}_{\text{left}}^\top \, \big( G_{\setminus \{i,j\}} \big) \, \mathbf{w}_{\text{right}}
   &= \sum_{a=0}^{p-1} \sum_{b=0}^{p-1} \mathbf{w}_{\text{left}, a} \, \cdot G_{\setminus \{i,j\}}(a,b) \, \cdot \mathbf{w}_{\text{right}, b} \\
   &= \sum_{a=0}^{p-1} \sum_{b=0}^{p-1} \mathbf{w}_{\text{left}}(a) \, \cdot G_{\setminus \{i,j\}}(a,b) \, \cdot \mathbf{w}_{\text{right}}(b),
\end{align*}
with $\mathbf{w}_{\text{left}} \in \Rone{\numplayers}$ and $\mathbf{w}_{\text{right}} \in \Rone{\numplayers}$ as the corresponding weight vectors, which can be obtained for each of the 16 terms by collecting the $p$ associated weights, appending zeros for out-of-range indices and reindexing the sum accordingly. For instance, for the first term in the part multiplied by $\alpha_i \alpha_j$, this bilinear form can be obtained as follows:
\begin{align*}
  \sum_{a=1}^p \sum_{b=1}^p \frac{1}{\numplayers^2 \, \binom{\numplayers-1}{a-1} \, \binom{\numplayers-1}{b-1}} \, G_{\setminus \{i,j\}}(a-2,b-2)
  &=
  \sum_{a=0}^{p-1} \sum_{b=0}^{p-1} 
  \frac{1}{\numplayers \, \binom{\numplayers-1}{a+1}} \, 
  G_{\setminus \{i,j\}}(a,b) \,
  \frac{1}{\numplayers \, \binom{\numplayers-1}{b+1}},
\end{align*}
where for ease of notation, we adopt the convention that $\binom{n}{k} = 0$ for $k > n$. Similarly, for the case where $i=j$, all 4 major additive terms can be identified as weighted bilinear forms in $G_{\setminus \{i\}}$.

\textit{Identifying the polynomial coefficients based on prefix and suffix tables:}
In order to obtain the coefficient tables $G_{\setminus \{i,j\}}$ for pairs $(i,j)$, we define for all $i\in\pset$ and $j\in\{0,1,\ldots,\numplayers\}$ the prefix and suffix polynomials containing all local factors except for $i$ and $j$, but before and after $j$ respectively:
\begin{align*}
Pr'_{\setminus \{i,j\}}(\zeta_1,\zeta_2)
= \prod_{r\in \{1, \ldots, j\} \setminus\{i,j\}} f_r(\zeta_1,\zeta_2)
= \sum_{a=0}^{p-1} \sum_{b=0}^{p-1} Pr_{\setminus \{i,j\}}(a,b) \, \zeta_1^a \zeta_2^b,
\end{align*}
\begin{align*}
Su'_{\setminus \{i,j\}}(\zeta_1,\zeta_2)
= \prod_{r\in \{j, \ldots, \numplayers\} \setminus\{i,j\}} f_r(\zeta_1,\zeta_2)
= \sum_{a=0}^{p-1} \sum_{b=0}^{p-1} Su_{\setminus \{i,j\}}(a,b) \, \zeta_1^a \zeta_2^b.
\end{align*}
This yields the associated coefficient tables $Pr_{\setminus \{i,j\}} \in \Rtwo{\numplayers}{\numplayers}$ and $Su_{\setminus \{i,j\}} \in \Rtwo{\numplayers}{\numplayers}$. Then, using the convention that $Pr'_{\setminus \{i,1\}}(\zeta_1,\zeta_2)= Su'_{\setminus \{i,\numplayers\}}(\zeta_1,\zeta_2)= 1$ for all $i\in\pset$, the coefficient table $G_{\setminus \{i,j\}}$ can be obtained as the convolution of the two tables $Pr_{\setminus \{i,j\}}$ and $Su_{\setminus \{i,j\}}$. In particular, the $(a,b)$-th entry of $G_{\setminus \{i,j\}}$ is defined as follows:
\begin{align*}
G_{\setminus \{i,j\}}(a,b)
= \sum_{\substack{a_{1}, a_{2}:\\ a_{1}+a_{2}= a}} \, \sum_{\substack{b_{1}, b_{2}:\\ b_{1}+b_{2}= b}} Pr_{\setminus \{i,j\}}(a_{1},b_{1}) \, Su_{\setminus \{i,j\}}(a_{2},b_{2}).
\end{align*}

For the case where $i=j$, the coefficient table $G_{\setminus \{i\}}$ can trivially be obtained as $Su_{\setminus \{i,0\}}$.

\textit{Identifying the bilinear forms based on contracted prefix and suffix tables:}
Plugging this expression for the coefficient table $G_{\setminus \{i,j\}}$ in the case where $i \neq j$ into the bilinear form expression derived above, it can be reformulated as follows:
\begin{align*}
  \mathbf{w}_{\text{left}}^\top \, \big( G_{\setminus \{i,j\}} \big) \, \mathbf{w}_{\text{right}}
  &= \sum_{a=0}^{p-1} \sum_{b=0}^{p-1} \mathbf{w}_{\text{left}}(a) \, \cdot \mathbf{w}_{\text{right}}(b) \cdot
  \Bigl(
  \sum_{\substack{a_{1}, a_{2}:\\ a_{1}+a_{2}= a}} \, \sum_{\substack{b_{1}, b_{2}:\\ b_{1}+b_{2}= b}} Pr_{\setminus \{i,j\}}(a_{1},b_{1}) \, \cdot Su_{\setminus \{i,j\}}(a_{2},b_{2})
  \Bigr) \\
  &= \sum_{a_{1}=0}^{p-1} \sum_{a_{2}=0}^{p-1} \sum_{b_{1}=0}^{p-1} \sum_{b_{2}=0}^{p-1}
  \begin{aligned}[t]
    \Bigl(
    &[a_{1}+a_{2} \leq p-1] \, \cdot [b_{1}+b_{2} \leq p-1] \\
    & \, \cdot \mathbf{w}_{\text{left}}(a_{1}+a_{2}) \, \cdot \mathbf{w}_{\text{right}}(b_{1}+b_{2}) \, \cdot Pr_{\setminus \{i,j\}}(a_{1},b_{1}) \, \cdot Su_{\setminus \{i,j\}}(a_{2},b_{2})
    \Bigr) 
  \end{aligned} \\
  &= 
  \sum_{a_{2}=0}^{p-1} \, 
  \sum_{b_{1}=0}^{p-1} \, 
  \Bigl( \,
    \sum_{a_{1}=0}^{p-1-a_{2}} \,
    \mathbf{w}_{\text{left}}(a_{1}+a_{2}) \, \cdot
    Pr_{\setminus \{i,j\}}(a_{1},b_{1})
  \Bigr)
  \cdot
  \Bigl( \,
    \sum_{b_{2}=0}^{p-1-b_{1}}\,
    \mathbf{w}_{\text{right}}(b_{1}+b_{2}) \, \cdot
    Su_{\setminus \{i,j\}}(a_{2},b_{2})
  \Bigr) \\
  &= 
  \sum_{a_{2}=0}^{p-1} \, 
  \sum_{b_{1}=0}^{p-1} \, 
  Pr_{\text{contr.}, \setminus \{i,j\}}(a_{2},b_{1}) \, \cdot
  Su_{\text{contr.}, \setminus \{i,j\}}(a_{2},b_{1}).
\end{align*}
Here we denote with $Pr_{\text{contr.}, \setminus \{i,j\}}(a_{2},b_{1}) \in \Rone{}$ and $Su_{\text{contr.}, \setminus \{i,j\}}(a_{2},b_{1}) \in \Rone{}$ the prefix and suffix entries contracted with the weight vectors $\mathbf{w}_{\text{left}}$ and $\mathbf{w}_{\text{right}}$ respectively. These are collected across values of $a_{2}$ and $b_{1}$ in the contracted coefficient tables $Pr_{\text{contr.}, \setminus \{i,j\}} \in \Rtwo{\numplayers}{\numplayers}$ and $Su_{\text{contr.}, \setminus \{i,j\}} \in \Rtwo{\numplayers}{\numplayers}$. This shows that the bilinear forms can be computed by element-wise multiplying the contracted prefix and suffix tables and summing over the entries of the resulting table. Note that the contracted prefix and suffix tables can be computed independently from each other, as they are based on different sets of local factors and weight vectors. In particular, all 16 distinct bilinear forms can be computed based on four unique contracted prefix and four unique contracted suffix tables. 

For the case where $i=j$, the 4 unique bilinear forms can be obtained as follows:
\begin{align*}
  \mathbf{w}_{\text{left}}^\top \, \big( G_{\setminus \{i\}} \big) \, \mathbf{w}_{\text{right}}
  &= \sum_{a=0}^{p-1} \sum_{b=0}^{p-1} \mathbf{w}_{\text{left}}(a) \, \cdot \mathbf{w}_{\text{right}}(b) \cdot
  \Bigl(
  \sum_{\substack{a_{1}, a_{2}:\\ a_{1}+a_{2}= a}} \, \sum_{\substack{b_{1}, b_{2}:\\ b_{1}+b_{2}= b}} Pr_{\setminus \{i,1\}}(a_{1},b_{1}) \, \cdot Su_{\setminus \{i,0\}}(a_{2},b_{2})
  \Bigr) \\
  &= 
  \sum_{a_{2}=0}^{p-1} \, 
  \sum_{b_{1}=0}^{p-1} \, 
  Pr_{\text{contr.}, \setminus \{i,1\}}(a_{2},b_{1}) \, \cdot
  Su_{\text{contr.}, \setminus \{i,0\}}(a_{2},b_{1}).
\end{align*}
This follows trivially as $Su_{\setminus \{i,0\}}$ is equivalent to its convolution with $Pr_{\setminus \{i,1\}}$.

\textit{Efficiently updating the contracted prefix and suffix tables:}
Furthermore, we show that the contracted prefix and suffix tables evolve as simple, linear functions of their previous and proceeding tables respectively. In particular, the entries of $Pr_{\text{contr.}, \setminus \{i,j\}}$ for $j\in\{2,\dots,\numplayers\}$ can be obtained as a linear function of the entries of $Pr_{\text{contr.}, \setminus \{i,j-1\}}$:
\begin{align*}
  Pr_{\text{contr.}, \setminus \{i,j\}}(a_{2},b_{1})
  &= \sum_{a_{1}=0}^{p-1-a_{2}} \, \mathbf{w}_{\text{left}}(a_{1}+a_{2}) \, \cdot Pr_{\setminus \{i,j\}}(a_{1},b_{1}) \\
  &= \sum_{a_{1}=0}^{p-1-a_{2}} \, \mathbf{w}_{\text{left}}(a_{1}+a_{2}) \, \cdot
  \begin{aligned}[t]
    \Bigl( 
    &\alpha_{j-1} \, \cdot Pr_{\setminus \{i,j-1\}}(a_{1},b_{1}) \\
    &+ \beta_{j-1} \, \cdot Pr_{\setminus \{i,j-1\}}(a_{1}-1,b_{1}) \\
    &+ \beta_{j-1} \, \cdot Pr_{\setminus \{i,j-1\}}(a_{1},b_{1}-1) \\
    &+ \alpha_{j-1} \, \cdot Pr_{\setminus \{i,j-1\}}(a_{1}-1,b_{1}-1)
    \Bigr)
  \end{aligned} \\
  &= \alpha_{j-1} \, \cdot Pr_{\text{contr.}, \setminus \{i,j-1\}}(a_{2},b_{1}) \\
  &+ \beta_{j-1} \, \cdot Pr_{\text{contr.}, \setminus \{i,j-1\}}(a_{2}+1,b_{1}) \\
  &+ \beta_{j-1} \, \cdot Pr_{\text{contr.}, \setminus \{i,j-1\}}(a_{2},b_{1}-1) \\
  &+ \alpha_{j-1} \, \cdot Pr_{\text{contr.}, \setminus \{i,j-1\}}(a_{2}+1,b_{1}-1).
\end{align*}
Again, out-of-range indices are treated as $0$. Similarly, the entries of $Su_{\text{contr.}, \setminus \{i,j\}}$ for $j\in\{0,\dots,\numplayers-1\}$ can be obtained as a linear function of the entries of $Su_{\text{contr.}, \setminus \{i,j+1\}}$:
\begin{align*}
  Su_{\text{contr.}, \setminus \{i,j\}}(a_{2},b_{1})
  &= \sum_{b_{2}=0}^{p-1-b_{1}} \, \mathbf{w}_{\text{right}}(b_{1}+b_{2}) \, \cdot Su_{\setminus \{i,j\}}(a_{2},b_{2}) \\
  &= \sum_{b_{2}=0}^{p-1-b_{1}} \, \mathbf{w}_{\text{right}}(b_{1}+b_{2}) \, \cdot
  \begin{aligned}[t]
    \Bigl( 
    &\alpha_{j+1} \, \cdot Su_{\setminus \{i,j+1\}}(a_{2},b_{2}) \\
    &+ \beta_{j+1} \, \cdot Su_{\setminus \{i,j+1\}}(a_{2}-1,b_{2}) \\
    &+ \beta_{j+1} \, \cdot Su_{\setminus \{i,j+1\}}(a_{2},b_{2}-1) \\
    &+ \alpha_{j+1} \, \cdot Su_{\setminus \{i,j+1\}}(a_{2}-1,b_{2}-1)
    \Bigr)
  \end{aligned} \\
  &= \alpha_{j+1} \, \cdot Su_{\text{contr.}, \setminus \{i,j+1\}}(a_{2},b_{1}) \\
  &+ \beta_{j+1} \, \cdot Su_{\text{contr.}, \setminus \{i,j+1\}}(a_{2}-1,b_{1}) \\
  &+ \beta_{j+1} \, \cdot Su_{\text{contr.}, \setminus \{i,j+1\}}(a_{2},b_{1}+1) \\
  &+ \alpha_{j+1} \, \cdot Su_{\text{contr.}, \setminus \{i,j+1\}}(a_{2}-1,b_{1}+1).
\end{align*}

\textit{Proposed algorithm and runtime analysis:}
Based on the above findings, we propose the following algorithm for efficiently computing $\A \KxiZZ \AT$: Overall, for each $i'\in\pset$ independently, we compute the entries of all pairs $(i',j')$ with $j' \leq i'$. Due to symmetry of $\A \KxiZZ \AT$, this gives all entries. In particular, for a certain $i'$, we conduct two phases: A backward and a forward phase. In the initial backward phase, we generate - for each of the 4 different weight vectors - a sequence of contracted suffix tables in reverse order. Specifically, after initializing $Su_{\text{contr.}, \setminus \{i',p\}}$ for each weight vector, we iteratively update the tables in backward order until we obtain $Su_{\text{contr.}, \setminus \{i',0\}}$ across weight vectors. Thereby, we store all the intermediate tables, as they are needed in the subsequent forward phase. Afterwards, in the forward phase, we initialize the first contracted prefix table $Pr_{\text{contr.}, \setminus \{i',1\}}$ across weight vectors and iteratively update the tables in forward order until $Pr_{\text{contr.}, \setminus \{i',i'-1\}}$. At each iteration, the current contracted prefix tables across weight vectors together with the associated contracted suffix tables from the backward phase are used to compute all 16 bilinear forms, which are then weighted and summed up to obtain the corresponding entries of $\A \KxiZZ \AT$. The diagonal entry of $\A \KxiZZ \AT$ for the current $i'$ is computed based on the $Su_{\text{contr.}, \setminus \{i',0\}}$ and $Pr_{\text{contr.}, \setminus \{i',1\}}$ across weight vectors. Note that in the forward phase, it is not necessary to persist the intermediate contracted prefix tables, as they are not needed for future computations.

For each of $p$ independent backward phases, where each is associated with one particular $i'$, the amount of contracted suffix tables scales as $\mathcal{O}(p)$, and the computation of each of those scales as $\mathcal{O}(p^2)$. This is as the size of the contracted suffix tables scales in $\mathcal{O}(p^2)$, and each entry can be computed as a linear function of the shifted entries of the proceeding table. Overall, this amounts to total scaling as $\mathcal{O}(p^3)$ for the backward phase for a certain $i'$. Similarly, for each of $p$ independent forward phases, where each is associated with a certain $i'$, the amount of contracted prefix table updates, based on the preceding tables, scales as $\mathcal{O}(p)$ and the cost for each update also scales as $\mathcal{O}(p^2)$. Furthermore, in each forward phase step, entries of $\A \KxiZZ \AT$ are computed based on the contracted tables by element-wise multiplication and summation, which scales as $\mathcal{O}(p^2)$. Overall, this amounts to total scaling as $\mathcal{O}(p^3)$ for the forward phase for a certain $i'$. In total, the computation of $\A \KxiZZ \AT$ thus scales as $\mathcal{O}(p^4)$.

Note that updating the non-contracted prefix and suffix coefficient tables in the forward and backward phase, as opposed to the contracted ones, would induce computational cost scaling as $\mathcal{O}(p^5)$. This is because the bilinear form computation via explicit convolution - and without efficiently updating the contracted tables - would require an additional weighted summation scaling as $\mathcal{O}(p)$.

\end{proof}
\medskip
\newpage
\begin{theorem} \label{app:ef_eig_ic_thm3}
  The EIG about the SVs $\SVs$ for a candidate coalition $\zi \in \{0,1\}^\numplayers$ (Formula \ref{eq:eig_ref_nonvec} in Section \ref{sec_eig_computation}) is computable in $\mathcal{O}(\numplayers^4 + t^3)$.
\end{theorem}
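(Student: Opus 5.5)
The plan is to expand every quantity in Formula~\ref{eq:eig_ref_nonvec} using the GP posterior covariance from Equation~\ref{app:gp_ppcov}. The aim is to show that the $\twop$-dimensional objects enter only through the two projections $\A \Kxi(\Zlarge,\cdot)$ and $\A \KxiZZ \AT$, which Theorems~\ref{app:ef_eig_ic_thm1} and~\ref{app:ef_eig_ic_thm2} handle in polynomial time. Write $\Xt$ for the matrix of the $t-1$ observed coalitions and $\mathbf{M} := \KxiXtXtnoise$. Equation~\ref{app:gp_ppcov} then gives
\begin{align*}
  \Sigmanubdt = \KxiZZ - \KxiZXt\, \mathbf{M}^{-1} \Kxi(\Xt,\Zlarge).
\end{align*}
The EIG needs three ingredients:
\begin{itemize}
\item the scalar $\eit \Sigmanubdt \ei = \kxi(\zi,\zi) - \KxiXz^{\top}\mathbf{M}^{-1}\KxiXz$;
\item the vector $\ASigma\ei = \A\KxiZz - \mathbf{B}\,\mathbf{M}^{-1}\KxiXz$, where $\mathbf{B} := \A\KxiZXt \in \Rtwo{\numplayers}{t-1}$;
\item the matrix $\ASgimaA = \A\KxiZZ\AT - \mathbf{B}\,\mathbf{M}^{-1}\mathbf{B}^{\top}$.
\end{itemize}
With these, $\Qii$ and the two logarithms in Formula~\ref{eq:eig_ref_nonvec} are obtained directly. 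The constant $C'$ does not depend on $\zi$ and can be ignored.

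I would carry out the steps in the following order and track costs along the way, using the standing assumption $t > \numplayers$.
\begin{enumerate}
\item Build $\mathbf{M}$ in $\mathcal{O}(t^2\numplayers)$ and Cholesky-factorize it in $\mathcal{O}(t^3)$.
\item Compute $\mathbf{B}$ column by column. Each column is $\A\Kxi(\Zlarge,\mathbf{x})$ for one observed coalition $\mathbf{x}$, so Theorem~\ref{app:ef_eig_ic_thm1} gives each column in $\mathcal{O}(\numplayers^2)$ and the whole matrix in $\mathcal{O}(t\numplayers^2)$.
\item Compute $\A\KxiZz$ in $\mathcal{O}(\numplayers^2)$ by Theorem~\ref{app:ef_eig_ic_thm1}, and $\A\KxiZZ\AT$ in $\mathcal{O}(\numplayers^4)$ by Theorem~\ref{app:ef_eig_ic_thm2}.
\item Form $\mathbf{B}\mathbf{M}^{-1}\mathbf{B}^{\top}$. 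This takes $\numplayers$ triangular solves in $\mathcal{O}(t^2\numplayers)$, followed by a product in $\mathcal{O}(t\numplayers^2)$.
\item Form $\KxiXz$ in $\mathcal{O}(t\numplayers)$, solve against it in $\mathcal{O}(t^2)$, and assemble the variance and $\ASigma\ei$ in $\mathcal{O}(t + t\numplayers)$.
\item Cholesky-factorize the $\numplayers\times\numplayers$ matrix $\ASgimaA$ in $\mathcal{O}(\numplayers^3)$, evaluate $\Qii$ as a quadratic form in $\mathcal{O}(\numplayers^2)$, and take the two scalar logarithms.
\end{enumerate}
Summing these gives $\mathcal{O}(\numplayers^4 + t^3 + t^2\numplayers + t\numplayers^2)$. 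Since $t>\numplayers$, we have $t^2\numplayers \le t^3$ and $t\numplayers^2 \le t^3$, so the total collapses to $\mathcal{O}(\numplayers^4 + t^3)$.

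The main obstacle is to make sure that no step silently touches a $\twop$-sized object. Concretely, $\A$ may only ever be applied to kernel columns $\Kxi(\Zlarge,\mathbf{x})$ or to the full kernel matrix $\KxiZZ$ on both sides. This holds because the posterior covariance is the prior kernel matrix minus a low-rank correction that factors through $\KxiZXt$, and $\A$ distributes linearly over both pieces. I would also check that $\ASgimaA$ is invertible. This follows from $\A$ having full row rank together with positive definiteness of the posterior covariance, which the positive definite Hamming kernel and $\sigmanoise>0$ guarantee. Invertibility is what makes the Cholesky step and $\Qii$ well defined.

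Finally, I would remark that the candidate-independent quantities (the factorization of $\mathbf{M}$, the matrix $\mathbf{B}$, $\A\KxiZZ\AT$, and the factorization of $\ASgimaA$) carry the dominant $\mathcal{O}(\numplayers^4 + t^3)$ cost. This observation feeds directly into the vectorized bound stated in the main text.
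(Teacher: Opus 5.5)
Your proposal is correct and follows essentially the same route as the paper. Both expand the posterior covariance and compute $\A\KxiZz$ and $\A\KxiZXt$ column-wise via Theorem~\ref{app:ef_eig_ic_thm1} and $\A\KxiZZ\AT$ via Theorem~\ref{app:ef_eig_ic_thm2}. Both then handle the low-rank correction with Cholesky-based solves and absorb the $t^2\numplayers$ and $t\numplayers^2$ terms using $t>\numplayers$. Your additional remarks on the cost of building $\mathbf{M}$ and on the invertibility of $\ASgimaA$, which follows from $\A$ having full row rank and the posterior covariance being positive definite, are correct; the paper leaves both implicit.
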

\begin{proof}
For clarity, we repeat the derived expression of the EIG about the SVs $\SVs$ for a candidate coalition $\zi \in \{0,1\}^\numplayers$ at iteration $t$:
\begin{align*} 
    \EIGSVt(\zi)
    \propto
    C' +
    \log 
    \Big[
    \eit
    \Big( \Sigmanubdt + \Sigmanoise \Big) 
    \ei
    \Big] \\
    - \log 
    \Big[ 
    \eit \Big(
    \Sigmanubdt
    + \Sigmanoise - \Qform
    \Big) 
    \ei
    \Big].
\end{align*}
Here, $C'$ is constant, $\mathbf{I} \in \Rtwo{2^\numplayers}{2^\numplayers}$ is the identity matrix, and $\Qii= \eit \Qform \ei$, the $i$-th diagonal entry of $\Qform \in \Rtwo{2^\numplayers}{2^\numplayers}$, is defined as
\begin{align} \label{eq:Qiiform}
    \Qii
    &= \big( \ASigma \ei \big)^{\top}
    \big(\ASgimaA \big)^{-1}
    \big( \ASigma \ei \big).
\end{align}
The EIG depends on the marginal posterior variance $\eit \Sigmanubdt \ei= \Varnuzidt \in \Rone{}$ and on $\Qii \in \Rone{}$. While the computation of the former term scales as $\mathcal{O}(t^3)$ (see Subsection \ref{app:background_gps}), naive computation of the EIG is dominated by the latter, which is prohibitively expensive in many settings. We will now show how the two main components of $\Qii$, namely $\ASigma \ei$ and $\ASgimaA$, can be computed efficiently using Theorems \ref{app:ef_eig_ic_thm1} and \ref{app:ef_eig_ic_thm2}, and how this enables an efficient computation of $\Qii$. Here, we denote by $\Xt \in \Rtwo{(t-1)}{\numplayers}$ the matrix containing all previously evaluated coalitions at iteration $t$.

We propose initially to compute $\Varnuzidt$, separately from $\Qii$, as it cannot be obtained as a submatrix of the projected $\ASigma$ and $\ASgimaA$. Note that this already depends on the inverse (noisy) kernel matrix of the training data $\KxiXtXtnoiseinv \in \Rtwo{(t-1)}{(t-1)}$. Thus, the kernel matrix is already decomposed into a Cholesky factor, which takes $\mathcal{O}(t^3)$, and can be reused across further computations within the same iteration. This cost will not be repeatedly stated in the following.

\paragraph{Computation of $\ASigma \ei$.}
Consider the expanded formulation of $\ASigma \ei \in \Rone{\numplayers}$:
\begin{align*}
    \ASigma \ei
    &= \A \KxiZZ \ei - \A \KxiZXt \KxiXtXtnoiseinv \KxiZXt^{\top} \ei \\
    &= \A \KxiZz - \A \KxiZXt \KxiXtXtnoiseinv \KxiXz.
\end{align*}
By Theorem \ref{app:ef_eig_ic_thm1} the first term of the difference $\A \KxiZz \in \Rone{\numplayers}$ can be computed in $\mathcal{O}(\numplayers^2)$, and $\A \KxiZXt \in \Rtwo{\numplayers}{(t-1)}$ from the second term can be computed in $\mathcal{O}(t \cdot \numplayers^2)$. The latter follows as the computation from Theorem \ref{app:ef_eig_ic_thm1} can be applied to each column of the cross-kernel matrix $\KxiZXt \in \Rtwo{\twop}{(t-1)}$ independently. This is admissible as for each row of $\Xt$, i.e. $\Xt_i$ for $i=1, ..., t-1$, it holds that $\Xt_i \in \{0,1\}^\numplayers$. With pre-computed Cholesky factorization for the kernel matrix, the remaining components of the second term can be computed by solving a linear system, which scales as $\mathcal{O}(t^2)$, and a matrix-vector product that scales as $\mathcal{O}(t \cdot \numplayers)$. Taking all operations together, this is $\mathcal{O}(t \cdot \numplayers^2 + t^2)$.

\paragraph{Computation of $\ASgimaA$.}
Consider the expanded formulation of $\ASgimaA \in \Rtwo{\numplayers}{\numplayers}$:
\begin{align*}
    \ASgimaA
    &= \A \KxiZZ \AT - \A \KxiZXt \KxiXtXtnoiseinv \KxiZXt^{\top} \AT.
\end{align*}
By Theorem \ref{app:ef_eig_ic_thm2} the first term of the difference, $\A \KxiZZ \AT \in \Rtwo{\numplayers}{\numplayers}$, can be computed in $\mathcal{O}(\numplayers^4)$. $\A \KxiZXt$ can be reused from the previous step, and thus does not require additional computations. Using the pre-computed Cholesky factorization for the kernel matrix, the remaining components of the second term can be computed by solving a linear system, which scales as $\mathcal{O}(t^2 \cdot \numplayers)$, and a matrix-matrix product that scales as $\mathcal{O}(\numplayers^2 \cdot t)$. In total, this is $\mathcal{O}(\numplayers^4+ t^2 \cdot \numplayers)$ for $\ASgimaA$.

\paragraph{Computation of $\Qii$.}
Given the terms $\ASigma \ei$ and $\ASgimaA$, we propose for a stable and efficient implementation of $\Qii$, applying a Cholesky decomposition to $\ASgimaA$, which scales as $\mathcal{O}(\numplayers^3)$, then solving the associated linear system with the vector $\ASigma \ei$, which scales as $\mathcal{O}(\numplayers^2)$, and lastly computing a dot product in $\mathcal{O}(\numplayers)$.

In summary, the EIG for a single candidate can be computed in $\mathcal{O}(\numplayers^4 + t^3)$, which is polynomial in $\numplayers$.
\end{proof}
\newpage

\subsubsection{Vectorized Computation.} \label{sss:app_eigc_vectorized}
Consider the setting in which the EIG is evaluated for a set of candidate coalitions $\mathbf{W} \subseteq \{0,1\}^\numplayers$, yielding $\EIGSVt(\mathbf{W}):= (\EIGSVt(\zi))_{\zi \in \mathbf{W}} \in \Rone{|\mathbf{W}|}$. The exhaustive evaluation over all coalitions in $\Zlarge$ is recovered as the special case
where $|\mathbf{W}|=2^\numplayers$. For each candidate $\zi \in \{0,1\}^\numplayers$, the EIG expression in Equation \ref{eq:eig_ref_nonvec} consists of quadratic forms of the type $\eit \mathbf{M} \ei$, where $\mathbf{M}$ is independent of the specific candidate index $i$; such terms simply extract the $i$-th diagonal element of the corresponding matrix. Stacking this expression over all candidate coalitions and collecting the resulting scalars therefore amounts to taking the diagonal of the respective matrices restricted to $\mathbf{W}$. This yields the following expression:
\begin{align}
    \EIGSVt(\mathbf{W})
    \propto
    C' +
    \log 
    \Big[
    \diag
    \Big( \big(\Sigmanubdt + \Sigmanoise \big)_{\mathbf{W},\mathbf{W}} \Big)
    \Big] \\
    - \log 
    \Big[ 
    \diag \Big(
    \big(
    \Sigmanubdt
    + \Sigmanoise - \Qform
    \big)_{\mathbf{W},\mathbf{W}}
    \Big) 
    \Big] \notag,
\end{align}
where $(\cdot)_{\mathbf{W},\mathbf{W}}$ denotes restriction to the rows and columns indexed by candidates in $\mathbf{W}$, $\diag(\cdot)$ denotes the diagonal of a square matrix, and $\log(\cdot)$ represents an elementwise logarithm.

In our proposed implementation, we initially compute the marginal variances $\diag\big((\Sigmanubdt)_{\mathbf{W},\mathbf{W}}\big) \in \Rone{|\mathbf{W}|}$. This scales as $\mathcal{O}(|\mathbf{W}| \cdot t^2+ t^3)$ across candidates and already includes the cost of Cholesky-decomposing the training kernel matrix, which can thus be reused in the following and is independent of the specific candidate. All remaining operations for variance computation can be efficiently vectorized across candidates in common computational frameworks. The evaluation of $\diag\big((\Qform)_{\mathbf{W},\mathbf{W}}\big) \in \Rone{|\mathbf{W}|}$ requires the computation and Cholesky decomposition of $\ASgimaA$, which scales as $\mathcal{O}(\numplayers^4+ t^2 \cdot \numplayers)$ and is also independent of the specific candidate and thus reusable. Then, for each candidate, $\ASigma \ei$ must be computed, the associated triangular system of linear equations with $\ASgimaA$ must be solved, and a dot product must be computed. This scales as $\mathcal{O}(|\mathbf{W}| \cdot \numplayers \cdot t + \numplayers \cdot t^2)$ and can be vectorized across candidates.

Overall, the computation across candidates scales as $\mathcal{O}(\numplayers^4 + t^3 + |\mathbf{W}| \cdot t^2)$. Note that the first two additive terms, which dominate the computational cost in many settings, are associated with operations that are independent of the specific candidate and thus scale independently of $|\mathbf{W}|$. All remaining candidate-specific operations can be efficiently vectorized and scale only in $\mathcal{O}(|\mathbf{W}| \cdot t^2)$. Consequently, in many settings, vectorized EIG evaluation incurs only a manageable overhead compared to evaluating a single candidate. This even enables exhaustive EIG optimization across all candidates for small $\numplayers$.

\newpage
\section{Related Work} \label{app:rel_work}
In the following, we provide further details on the related work discussed in Section \ref{sec:rel_work}. In particular, we discuss the relationship to popular approaches from the transductive and prediction-oriented active learning literature.
\paragraph{Information-based transductive learning.}
Information-based transductive learning (ITL; \citealp{mackay1992information}) selects candidates based on the EIG for a set of target function values. In the SV setting, a direct application with the target set chosen as all coalitions corresponds to the EIG for the value function vector $\nub$. At iteration $t$, this amounts to the following optimization problem:
\begin{align*}
  &\argmax_{i \in \mathcal{C}} \, I \big( \nub; \nuziapo \mid \Dt \big) \\
  =&\argmax_{i \in \mathcal{C}} \, \Var(\nuziapo \mid \Dt).
\end{align*}
However, maximizing this criterion collapses to purely exploratory uncertainty sampling in our setting, i.e., selecting the coalition with the highest marginal posterior variance under the GP surrogate \citep{krause2008near,hubotter2024transductive}. This follows from the homoscedastic, i.i.d.\ Gaussian noise assumption. Thus, this criterion does not explicitly account for how an evaluation reduces uncertainty at other coalitions, and hence about the SVs.
\paragraph{Expected predictive information gain.}
The expected predictive information gain (EPIG; \citealp{smith2023prediction}) is another popular criterion from prediction-oriented BED. A natural choice when applying it in the SV estimation setting is a uniform distribution over all coalitions as the target distribution $p_{*}$. At iteration $t$, this amounts to the following optimization problem:
\begin{align*}
  &\argmax_{i \in \mathcal{C}} \, \mathbb{E}_{p_{*}(j)} \big[ I \big( \nu'(\z^{(j)}); \nuziapo \mid \Dt \big) \big] \\
  =&\argmax_{i \in \mathcal{C}} \, - \frac{1}{\twop} \sum_{j=1}^{\twop} H \big( \nu'(\z^{(j)}) \mid \nuziapo, \Dt \big).
\end{align*}
Note that the sum scales exponentially in $\numplayers$, rendering this approach prohibitively expensive in many settings. 
Although this expectation could in principle be approximated by Monte Carlo sampling of target coalitions, we do not pursue this direction here, since it would introduce an additional approximation and sampling-design choice for a criterion that is already not directly aligned with our final quantity of interest, the SVs.
\newpage
\section{Experiments} \label{sec:exp}
In the following, we provide further details on the experiments reported in the main paper (Section \ref{sec:experiments_main}).

\subsection{Experimental Setup}
\subsubsection{Initial Design} \label{app:subsec_initial_design}
As explained in the main paper, the initial designs for \methodname{} consist of $\Tzero= \numplayers+1$ coalitions drawn according to leverage score sampling. This size follows common practice in GP-based BED and BO, and is also a natural lower-end choice in our experiments, since several linear-model-based competitors (Kernel SHAP and Leverage SHAP) require at least $\numplayers+1$ observations. The intuition here is to keep the initial design as small as possible, so that all remaining design points are subject to guided sequential selection. The use of leverage score sampling for the initial design is motivated by its state-of-the-art performance in recent benchmarks, while adding virtually no computational overhead.

However, we did not tune these choices and, in preliminary experiments, did not observe a strong influence of the initial design scheme on the performance of \methodname{}. This suggests that the strong performance of \methodname{} is not overly dependent on this particular initialization strategy.
\subsubsection{Gaussian Process Surrogates} \label{app:subsec_gp_surrogates}
In the following, we provide further details on the GP surrogates used.

For our proposed method \methodname{} and the other GP-based competitor variants, we use a zero-mean, unit-variance GP prior with a Hamming kernel as the covariance function (see Appendix~\ref{app:background_gps}). This is consistent with standardizing the training data at each iteration. The kernel has characteristic lengthscale hyperparameters $\xi \in \Rone{\numplayers}$, which are optimized via maximum a posteriori (MAP) estimation using the L-BFGS-B optimizer \citep{byrd1995limited}. We use the following prior:
\begin{equation}
    \xi \sim \mathrm{LogNormal}\!\left(\mu \;=\; \sqrt{2} + 0.5 \log \numplayers,\; \sigma \;=\; \sqrt{3} \right),
\end{equation}
which corresponds to the default setting in BoTorch~\citep{balandat2020botorch}. We enforce a minimum value of $10^{-6}$ for each lengthscale. Hyperparameters are optimized at each iteration using random initialization and restarts in case of failed optimization runs. For numerical stability, we assume additive zero-mean Gaussian noise with fixed variance $10^{-6}$. This is the smallest value supported by BoTorch and effectively yields quasi-noiseless GPs.
\subsubsection{Games} \label{app:subsec_games}
In the following, we provide further details on the games considered in our experiments.

\paragraph{Feature importance.}
We consider global FI for the TabPFN-2.5 foundation model \citep{hollmann2025tabpfn,grinsztajn2025tabpfn}. As TabPFN relies on in-context learning, the value function for a feature coalition is defined as the performance (MSE or accuracy) on an inference set after removing absent features from both the training and inference data during a forward pass \citep{rundel2024interpretable}. We use three datasets: Diabetes regression \citep{dataset_diabetes_regression}, Diabetes classification \citep{dataset_diabetes_classification}, and Breast Cancer \citep{dataset_breast_cancer}. We obtained the first two datasets from OpenML~\citep{OpenML2025} and the last from scikit-learn~\citep{scikit-learn,sklearn_api}. We rely on precomputed value function evaluations and provide the scripts for reproducing those in the code repository. The seeds influence the random splitting of the data into training and inference sets, which is done with a 70/30 ratio, and are also used for the TabPFN model call.

\paragraph{Data valuation.}
For DV \citep{Jia.2019,ghorbani2019data,tay2022incentivizing}, the achieved test-set performance of the Random Forest (RF; \citealp{breiman2001random}) or Gradient Boosting (GB; \citealp{friedman2001greedy}) algorithm on the Bike Sharing (BS; \citealp{bikesharing}) or California Housing (CH; \citealp{californiahousing}) dataset serves as the payoff to be attributed across subsets of training data as players. The games are precomputed and taken from the \texttt{shapiq} library. Here, the seeds influence the random splitting of the data into training and inference sets and the learning algorithm. Further details can be found in the accompanying paper~\citep{muschalik2024shapiq}.

\paragraph{Hyperparameter importance.}
For HPI, we implement an ablation game following HyperSHAP \citep[][Section 5]{wever2026hypershap}. Here, the value of a coalition of hyperparameters is defined as the performance obtained by setting all hyperparameters in the coalition to the values of a configuration of interest (e.g., an optimal configuration), while fixing all remaining hyperparameters to a reference configuration (e.g., a default configuration). We report experiments for rbv2\_xgboost \citep{binder2020collecting} on the Chess (\citealp{chess_(king-rook_vs._king-pawn)_22}; ID: 3) and Thyroid Disease (\citealp{thyroid_disease_102}; ID: 38) tasks and for LCBench \citep{zimmer2021auto} on the Jasmine task (\citealp{parisbrief}; ID: 41143). All IDs refer to \url{openml.org}~\citep{OpenML2025}. Cheap-to-evaluate surrogate models for the relationship between hyperparameter configurations and performance metrics provided by Yahpo-Gym~\citep{pmlr-v188-pfisterer22a} are used for value function evaluations. The compared configurations are determined by the seed used.

\paragraph{Local explanation.}
Based on \citet{kolpaczki2024approximating} and \texttt{shapiq} \citep{muschalik2024shapiq}, we use an LE game \citep{JMLR:v11:strumbelj10a} to explain predictions for individual images from the ImageNet dataset~\citep{imagenet}. In this cooperative game, the players correspond to image components, defined as superpixels for the ResNet~\citep{he-cvpr16a} model and patches for the vision transformer model~\citep{dosovitskiy-iclr21a}, and the value of a coalition is the predicted score for the target class when only the components of the coalition are retained in a model call, while all other components are replaced by a reference value (i.e., greyed out). Depending on the model size and accessibility, the value function evaluations may range from inexpensive forward passes with full model access to costly black-box queries via an inference API. Here, we again rely on precomputed games taken from the \texttt{shapiq} library.

Furthermore, we benchmark local explanations to attribute RF predictions for single test instances to features as players using the linear TreeSHAP algorithm~\citep{bifet2022linear}. This algorithm allows efficient and exact computation of ground-truth SVs for tree-based models in linear as opposed to exponential time, and thus enables benchmarking in the context of large games where exhaustive enumeration of all coalitions is infeasible. Here, we use the tabular CorrGroups60 \citep{lundberg2017unified}, NHANES \citep{dinh2019data}, and Communities and Crime (Crime; \citealp{redmond2011communities}) datasets with up to $\numplayers= 101$ features, provided by the \texttt{shap} package \citep{lundberg2017unified}. The seeds determine the train-test splits of the data, where all except for a single test instance are used for training, and also influence the training procedure of the RF model. Due to the low computational cost of value function evaluations and the large number of players, we do not rely on precomputed games for this benchmark, but instead evaluate the games online.
\subsubsection{Scalability} \label{app:subsec_scal_refitsched}
In the presented experiments, for games with $\numplayers > 16$, we do not refit the GP hyperparameters in every iteration, but instead follow a fixed refitting schedule. Specifically, we refit the hyperparameters in every iteration for the first 64 iterations, in every 8th iteration for the next 128 iterations, in every 16th iteration for the next 256 iterations, and in every 32nd iteration thereafter. In iterations without hyperparameter refitting, the EIG for all remaining candidates is computed using the GP posterior conditioned on all previously evaluated coalitions, while keeping the hyperparameters fixed at their most recent refitted values. In these non-refitting iterations, several quantities required for EIG computation can be updated efficiently: compared with the previous iteration, only one row and column need to be added to the training kernel matrix $\KxiXtXt$, $\A \KxiZXt$ only requires adding a single column, and $\A \KxiZZ \AT$ can be reused entirely.
\subsubsection{Reproducibility} \label{app:subsec_setup_reproducibility}
The code is publicly available at \url{https://github.com/slds-lmu/shapleig}. Please see the \texttt{README.md} file for instructions on reproducing the experimental results and on generating the precomputed games for TabPFN. All experiments were run on a CPU instance with 32 cores and 64 GB of RAM. The experiments were conducted using Python~3.11.13, Torch~2.9.1, GPyTorch~1.14, BoTorch~0.14.0, and shapiq~1.4.1.
\newpage
\subsection{Experimental Results}
In the following, we provide more detailed results for the experiments presented in the main paper (Section \ref{sec:experiments_main}).
\subsubsection{Ablations}\label{app:ablations}

We present the detailed results from the ablation study in Figure~\ref{fig:ablation_baseline}. 
For the sake of completeness, we also provide the experimental results with all baselines from SV approximation and the ablation in a single plot in Figure~\ref{fig:overall_results_plus_ablation_baseline}.
\newpage
\begin{figure*}
  \centering
    \begin{subfigure}[t]{0.20\textwidth}
    \centering
    \subcaption*{FI; TabPFN; Diab. Reg.}
    \includegraphics[width=\linewidth]{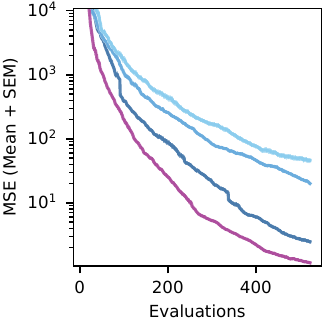}
  \end{subfigure}\hfill
  \begin{subfigure}[t]{0.20\textwidth}
    \centering
    \subcaption*{DV; RF; Bike Sharing}
    \includegraphics[width=\linewidth]{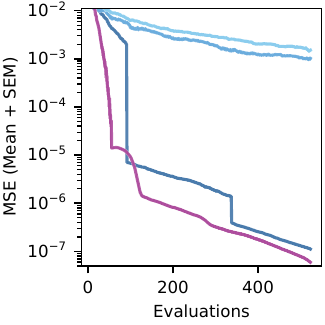}
  \end{subfigure}\hfill
  \begin{subfigure}[t]{0.20\textwidth}
    \centering
    \subcaption*{HPI; XGBoost; Chess}
    \includegraphics[width=\linewidth]{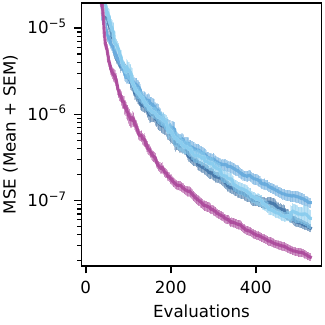}
  \end{subfigure}\hfill
  \begin{subfigure}[t]{0.20\textwidth}
    \centering
    \subcaption*{LE; RF; CorrGroups60}
    \includegraphics[width=\linewidth]{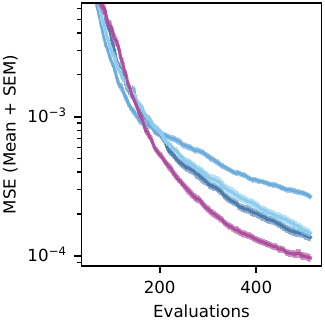}
  \end{subfigure}\hfill
  \begin{subfigure}[t]{0.20\textwidth}
    \centering
    \subcaption*{LE; ResNet; ImageNet}
    \includegraphics[width=\linewidth]{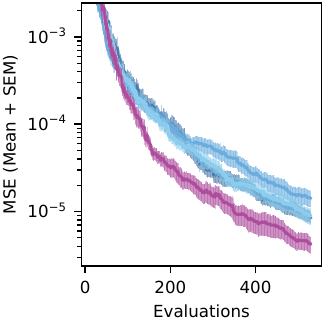}
  \end{subfigure}\hfill
  \vspace{0.5em}
  \begin{subfigure}[t]{0.20\textwidth}
    \centering
    \subcaption*{FI; TabPFN; Diab.}
    \includegraphics[width=\linewidth]{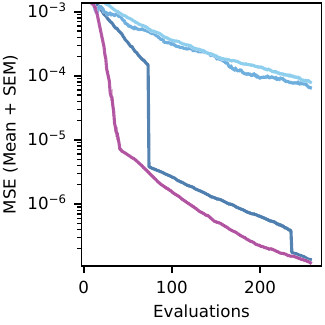}
  \end{subfigure}\hfill
  \begin{subfigure}[t]{0.20\textwidth}
    \centering
    \subcaption*{DV; GB; Bike Sharing}
    \includegraphics[width=\linewidth]{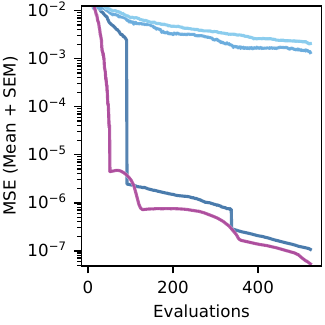}
  \end{subfigure}\hfill
  \begin{subfigure}[t]{0.20\textwidth}
    \centering
    \subcaption*{HPI; XGBoost; Thyroid}
    \includegraphics[width=\linewidth]{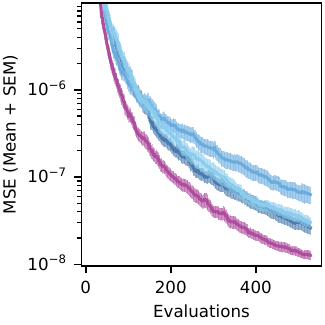}
  \end{subfigure}\hfill
  \begin{subfigure}[t]{0.20\textwidth}
    \centering
    \subcaption*{LE; RF; NHANES}
    \includegraphics[width=\linewidth]{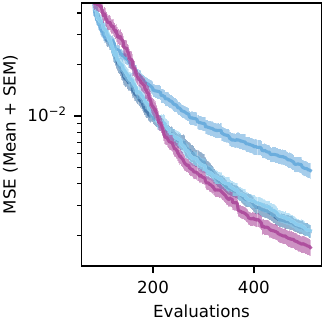}
  \end{subfigure}\hfill
  \begin{subfigure}[t]{0.20\textwidth}
    \centering
    \subcaption*{LE; ViT 9; ImageNet}
    \includegraphics[width=\linewidth]{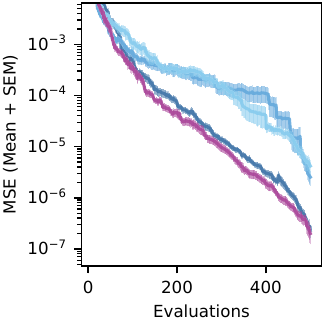}
  \end{subfigure}\hfill
  \vspace{0.5em}
  \begin{subfigure}[t]{0.20\textwidth}
    \centering
    \subcaption*{FI; TabPFN; Breast Cancer}
    \includegraphics[width=\linewidth]{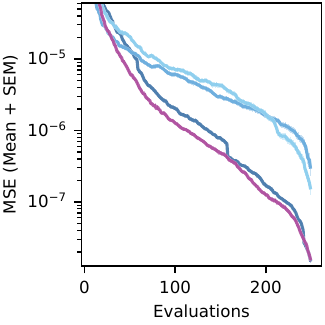}
  \end{subfigure}\hfill
  \begin{subfigure}[t]{0.20\textwidth}
    \centering
    \subcaption*{DV; GB; Cal. Housing}
    \includegraphics[width=\linewidth]{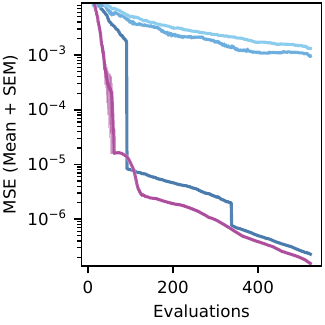}
  \end{subfigure}\hfill
  \begin{subfigure}[t]{0.20\textwidth}
    \centering
    \subcaption*{HPI; LCBench; Jasmine}
    \includegraphics[width=\linewidth]{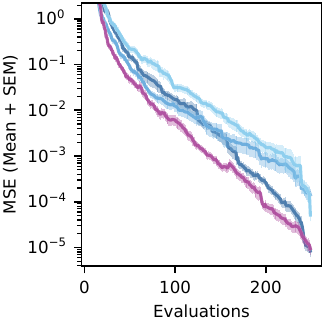}
  \end{subfigure}\hfill
  \begin{subfigure}[t]{0.20\textwidth}
    \centering
    \subcaption*{LE; RF; Crime}
    \includegraphics[width=\linewidth]{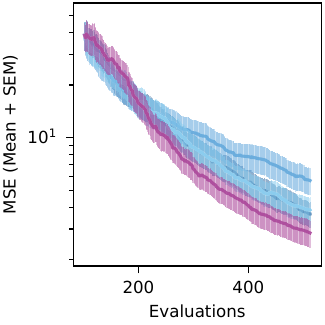}
  \end{subfigure}\hfill
  \begin{subfigure}[t]{0.20\textwidth}
    \centering
    \subcaption*{LE; ViT 16; ImageNet}
    \includegraphics[width=\linewidth]{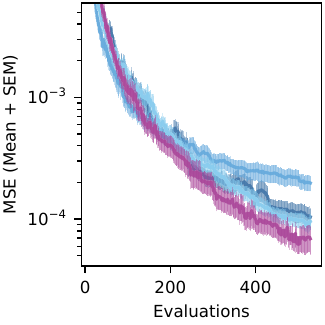}
  \end{subfigure}\hfill
  \captionsetup{justification=centering}
  \caption{Mean squared error (MSE) between estimated and ground-truth Shapley values across all tasks and evaluation budgets, averaged over repetitions for \methodname{} and the ablation baselines, with standard error of the mean (SEM) indicated. \\ \vspace{4pt}
    \methodcolor{AA4499}{\methodname{} (Ours)} \, \,
    \methodcolor{4477AA}{GP + Leverage Score Sampling} \,
    \methodcolor{66AADD}{GP + US} \,
    \methodcolor{88CCEE}{GP + Random}
    }
  \label{fig:ablation_baseline}
\end{figure*}
\begin{figure*}
  \centering
    \begin{subfigure}[t]{0.20\textwidth}
    \centering
    \subcaption*{FI; TabPFN; Diab. Reg.}
    \includegraphics[width=\linewidth]{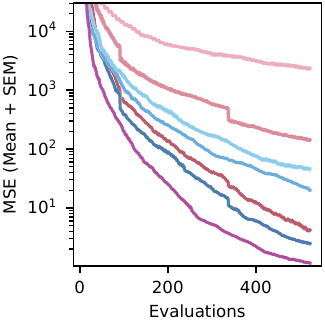}
  \end{subfigure}\hfill
  \begin{subfigure}[t]{0.20\textwidth}
    \centering
    \subcaption*{DV; RF; Bike Sharing}
    \includegraphics[width=\linewidth]{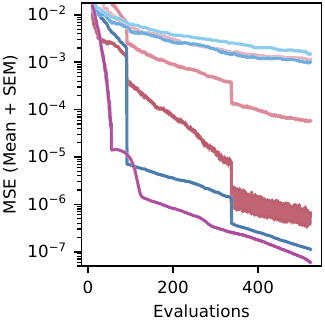}
  \end{subfigure}\hfill
  \begin{subfigure}[t]{0.20\textwidth}
    \centering
    \subcaption*{HPI; XGBoost; Chess}
    \includegraphics[width=\linewidth]{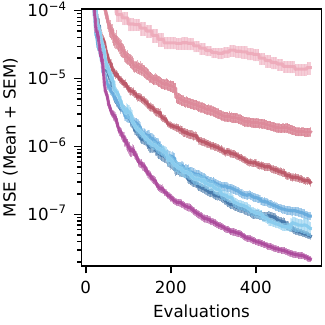}
  \end{subfigure}\hfill
  \begin{subfigure}[t]{0.20\textwidth}
    \centering
    \subcaption*{LE; RF; CorrGroups60}
    \includegraphics[width=\linewidth]{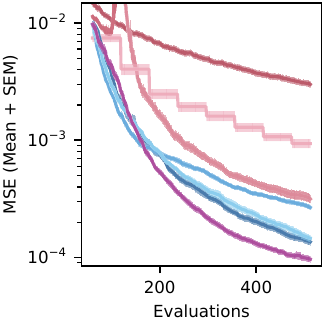}
  \end{subfigure}\hfill
  \begin{subfigure}[t]{0.20\textwidth}
    \centering
    \subcaption*{LE; ResNet; ImageNet}
    \includegraphics[width=\linewidth]{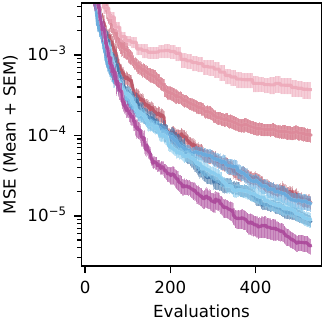}
  \end{subfigure}\hfill
  \vspace{0.5em}
  \begin{subfigure}[t]{0.20\textwidth}
    \centering
    \subcaption*{FI; TabPFN; Diab.}
    \includegraphics[width=\linewidth]{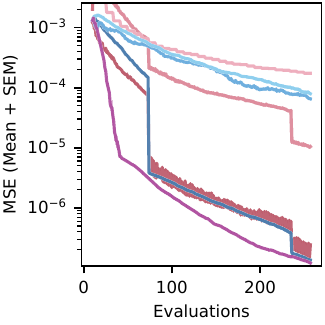}
  \end{subfigure}\hfill
  \begin{subfigure}[t]{0.20\textwidth}
    \centering
    \subcaption*{DV; GB; Bike Sharing}
    \includegraphics[width=\linewidth]{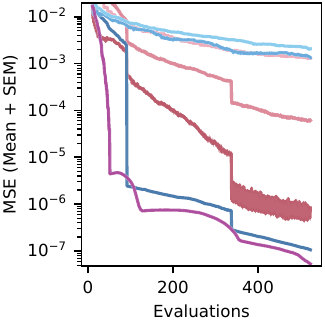}
  \end{subfigure}\hfill
  \begin{subfigure}[t]{0.20\textwidth}
    \centering
    \subcaption*{HPI; XGBoost; Thyroid}
    \includegraphics[width=\linewidth]{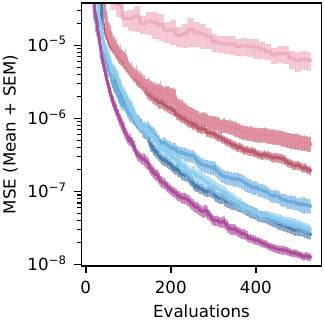}
  \end{subfigure}\hfill
  \begin{subfigure}[t]{0.20\textwidth}
    \centering
    \subcaption*{LE; RF; NHANES}
    \includegraphics[width=\linewidth]{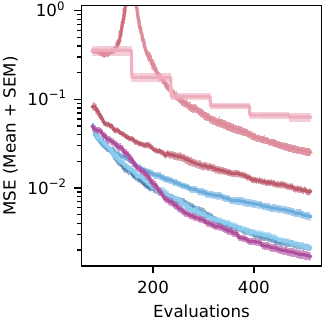}
  \end{subfigure}\hfill
  \begin{subfigure}[t]{0.20\textwidth}
    \centering
    \subcaption*{LE; ViT 9; ImageNet}
    \includegraphics[width=\linewidth]{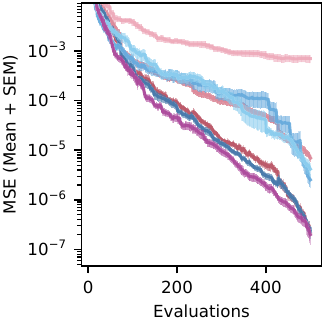}
  \end{subfigure}\hfill
  \vspace{0.5em}
  \begin{subfigure}[t]{0.20\textwidth}
    \centering
    \subcaption*{FI; TabPFN; Breast Cancer}
    \includegraphics[width=\linewidth]{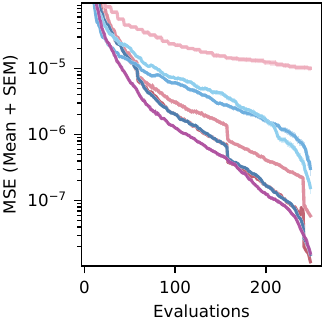}
  \end{subfigure}\hfill
  \begin{subfigure}[t]{0.20\textwidth}
    \centering
    \subcaption*{DV; GB; Cal. Housing}
    \includegraphics[width=\linewidth]{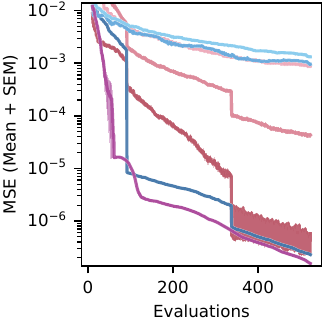}
  \end{subfigure}\hfill
  \begin{subfigure}[t]{0.20\textwidth}
    \centering
    \subcaption*{HPI; LCBench; Jasmine}
    \includegraphics[width=\linewidth]{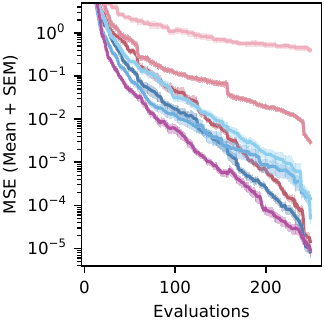}
  \end{subfigure}\hfill
  \begin{subfigure}[t]{0.20\textwidth}
    \centering
    \subcaption*{LE; RF; Crime}
    \includegraphics[width=\linewidth]{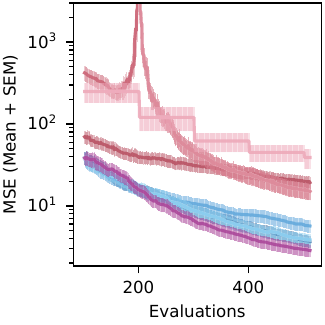}
  \end{subfigure}\hfill
  \begin{subfigure}[t]{0.20\textwidth}
    \centering
    \subcaption*{LE; ViT 16; ImageNet}
    \includegraphics[width=\linewidth]{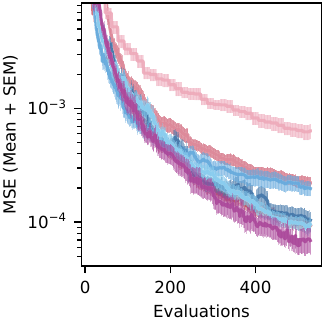}
  \end{subfigure}\hfill
  \captionsetup{justification=centering}
  \caption{Mean squared error (MSE) between estimated and ground-truth Shapley values across all tasks and evaluation budgets, averaged over repetitions for \methodname{} and the SV approximation and ablation baselines, with standard error of the mean (SEM) indicated. \\ \vspace{4pt}
    \methodcolor{AA4499}{\methodname{} (Ours)} \, \,
    \methodcolor{BB5566}{Regression MSR} \,
    \methodcolor{CC6677}{Leverage SHAP} \, \,
    \methodcolor{DD8899}{Kernel SHAP} \, \,
    \methodcolor{EEAABB}{Permutation Sampling} \\
    \methodcolor{4477AA}{GP + Leverage Score Sampling} \,
    \methodcolor{66AADD}{GP + US} \,
    \methodcolor{88CCEE}{GP + Random}
    }
  \label{fig:overall_results_plus_ablation_baseline}
\end{figure*}

\FloatBarrier
\newpage
\subsubsection{Computational Cost} \label{app:res_compcost}
In the following, we present plots showing the computational cost (in seconds) of GP hyperparameter optimization and vectorized EIG evaluation for \methodname{}, across all tasks and evaluation budgets.
\begin{figure*}
  \centering
    \begin{subfigure}[t]{0.20\textwidth}
    \centering
    \subcaption*{FI; TabPFN; Diab. Reg.}
    \includegraphics[width=\linewidth]{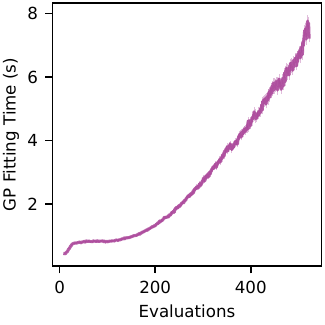}
  \end{subfigure}\hfill
  \begin{subfigure}[t]{0.20\textwidth}
    \centering
    \subcaption*{DV; RF; Bike Sharing}
    \includegraphics[width=\linewidth]{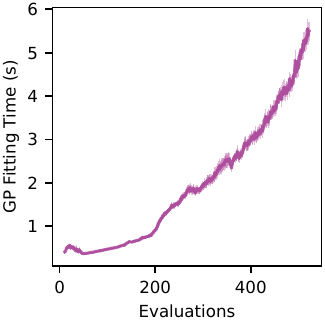}
  \end{subfigure}\hfill
  \begin{subfigure}[t]{0.20\textwidth}
    \centering
    \subcaption*{HPI; XGBoost; Chess}
    \includegraphics[width=\linewidth]{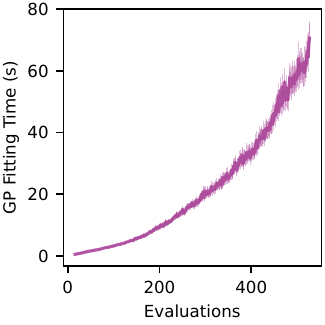}
  \end{subfigure}\hfill
  \begin{subfigure}[t]{0.20\textwidth}
    \centering
    \subcaption*{LE; RF; CorrGroups60}
    \includegraphics[width=\linewidth]{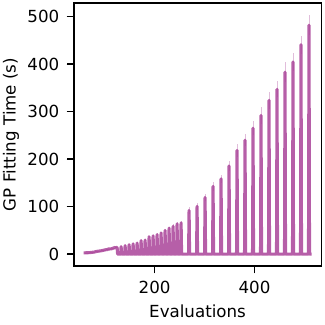}
  \end{subfigure}\hfill
  \begin{subfigure}[t]{0.20\textwidth}
    \centering
    \subcaption*{LE; ResNet; ImageNet}
    \includegraphics[width=\linewidth]{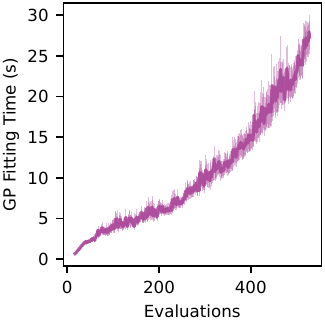}
  \end{subfigure}\hfill
  \vspace{0.5em}
  \begin{subfigure}[t]{0.20\textwidth}
    \centering
    \subcaption*{FI; TabPFN; Diab.}
    \includegraphics[width=\linewidth]{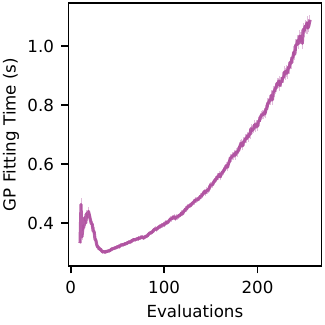}
  \end{subfigure}\hfill
  \begin{subfigure}[t]{0.20\textwidth}
    \centering
    \subcaption*{DV; GB; Bike Sharing}
    \includegraphics[width=\linewidth]{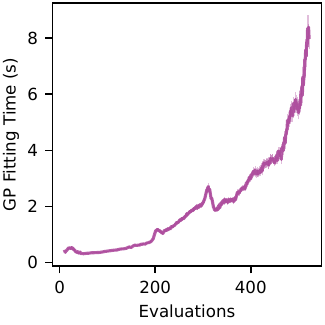}
  \end{subfigure}\hfill
  \begin{subfigure}[t]{0.20\textwidth}
    \centering
    \subcaption*{HPI; XGBoost; Thyroid}
    \includegraphics[width=\linewidth]{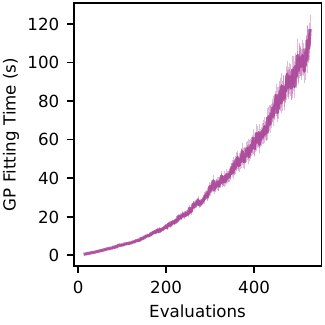}
  \end{subfigure}\hfill
  \begin{subfigure}[t]{0.20\textwidth}
    \centering
    \subcaption*{LE; RF; NHANES}
    \includegraphics[width=\linewidth]{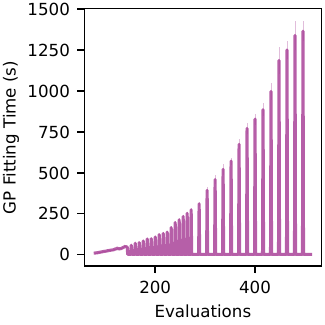}
  \end{subfigure}\hfill
  \begin{subfigure}[t]{0.20\textwidth}
    \centering
    \subcaption*{LE; ViT 9; ImageNet}
    \includegraphics[width=\linewidth]{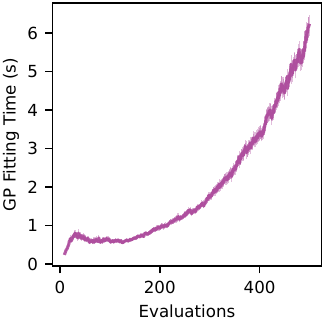}
  \end{subfigure}\hfill
  \vspace{0.5em}
  \begin{subfigure}[t]{0.20\textwidth}
    \centering
    \subcaption*{FI; TabPFN; Breast Cancer}
    \includegraphics[width=\linewidth]{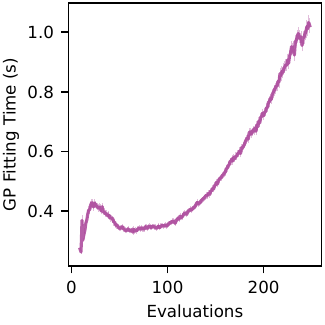}
  \end{subfigure}\hfill
  \begin{subfigure}[t]{0.20\textwidth}
    \centering
    \subcaption*{DV; GB; Cal. Housing}
    \includegraphics[width=\linewidth]{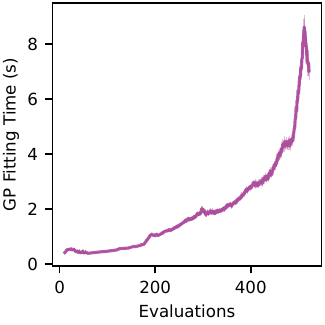}
  \end{subfigure}\hfill
  \begin{subfigure}[t]{0.20\textwidth}
    \centering
    \subcaption*{HPI; LCBench; Jasmine}
    \includegraphics[width=\linewidth]{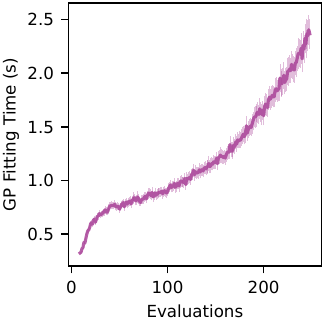}
  \end{subfigure}\hfill
  \begin{subfigure}[t]{0.20\textwidth}
    \centering
    \subcaption*{LE; RF; Crime}
    \includegraphics[width=\linewidth]{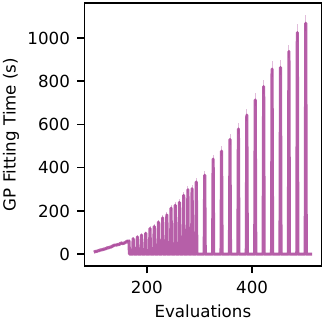}
  \end{subfigure}\hfill
  \begin{subfigure}[t]{0.20\textwidth}
    \centering
    \subcaption*{LE; ViT 16; ImageNet}
    \includegraphics[width=\linewidth]{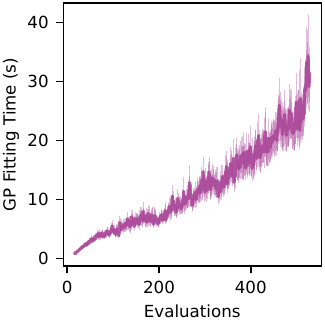}
  \end{subfigure}\hfill
  \captionsetup{justification=centering}
  \caption{Computational cost (in seconds) of GP hyperparameter fitting across all tasks and evaluation budgets for \methodname{}, averaged over repetitions and with standard error of the mean (SEM) indicated.
    }
  \label{fig:app_compcost_hpf}
\end{figure*}
\begin{figure*}
  \centering
    \begin{subfigure}[t]{0.20\textwidth}
    \centering
    \subcaption*{FI; TabPFN; Diab. Reg.}
    \includegraphics[width=\linewidth]{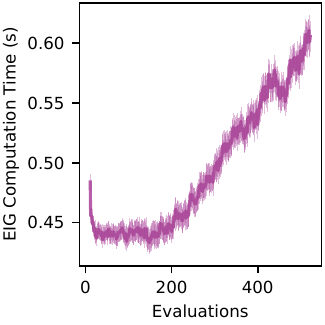}
  \end{subfigure}\hfill
  \begin{subfigure}[t]{0.20\textwidth}
    \centering
    \subcaption*{DV; RF; Bike Sharing}
    \includegraphics[width=\linewidth]{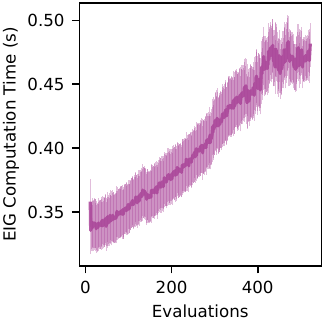}
  \end{subfigure}\hfill
  \begin{subfigure}[t]{0.20\textwidth}
    \centering
    \subcaption*{HPI; XGBoost; Chess}
    \includegraphics[width=\linewidth]{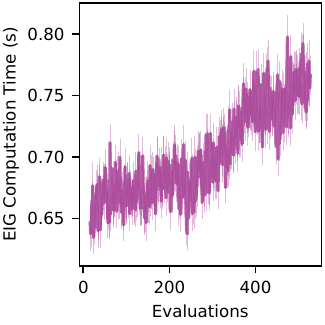}
  \end{subfigure}\hfill
  \begin{subfigure}[t]{0.20\textwidth}
    \centering
    \subcaption*{LE; RF; CorrGroups60}
    \includegraphics[width=\linewidth]{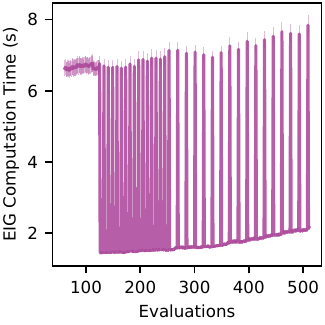}
  \end{subfigure}\hfill
  \begin{subfigure}[t]{0.20\textwidth}
    \centering
    \subcaption*{LE; ResNet; ImageNet}
    \includegraphics[width=\linewidth]{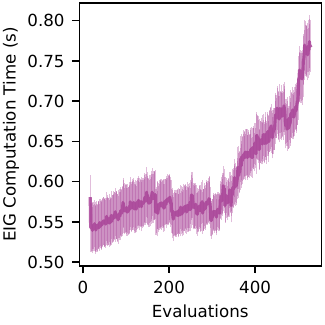}
  \end{subfigure}\hfill
  \vspace{0.5em}
  \begin{subfigure}[t]{0.20\textwidth}
    \centering
    \subcaption*{FI; TabPFN; Diab.}
    \includegraphics[width=\linewidth]{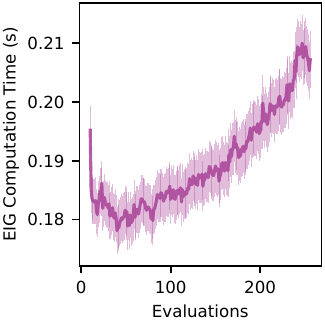}
  \end{subfigure}\hfill
  \begin{subfigure}[t]{0.20\textwidth}
    \centering
    \subcaption*{DV; GB; Bike Sharing}
    \includegraphics[width=\linewidth]{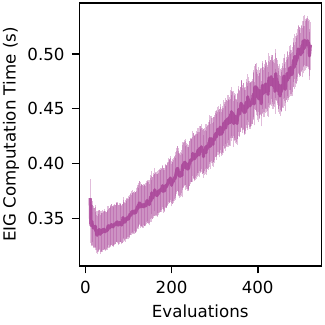}
  \end{subfigure}\hfill
  \begin{subfigure}[t]{0.20\textwidth}
    \centering
    \subcaption*{HPI; XGBoost; Thyroid}
    \includegraphics[width=\linewidth]{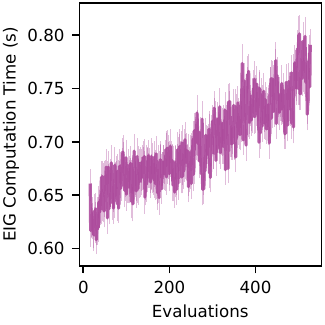}
  \end{subfigure}\hfill
  \begin{subfigure}[t]{0.20\textwidth}
    \centering
    \subcaption*{LE; RF; NHANES}
    \includegraphics[width=\linewidth]{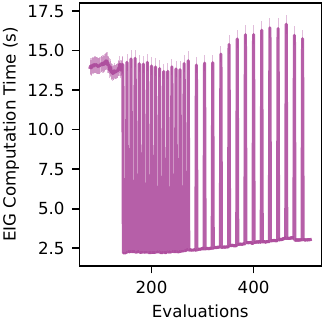}
  \end{subfigure}\hfill
  \begin{subfigure}[t]{0.20\textwidth}
    \centering
    \subcaption*{LE; ViT 9; ImageNet}
    \includegraphics[width=\linewidth]{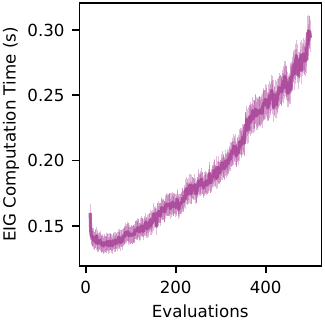}
  \end{subfigure}\hfill
  \vspace{0.5em}
  \begin{subfigure}[t]{0.20\textwidth}
    \centering
    \subcaption*{FI; TabPFN; Breast Cancer}
    \includegraphics[width=\linewidth]{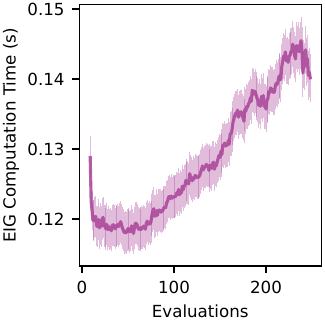}
  \end{subfigure}\hfill
  \begin{subfigure}[t]{0.20\textwidth}
    \centering
    \subcaption*{DV; GB; Cal. Housing}
    \includegraphics[width=\linewidth]{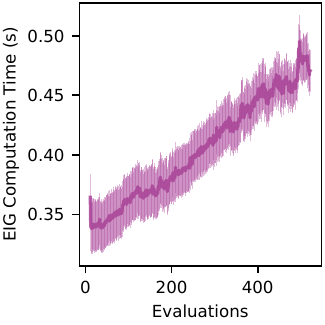}
  \end{subfigure}\hfill
  \begin{subfigure}[t]{0.20\textwidth}
    \centering
    \subcaption*{HPI; LCBench; Jasmine}
    \includegraphics[width=\linewidth]{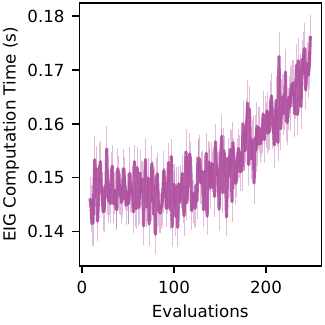}
  \end{subfigure}\hfill
  \begin{subfigure}[t]{0.20\textwidth}
    \centering
    \subcaption*{LE; RF; Crime}
    \includegraphics[width=\linewidth]{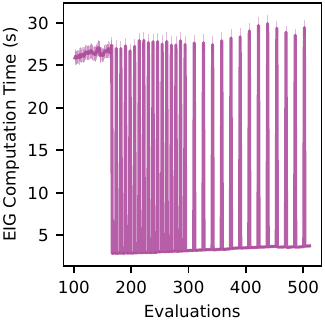}
  \end{subfigure}\hfill
  \begin{subfigure}[t]{0.20\textwidth}
    \centering
    \subcaption*{LE; ViT 16; ImageNet}
    \includegraphics[width=\linewidth]{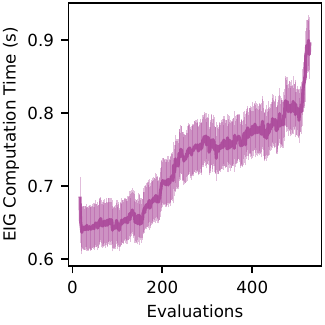}
  \end{subfigure}\hfill
  \captionsetup{justification=centering}
  \caption{Computational cost (in seconds) of vectorized EIG evaluation across all tasks and evaluation budgets for \methodname{}, averaged over repetitions and with the standard error of the mean (SEM) indicated. 
    }
  \label{fig:app_compcost_eig}
\end{figure*}

\end{document}